\ifdefined\colt
\documentclass[12pt,final]{colt2019} 
\title[An Information-Theoretic Approach to Minimax Regret in Partial Monitoring]
{An Information-Theoretic Approach to Minimax Regret in Partial Monitoring}
\else
\documentclass[12pt]{colt2019-clean} 
\title[An Information-Theoretic Approach to Minimax Regret in Partial Monitoring]
{\normalfont\textsc{An Information-Theoretic Approach to Minimax Regret in Partial Monitoring}}
\fi

\usepackage{times}
\usepackage{amsmath}
\usepackage[skins]{tcolorbox}
\usepackage{amssymb}
\usepackage{pgfplots}
\usepackage{pgfplotstable}
\usepackage{dsfont}
\usepackage{tikz}
\usetikzlibrary{arrows.meta,calc,decorations.markings,math,arrows.meta}
\usepackage{natbib}
\usepackage{setspace}
\usepackage{mathrsfs}
\usepackage{booktabs}
\usepackage{bm}
\usepackage{mdframed}
\usepackage{colortbl}
\usepackage{wrapfig}
\usepackage{mathtools}
\usepackage{float}

\newenvironment{simplealg}{\begin{mdframed}\setlength{\parindent}{0cm}\setlength{\parskip}{0.2cm}\tt\vspace{-0.15cm}}{\end{mdframed}}
\newcommand{\algind}{\hspace{1cm}}

\definecolor{dkblue}{cmyk}{1,.54,.04,.19} 
\hypersetup{
      bookmarks=true,         
      unicode=false,          
      pdftoolbar=true,        
      pdfmenubar=true,        
      pdffitwindow=false,     
      pdfstartview={FitH},    
      pdftitle={Partial monitoring},    
      pdfauthor={Lattimore and Szepesvari},     
      pdfsubject={Bandits},   
      pdfcreator={pdflatex},   
      pdfproducer={Producer}, 
      pdfkeywords={bandits} {statistics} {machine learning}, 
      pdfnewwindow=true,      
      colorlinks=true,        
      linkcolor=dkblue,       
      citecolor=dkblue,       
      filecolor=dkblue,       
      urlcolor=dkblue,        
}

\usepackage[capitalise]{cleveref}

\newcommand\numberthis{\addtocounter{equation}{1}\tag{\theequation}}
\newcommand{\argmax}{\operatornamewithlimits{arg\,max}}
\newcommand{\argmin}{\operatornamewithlimits{arg\,min}}

\newcommand{\R}{\mathbb{R}}
\newcommand{\E}{\mathbb{E}}

\newcommand{\interior}{\operatorname{int}}
\newcommand{\KL}{\operatorname{D}}
\newcommand{\bKL}[2]{\KL\left(#1 \bigvert #2\right)}

\newcommand{\bbP}{\mathbb P}
\newcommand{\borel}{\mathfrak{B}}

\newcommand{\cF}{\mathcal F}

\newcommand{\cN}{\mathcal N}
\newcommand{\cD}{\mathcal D}
\newcommand{\cA}{\mathcal A}
\newcommand{\cC}{\mathcal C}
\newcommand{\cT}{\mathcal T}
\newcommand{\cP}{\mathcal P}
\newcommand{\cK}{\mathcal K}
\newcommand{\cL}{\mathcal L}
\newcommand{\cX}{\mathcal X}
\newcommand{\ip}[1]{\langle #1 \rangle}

\newcommand{\sP}{\mathscr P}

\newcommand{\cH}{\mathcal H}

\newcommand{\sQ}{\mathscr Q}

\newcommand{\ones}{\bm{1}}

\newcommand{\norm}[1]{\left\Vert #1\right\Vert}
\newcommand{\dom}{\operatorname{dom}}
\newcommand{\diam}{\operatorname{diam}}
\newcommand{\image}{\operatorname{im}}

\newcommand{\tr}{\operatorname{tr}}

\newcommand{\PiD}{\Pi_{\textrm{D}}}
\newcommand{\PiDM}{\Pi_{\textrm{DM}}}
\newcommand{\PiM}{\Pi_{\textrm{M}}}

\definecolor{c1}{HTML}{316AC6}
\definecolor{c2}{HTML}{A88C1A}
\definecolor{c3}{HTML}{4F8436}
\definecolor{c4}{HTML}{8E2F78}
\definecolor{c5}{HTML}{A02026}
\definecolor{c6}{HTML}{1B9996}
\definecolor{c7}{HTML}{68720A}
\definecolor{c8}{HTML}{9E9E9E}
\definecolor{c9}{HTML}{9B0F0F}

\renewcommand{\mid}{\,|\,}

\newcommand{\bigvert}{\,\middle|\hspace{-1.8pt}\middle|\,}
\newcommand{\littlevert}{\,|\hspace{-1.8pt}|\,}

\newcommand{\BReg}{\mathfrak{BR}}
\newcommand{\Reg}{\mathfrak{R}}
\newcommand{\reg}{\mathfrak{r}}

\newcommand{\sun}{\includegraphics[height=1em]{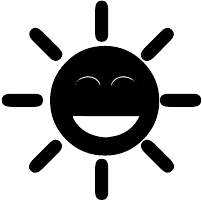}}
\newcommand{\rain}{\includegraphics[height=1em]{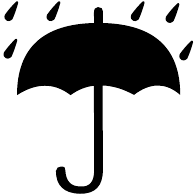}}
\newcommand{\snow}{\includegraphics[height=1em]{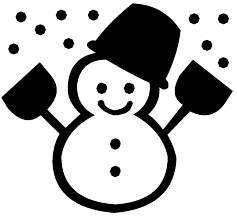}}
\newcommand{\isun}{\includegraphics[height=0.75em]{img/sun}}
\newcommand{\irain}{\includegraphics[height=0.75em]{img/rain}}
\newcommand{\isnow}{\includegraphics[height=0.75em]{img/snow}}

\let\tmp\epsilon
\let\epsilon\varepsilon
\let\varepsilon\tmp

\coltauthor{%
 \Name{Tor Lattimore} \Email{lattimore@google.com}\\
 \addr DeepMind, London
 \AND
 \Name{Csaba Szepesv\'ari} \Email{szepi@google.com}\\
 \addr DeepMind, London
}

\begin{document}

\maketitle

\begin{abstract}%
We prove a new minimax theorem connecting the worst-case Bayesian regret and minimax regret under finite-action partial monitoring with no assumptions on the space of signals or decisions of the adversary.
We then generalise the information-theoretic tools of \cite{RV16} for proving Bayesian regret bounds and combine them with the minimax theorem to derive minimax regret bounds for various
partial monitoring settings. The highlight is a clean analysis of `easy' and `hard' finite partial monitoring, with new regret bounds 
that are independent of arbitrarily large game-dependent constants and eliminate the logarithmic dependence on the horizon for easy games that appeared in earlier work.
The power of the generalised machinery is further demonstrated by proving that the minimax regret for $k$-armed adversarial bandits
is at most $\sqrt{2kn}$, improving on existing results by a factor of $2$. Finally, we provide a simple analysis of the cops and robbers game, also improving best known constants.
\end{abstract}

\begin{keywords}%
Online learning, partial monitoring, minimax theorems, bandits.
\end{keywords}

\section{Introduction}

Partial monitoring is a generalisation of the multi-armed bandit framework with an interestingly richer structure. In this paper we are concerned with the finite-action version.
Let $k$ be the number of actions. 
A finite-action partial monitoring game is described by two functions, the loss function $\cL : [k] \times \cX \to [0,1]$
and a signal function $\Phi : [k] \times \cX \to \Sigma$, where $[k] = \{1,2,\ldots,k\}$ and $\cX$ and $\Sigma$ are topological spaces.
At the start of the game the adversary secretly chooses a sequence 
of outcomes $(x_t)_{t=1}^n$ with $x_t \in \cX$, where $n$ is the horizon. 
The learner knows $\cL$, $\Phi$ and $n$ and sequentially chooses actions $(A_t)_{t=1}^n$ from $[k]$. 
In round $t$, after the learner chooses $A_t$ they suffer a loss of $\cL(A_t, x_t)$ and observe 
only $\Phi(A_t, x_t)$ as a way of indirectly learning about the loss.
A policy $\pi$ is a function mapping action/signal sequences to probability distributions over actions (the learner is allowed to randomise) and the regret of policy $\pi$ in environment $x = (x_t)_{t=1}^n$ is
\begin{align*}
\Reg_n(\pi, x) = \max_{a \in [k]}\E\left[\sum_{t=1}^n \cL(A_t, x_t) - \cL(a, x_t)\right]\,,
\end{align*}
where the expectation is taken with respect to the randomness in the learner's choices which follow $\pi$.
The minimax regret of a partial monitoring game is
\begin{align*}
\Reg_n^* = \inf_{\pi \in \Pi} \sup_{x \in \cX^n} \Reg_n(\pi, x)\,,
\end{align*}
where $\Pi$ is the space of all policies. 
Our objective is to understand how the minimax regret depends on the horizon $n$ and the structure of $\cL$ and $\Phi$.
Note, this is the oblivious setting because the adversary chooses all the losses at the start of the game.
Some classical examples of partial monitoring games are given in \cref{table:envs} and \cref{fig:example} in the appendix.

\begin{table}[h!]
\renewcommand{\arraystretch}{1.5}
\begin{tabular}{@{}p{4cm}p{1.2cm}p{1.2cm}p{4.9cm}p{2cm}}
\toprule 
\textbf{Setting} & $\bm \cX$ & $\bm \Sigma$ & $\bm{\Phi(a, x)}$ & $\bm{\cL(a, x)}$ \\
\midrule
Full information & $[0,1]^k$ & $[0,1]^k$ & $x$ & $x_a$ \\ 
Bandit & $[0,1]^k$ & $[0,1]$ & $x_a$ & $x_a$ \\
Cops and robbers & $[0,1]^k$ & $[0,1]^{k-1}$ & $x_1,\ldots,x_{a-1},x_{a+1},\ldots,x_k$ & $x_a$ \\
Finite partial monitoring & $[d]$ & arbitrary & arbitrary & arbitrary \\
\bottomrule
\end{tabular}
\caption{Example environment classes. 
In the last row, $d$ is a natural number.}\label{table:envs}
\end{table}

\paragraph{Bayesian viewpoint} 
Although our primary objective is to shed light on the minimax adversarial regret, we establish our results by first proving
uniform bounds on the Bayesian regret that hold for any prior. Then a new minimax theorem demonstrates the existence of an
algorithm with the same minimax regret. While these methods are not constructive, we demonstrate that they lead to elegant analysis of various partial monitoring problems, and better control 
of the constants in the bounds.

Let $\sQ$ be a space of probability measures on $\cX^n$ with the Borel $\sigma$-algebra.
The Bayesian regret of a policy $\pi$ with respect to prior $\nu \in \sQ$ is
\begin{align*}
\BReg_n(\pi, \nu) = \int_{\cX^n} \Reg_n(\pi, x) d\nu(x)\,.
\end{align*}
The minimax Bayesian optimal regret is
\begin{align*}
\BReg_n^*(\sQ) = \sup_{\nu \in \sQ} \inf_{\pi \in \PiM} \BReg_n(\pi, \nu)\,,
\end{align*}
where $\PiM$ is a space of policies so that $x \mapsto \Reg_n(\pi, x)$ is measurable, which we define formally in \cref{sec:notation}. 
When $\sQ$ is clear from the context, we write $\BReg_n^*$ in place of $\BReg_n^*(\sQ)$.

\paragraph{Contributions}

Our first contribution is to generalise the machinery developed by \cite{RV16,RV17} and \cite{BDKP15}.
In particular, we prove a minimax theorem for finite-action partial monitoring games with no restriction on either the loss or the feedback function. 
The theorem establishes that the Bayesian optimal regret and minimax regret are equal: $\BReg_n^* = \Reg_n^*$.
Next, the information-theoretic machinery of \cite{RV17} is generalised by replacing the mutual information with an expected Bregman divergence.
The power of the generalisation is demonstrated by showing that $\Reg^*_n \leq \sqrt{2kn}$ for $k$-armed adversarial bandits, which
improves on the best known bounds by a factor of $2$.
The rest of the paper is focussed on applying these ideas to finite partial monitoring games.
The results enormously simplify existing analysis by sidestepping the complex localisation arguments.
At the same time, our bounds for the class of `easy non-degenerate' games do not depend on arbitrarily large game-dependent constants, which
was true of all prior analysis. Finally, for a special class of bandits with graph feedback called cops and robbers, we show that $\Reg^*_n \leq \sqrt{2n \log(k)}$, 
improving on prior work by a factor of $5/\sqrt{2}$.

\section{Related work}

Since partial monitoring is so generic, the related literature is vast, with most work focussing on the full information 
setting (see \cite{Ces06}) or the bandit setting (\cite{BC12,LS19bandit-book}).
The information-theoretic machinery that we build on was introduced by \cite{RV16,RV17} in the context of 
minimizing the Bayesian regret for stationary stochastic bandits (with varying structural assumptions).
\cite{BDKP15} noticed the results also applied to the `adversarial' Bayesian setting and applied minimax theory to prove worst-case bounds
for convex bandits. Minimax theory has also been used to transfer Bayesian regret bounds to adversarial bounds.
For example, \cite{AAB09} explores this in the context of online convex optimisation in the full-information setting and \cite{GPS16} 
for prediction with expert advice.
The finite version of partial monitoring was introduced by \cite{Rus99}, who developed Hannan consistent algorithms.
The main challenge since then has been characterizing the dependence of the regret on the horizon in terms of the structure of the loss and signal functions.
It is now known that all games can be classified into one of exactly four types. Trivial and hopeless, for which $\Reg^*_n = 0$ and $\Reg^*_n = \Omega(n)$ respectively.
Between these extremes there are `easy' games where $\Reg^*_n = \Theta(n^{1/2})$ and `hard' games for which $\Reg^*_n = \Theta(n^{2/3})$.
The classification result is proven by piecing together upper and lower bounds from various papers \citep{CBLuSt06,FR12,ABPS13,BFPRS14,LS18pm}.
A caveat of the classification theorem is that the focus is entirely on the dependence of the minimax regret on the horizon.
The leading constant is game-dependent and poorly understood. Existing bounds for easy games depend on a constant that can be arbitrarily large, even for fixed $d$ and $k$.
One of the contributions of this paper is to resolve this issue.
Another disadvantage of the current partial monitoring literature, especially in the adversarial setting, is that the algorithms and analysis tend to be rather complicated.
Although our results only prove the existence of an algorithm witnessing a claimed minimax bound, the Bayesian algorithm and analysis are intuitive and natural.
There is also a literature on stochastic partial monitoring, with early analysis by \cite{BPS11}. 
A quite practical algorithm was proposed by \cite{VBK14}. The asymptotics have also been worked out \citep{KHN15}. 
Although a frequentist regret bound in a stochastic setting normally
implies a Bayesian regret bound, in our Bayesian setup the environments are not stationary, while all the algorithms for the stochastic case rely heavily that the distribution of the adversary is stationary. 
Generalising these algorithms to the non-stationary case does not seem straightforward.
Finally, we should mention there is an alternative definition of the regret that is less harsh on the learner. 
For trivial, easy and hard games it is the same, but for hopeless games the regret captures the hopelessness of the task and measures the performance of the learner relative
to an achievable objective. We do not consider this definition here. Readers interested in this variation can consult the papers by \cite{Rus99,MS03,Per11} and \cite{MPS14}.


\section{Notation and conventions}\label{sec:notation}

The maximum/supremum of the empty set is negative infinity.
The standard basis vectors in $\R^d$ are $e_1,\ldots,e_d$. The column vector of all ones is $\ones = (1,1,\ldots,1)^\top$. 
The standard inner product is $\ip{\cdot, \cdot}$. The $i$th coordinate of vector $x \in \R^d$ is $x_i$.
The $(d-1)$-dimensional probability simplex is $\Delta^{d-1} = \{x \in [0,1]^d : \norm{x}_1 = 1\}$.
The interior of a topological space $Z$ is $\interior(Z)$ and its boundary is $\partial Z$.
The relative entropy between probability measures $\mu$ and $\nu$ over the same measurable space is 
$\bKL{\nu}{\mu} = \int \log(\frac{d\nu}{d\mu}) d\nu$ if $\nu \ll \mu$ and $\bKL{\nu}{\mu} = \infty$ otherwise, where $\log$ is the natural logarithm.
When $X$ is a random variable with $X \in [a,b]$ almost surely, then Pinsker's inequality combined with straightforward inequalities shows that
\begin{align}
\int X (d\mu - d\nu) \leq (b - a)\norm{\mu - \nu}_{\text{TV}} \leq (b - a) \sqrt{\frac{1}{2}\bKL{\mu}{\nu}}\,, 
\label{eq:pinsker}
\end{align}
where $\norm{\mu - \nu}_{\text{TV}}$ is the total variation distance.  When $\nu \ll \mu$, the squared Hellinger distance can be written as $h(\nu, \mu)^2 = \int (1 - \sqrt{d\nu/d\mu})^2 d\mu$.
Given a measure $\bbP$ and jointly distributed random elements $X$ and $Y$ we let $\bbP_X$ denote the law of $X$ and (unconventionally) we let $\bbP_{X | Y}$ be the conditional law of $X$ given $Y$, which satisfies
$\bbP_{X | Y}(\cdot) = \bbP(X \in \cdot \mid Y)$.
One can think of $\bbP_{X|Y}$ as a random probability measure over the range of $X$ that depends on $Y$. In none of our analysis do we rely on exotic spaces where such regular versions do not exist.
When $Y \in [k]$ is discrete we let $\bbP_{X|Y=i}$ denote $\bbP(X\in \cdot \mid Y=i)$ for $i\in [k]$. 
With this notation the mutual information between $X$ and $Y$ is $I(X ; Y) = \E[\KL(\bbP_{X | Y}\littlevert \bbP_X)]$.
The domain of a convex function $F : \R^d \to \R \cup \{\infty\}$ is $\dom(F) = \{x : F(x) < \infty\}$.  The Bregman divergence with respect to convex/differentiable $F$
is $D_F : \dom(F) \times \dom(F) \to [0,\infty]$. For $x, y \in \dom(F)$ this is defined by $D_F(x, y) = F(x) - F(y) - \nabla_{x-y} F(y)$, where $\nabla_v F(y)$ is the directional derivative
of $F$ in direction $v$ at $y$.
The relative entropy between categorical distributions $p, q \in \Delta^{k-1}$ is the Bregman divergence between $p$ and $q$ where
$F$ is the unnormalised negentropy: $F(p) = \sum_{i=1}^k ( p_i \log(p_i) - p_i)$ with domain $[0,\infty)^k$.
The diameter of a convex set $\cK$ with respect to $F$ is $\diam_F(\cK) = \sup_{x, y \in \cK} F(x) - F(y)$.

\paragraph{Probability spaces, policies and environments} 
The Borel $\sigma$-algebra on topological space $Z$ is $\borel(Z)$. 
Recall that $\cX$ and $\Sigma$ are assumed to carry a topology, which we will use for ensuring measurability of the regret. 
More about the choices of these topologies later.
We assume the signal function $\Phi(a, \cdot)$ is $\borel(\cX)/\borel(\Sigma)$-measurable and
the loss function $\cL(a, \cdot)$ is $\borel(\cX)$-measurable.
A policy is a function $\pi : \cup_{t=1}^n ([k] \times \Sigma)^{t-1} \to \Delta^{k-1}$ and the space of all policies is $\Pi$.
A policy is measurable if $h_{t-1} \mapsto \pi(h_{t-1})$ is $\borel(([k] \times \Sigma)^t)$-measurable for all $h_{t-1} = a_1,\sigma_1,\ldots,a_{t-1},\sigma_{t-1}$, which coincides with 
the usual definition of a probability kernel. The space of all measurable policies is $\PiM$. In general $\PiM$ is a strict subset of $\Pi$.
For most of the paper we work in the Bayesian framework where there is a prior probability measure $\nu$ on $(\cX^n, \borel(\cX^n))$.
Given a prior $\nu$ and a measurable policy $\pi \in \PiM$, random elements $X \in \cX^n$ and $A \in [k]^n$ are defined on common probability space $(\Omega, \cF, \bbP)$. 
We let $\Phi_t(a) = \Phi(a, X_t)$ and $\cL_t(a) = \cL(a, X_t)$.
Expectations $\E$ are with respect to $\bbP$. 
For $t\in [n+1]$ we let $\cF_t = \sigma(A_1,\Phi(A_1,X_1),\ldots,A_{t-1},\Phi(A_{t-1}, X_{t-1})) \subseteq \cF$, 
$\E_t[\cdot] = \E[\cdot \mid \cF_t]$ and $\bbP_t(\cdot) = \bbP(\cdot \mid \cF_t)$. 
Note that $\cF_1 = \{\emptyset,\Omega\}$ is the trivial $\sigma$ algebra.
The $\sigma$-algebra $\cF$ and the measure $\bbP$ are such that 
\begin{enumerate}
\item The law of the adversaries choices satisfies $\bbP(X \in \cdot\,) = \nu(\cdot)$.
\item For any $t\in [n]$, the law of the actions almost surely satisfies 
\begin{align}
\bbP_t(A_t \in \cdot\,) 
&= \bbP_t(A_t \in \cdot \mid X) = \pi(A_1,\Phi_1(A_1),\ldots,A_{t-1},\Phi_{t-1}(A_{t-1}))(\cdot)\,.
\label{eq:keycond}
\end{align}
\end{enumerate}
The existence of a probability space satisfying these properties is guaranteed by Ionescu-Tulcea \citep[Theorem 6.17]{Kal06}.
The last condition captures the important assumption that, conditioned on the observed history, $A_t$ is sampled independently from $X$.
In particular, it implies that $X_t$ and $A_t$ are independent under $\bbP_t$.
The optimal action is $A^* = \argmin_{a \in [k]} \sum_{t=1}^n \cL_t(a)$.
It is not hard to see that the Bayesian regret is well defined and satisfies 
\begin{align*}
\BReg_n(\pi,\nu) = \E\left[\sum_{t=1}^n \cL_t(A_t) - \cL_t(A^*)\right] = \E\left[\sum_{t=1}^n \Delta_t\right]\,,
\end{align*}
where $\Delta_t = \cL_t(A_t) - \cL_t(A^*)$.
To minimise clutter, when the policy $\pi$ and prior $\nu$ are clear from the context, we abbreviate $\BReg_n(\pi,\nu)$ to $\BReg_n$.
We let $P_{ta} = \bbP_t(A_t = a)$, which means that $P_t \in \Delta^{k-1}$ is a probability vector. 

\section{Minimax theorem}

Our first main result is a theorem that connects the minimax regret to the worst-case Bayesian regret over all finitely supported priors.
The regret $\Reg_n(\pi, x)$ is well defined for any $x$ and any policy $\pi \in \Pi$, but the Bayesian regret depends on measurability of $x \mapsto \Reg_n(\pi, x)$.
If $\nu$ is supported on a finite set $x_1,\dots,x_m\in \cX^n$, however, we can write
\begin{align*}
\BReg_n(\pi,\nu) = \sum_{i=1}^m \nu( \{ x_i \} ) \Reg_n(\pi,x_i)\,,
\end{align*}
which does not rely on measurability.
By considering finitely supported priors we free ourselves from any concern that $x \mapsto \Reg_n(\pi, x)$ might not be measurable.
This also means that if $\Sigma$ (or $\cX$) came with some topologies, we simply replace them with the discrete topology (which makes all maps continuous and measurable,
implying $\Pi = \PiM$).

\begin{theorem}\label{thm:swap}
Let $\sQ$ be the space of all finitely supported probability measures on $\cX^n$. Then
\begin{align*}
 \inf_{\pi \in \Pi} \sup_{x \in \cX^n} \Reg_n(\pi, x)
 = 
 \sup_{\nu\in \sQ } \min_{\pi \in \Pi} \BReg_n(\pi, \nu)\,.
\end{align*}
\end{theorem}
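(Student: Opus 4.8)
The plan is to establish the two inequalities separately. \emph{Weak duality} is immediate: since $\sQ$ contains every Dirac measure and every finitely supported $\nu$ averages $\Reg_n(\pi,\cdot)$ over its support, we have $\sup_{\nu\in\sQ}\BReg_n(\pi,\nu)=\sup_{x\in\cX^n}\Reg_n(\pi,x)$ for each fixed $\pi$, whence $\inf_{\pi}\sup_{\nu}\BReg_n=\inf_\pi\sup_x\Reg_n$. Combined with the generic inequality $\sup_\nu\inf_\pi\le\inf_\pi\sup_\nu$ this already gives $\sup_{\nu\in\sQ}\inf_{\pi\in\Pi}\BReg_n(\pi,\nu)\le\inf_{\pi\in\Pi}\sup_{x\in\cX^n}\Reg_n(\pi,x)$. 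The real work is in the reverse inequality, which is a genuine interchange of $\inf_\pi$ and $\sup_\nu$ in $\BReg_n(\pi,\nu)$.

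The obstacle to invoking a minimax theorem directly is that $\BReg_n(\pi,\nu)$ is neither convex nor concave in a behavioural policy $\pi$, because the probability of an action sequence is multilinear, not affine, in the per-round conditional distributions. I would remove this by reparametrising the learner as a \emph{mixed} strategy: let $\sM$ denote the set of probability measures over the deterministic policies $\PiD$. By Kuhn's theorem (the learner has perfect recall, observing the entire action/signal history) every behavioural policy and every mixed strategy induce the same law over action sequences against every outcome sequence $x$, so the attainable regret functionals coincide and $\Pi$ may be replaced by $\sM$ throughout. The pay-off now behaves well: for $\mu\in\sM$ the expected cumulative loss $\int_{\PiD}\sum_{t}\cL(A_t^\pi,x_t)\,d\mu(\pi)$ is affine in $\mu$, so $\Reg_n(\mu,x)$ is a maximum over $a\in[k]$ of affine functions and hence convex in $\mu$, while $\BReg_n(\mu,\nu)=\int\Reg_n(\mu,x)\,d\nu(x)$ is convex in $\mu$ and affine (hence concave) in $\nu$.

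It then remains to supply compactness and continuity and apply Sion's minimax theorem. I would give $\PiD\subseteq [k]^{\text{histories}}$ the product topology, under which it is compact Hausdorff by Tychonoff's theorem. For a fixed $x$ the loss $\pi\mapsto\sum_t\cL(A_t^\pi,x_t)$ is determined by the finitely many coordinates of $\pi$ along the unique trajectory it generates against $x$, so it is locally constant and therefore lies in $C(\PiD)$; consequently $\mu\mapsto\BReg_n(\mu,\nu)$ is weak-$*$ continuous for every finitely supported $\nu$. Equipping $\sM$ with the weak-$*$ topology makes it a convex set that is compact by the Banach--Alaoglu theorem, so Sion's theorem, applied with the compact convex min-player set $\sM$ and the convex max-player set $\sQ$, yields $\inf_{\mu\in\sM}\sup_{\nu\in\sQ}\BReg_n(\mu,\nu)=\sup_{\nu\in\sQ}\inf_{\mu\in\sM}\BReg_n(\mu,\nu)$. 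Translating both sides back with the Dirac identity $\sup_{\nu\in\sQ}\BReg_n(\mu,\nu)=\sup_x\Reg_n(\mu,x)$ and with Kuhn's theorem recovers the claim, and compactness of $\sM$ together with continuity of $\BReg_n(\cdot,\nu)$ ensures the inner infimum is attained, justifying the $\min$. I expect the main difficulty to lie entirely in this setup --- verifying the weak-$*$ continuity of the loss functional and invoking Kuhn's theorem cleanly in the possibly non-metrisable signal setting --- rather than in the inequality manipulations.
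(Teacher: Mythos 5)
This is correct and essentially the paper's own argument: mixtures over deterministic policies with the product topology, Tychonoff plus weak*-compactness of the measure space, continuity of $\pi \mapsto \Reg_n(\pi,x)$ because the regret depends on only finitely many coordinates of $\pi$, Sion's theorem, and a Kuhn-type translation back to behavioural policies (which the paper constructs explicitly rather than citing). The only divergences are minor: the paper handles the identification $\min_{\mu}\BReg_n(\mu,\nu)=\min_{\pi\in\Pi}\BReg_n(\pi,\nu)$ via Dirac extreme points and a dynamic-programming argument rather than your behavioural-to-mixed direction of Kuhn's theorem (which also works, via product measures), and it records the $\nu$-side continuity (with $\sQ$ in the weak* topology) that Sion's theorem formally requires and your sketch leaves implicit.
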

An equivalent statement of this theorem is that if $\cX$ and $\Sigma$ carry the discrete topology then $\Reg_n^* = \BReg^*_n(\sQ)$, which is the form we prove in \cref{sec:sion}. 
The strength of this result is that it depends on no assumptions except that the action set is finite.

Our proof borrows techniques from a related result by \cite{BDKP15}.
The main idea is to replace the policy space $\Pi$ with a simpler space of `mixtures' over deterministic policies, which 
is related to Kuhn's celebrated result on the equivalence of behavioral and mixed strategies \citep{kuhn1953extensive}. 
We then establish that this space is compact and use Sion's theorem to exchange the minimum and maximum.
While we borrowed the ideas from \cite{BDKP15}, our proof relies heavily on the finiteness of the action space, which allowed us to avoid
any assumptions on $\Sigma$ and $\cX$, which also necessitated our choice of $\sQ$. Neither of the two results imply each other.

\cref{thm:swap} is a minimax theorem for a special kind of two-player multistage zero-sum deterministic partial information game.
Minimax theorems for this case are nontrivial because of challenges related to measurability and the use of Sion's theorem. 
Although there is a rich and sophisticated literature on this topic, we are not aware of any result implying our theorem. 
Tools include the approach we took using the weak topology \citep{Be92}, or the so-called weak-strong topology \citep{LedoVF00} and
reduction to completely observable games and then using dynamic programming \citep{GhMcDSi04}.
An interesting challenge is to extend our result to compact action spaces. One may hope to generalise the proof by \cite{BDKP15}, 
but some important details are missing (for example, the measurable space on which the priors live is undefined, the measurability of the 
regret is unclear as is the compactness of distributions induced by \emph{measurable} policies). 
We believe that the approach of \citet{GhMcDSi04} can complete this result.

\section{The regret information tradeoff}

Unless otherwise mentioned, all expectations $\E$ are with respect to the probability measure over interactions between a fixed policy $\pi \in \PiM$ and an environment sampled
from a prior $\nu$ on $(\cX^n, \borel(\cX^n))$. 
Before our generalisation we present a restatement of the
core theorem in the analysis by \cite{RV16}. Let $I_t(X ; Y)$ be the mutual information between $X$ and $Y$ under $\bbP_t$. 
Although the proof is identical, the setup here is different because the prior $\nu$ is arbitrary.

\begin{theorem}[\cite{RV16}]\label{thm:hammer} 
Suppose there exists a constant $\beta \geq 0$ such that $\E_t[\Delta_t] \leq \sqrt{\beta I_t(A^* ; \Phi_t(A_t), A_t)}$
almost surely for all $t$. Then $\BReg_n \leq \sqrt{n \beta \log(k)}$.
\end{theorem}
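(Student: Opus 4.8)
The plan is to integrate the per-round hypothesis, sum over the horizon, and reduce everything to a single entropy budget. First I would apply the tower rule to write $\BReg_n = \sum_{t=1}^n \E[\E_t[\Delta_t]]$ and insert the assumption to get $\BReg_n \leq \sum_{t=1}^n \E\bigl[\sqrt{\beta\, I_t(A^*; \Phi_t(A_t), A_t)}\bigr]$. Two elementary convexity moves then collapse the sum: Jensen's inequality (concavity of the square root) pushes the expectation inside the root, yielding $\sum_{t=1}^n \sqrt{\beta\, \E[I_t(A^*; \Phi_t(A_t), A_t)]}$, and Cauchy--Schwarz bounds this by $\sqrt{n\beta \sum_{t=1}^n \E[I_t(A^*; \Phi_t(A_t), A_t)]}$. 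It therefore suffices to establish the information budget $\sum_{t=1}^n \E[I_t(A^*; \Phi_t(A_t), A_t)] \leq \log(k)$.

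For the budget I would exploit the filtration structure. Writing $Y_t = (A_t, \Phi_t(A_t))$ for the round-$t$ observation, the definition of the filtration gives $\cF_{t+1} = \sigma(\cF_t, Y_t)$, so that $\cF_{n+1} = \sigma(Y_1,\dots,Y_n)$ and each term $\E[I_t(A^*; Y_t)]$ is exactly the conditional mutual information $I(A^*; Y_t \mid \cF_t)$ with $\cF_t = \sigma(Y_1,\dots,Y_{t-1})$. The chain rule for mutual information then telescopes the sum into the total information revealed about the optimal action, $\sum_{t=1}^n I(A^*; Y_t \mid \cF_t) = I(A^*; Y_1,\dots,Y_n)$. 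Since mutual information never exceeds the Shannon entropy $H$ of either argument and $A^*$ takes values in $[k]$, this total is at most $H(A^*) \leq \log(k)$, which is the required budget and finishes the proof.

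I expect the telescoping step to be the main obstacle, since it hinges on correctly identifying $\E[I_t(A^*;Y_t)]$ with the conditional mutual information given the accumulated history, and on the fact that $\cF_t$ records exactly the past observations $Y_1,\dots,Y_{t-1}$ (with $\cF_1$ trivial). An equivalent and perhaps cleaner route writes each summand as an entropy drop, $\E[I_t(A^*;Y_t)] = H(A^* \mid \cF_t) - H(A^* \mid \cF_{t+1})$, whose sum telescopes to $H(A^* \mid \cF_1) - H(A^* \mid \cF_{n+1}) = H(A^*) - H(A^* \mid \cF_{n+1}) \leq \log(k)$, using nonnegativity of conditional entropy. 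The only technical care needed is that the conditional mutual informations and regular conditional distributions are well defined, which the paper has already arranged by restricting to non-pathological spaces.
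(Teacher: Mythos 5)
Your proof is correct, and it is essentially the classical Russo--Van Roy argument that the paper itself adopts by reference (it remarks that ``the proof is identical'' to \cite{RV16}, only the setup with an arbitrary prior being new). The only divergence is in packaging: the paper never telescopes mutual information directly, but instead obtains \cref{thm:hammer} as the negentropy special case of its more general \cref{thm:general}, where the per-round gain is the expected Bregman divergence $\E_t[D_F(P_{t+1}^*,P_t^*)]$ of the posterior martingale and the budget $\log(k)$ enters as $\diam_F(\Delta^{k-1})$ rather than as the entropy $H(A^*)$. Concretely, your chain-rule/entropy-drop telescoping is exactly the paper's inequality $\E_t[D_F(M_{t+1},M_t)] \leq \E_t[F(M_{t+1})] - F(M_t)$ instantiated at the unnormalised negentropy (your sum of conditional mutual informations equals the telescoped drop in $F$ along the martingale $P_t^*$), and your Jensen-plus-Cauchy--Schwarz step is the paper's single application of Cauchy--Schwarz under the outer expectation. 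What your route buys is a self-contained, elementary proof relying only on the chain rule for mutual information and $I(A^*;\cdot)\le H(A^*)\le\log(k)$, with no need for the Fatou/directional-derivative argument in \cref{thm:general}; what the paper's route buys is generality --- the same skeleton with other potentials $F$ yields, for instance, the $\sqrt{2kn}$ bound for $k$-armed bandits via \cref{cor:general}, which the entropy-based argument cannot produce.
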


This elegant result provides a bound on the regret in terms of the information gain about the optimal arm.
Our generalisation replaces the information gain with an expected Bregman divergence.

\begin{theorem}\label{thm:general}
Let $(M_t)_{t=1}^{n+1}$ be an $\R^d$-valued martingale adapted to $(\cF_t)_{t=1}^{n+1}$ and $M_t \in \cD \subset \R^d$ almost surely for all $t$.
Then let $F$ be a convex function with $\diam_F(\cD) < \infty$.
Suppose there exist constants $\alpha, \beta \geq 0$ such that
$\E_t[\Delta_t] \leq \alpha + \sqrt{\beta \E_t[D_F(M_{t+1}, M_t)]}$
almost surely for all $t$. Then $\BReg_n \leq \alpha n + \sqrt{n \beta \diam_F(\cD)}$.
\end{theorem}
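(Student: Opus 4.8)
The plan is to bound $\BReg_n = \E[\sum_{t=1}^n \Delta_t]$ by combining the hypothesis with routine concavity and Cauchy--Schwarz arguments, the crucial and only nontrivial step being that the martingale structure forces the per-round Bregman divergences to telescope against the diameter. First I would take conditional expectations: by the tower rule $\E[\Delta_t] = \E[\E_t[\Delta_t]]$, and the hypothesis $\E_t[\Delta_t] \leq \alpha + \sqrt{\beta \E_t[D_F(M_{t+1}, M_t)]}$ gives $\E[\Delta_t] \leq \alpha + \E\big[\sqrt{\beta \E_t[D_F(M_{t+1}, M_t)]}\big]$. Since $\sqrt{\cdot}$ is concave, Jensen's inequality pulls the outer expectation inside the root to yield $\E[\Delta_t] \leq \alpha + \sqrt{\beta \E[D_F(M_{t+1}, M_t)]}$. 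Summing over $t \in [n]$ and applying Cauchy--Schwarz to the sum of square roots then gives $\BReg_n \leq \alpha n + \sqrt{n \beta \sum_{t=1}^n \E[D_F(M_{t+1}, M_t)]}$, so the whole problem reduces to the bound $\sum_{t=1}^n \E[D_F(M_{t+1}, M_t)] \leq \diam_F(\cD)$.

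The key observation is that the linear term in the Bregman divergence vanishes in conditional expectation. Because $F$ is differentiable, the directional derivative is linear in its direction argument, so $\nabla_{M_{t+1}-M_t} F(M_t) = \ip{\nabla F(M_t), M_{t+1} - M_t}$, and since $M_t$ is $\cF_t$-measurable the gradient $\nabla F(M_t)$ can be pulled out of $\E_t[\cdot]$. The martingale property $\E_t[M_{t+1}] = M_t$ then annihilates this term, leaving $\E_t[D_F(M_{t+1}, M_t)] = \E_t[F(M_{t+1})] - F(M_t)$. Taking a further expectation and summing produces a telescoping sum, $\sum_{t=1}^n \E[D_F(M_{t+1}, M_t)] = \E[F(M_{n+1})] - \E[F(M_1)]$, and since $M_1, M_{n+1} \in \cD$ almost surely this is at most $\sup_{x,y \in \cD} (F(x) - F(y)) = \diam_F(\cD)$. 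Substituting back into the reduction from the first paragraph gives exactly $\BReg_n \leq \alpha n + \sqrt{n \beta \diam_F(\cD)}$.

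The points to watch are integrability and differentiability. The assumption $\diam_F(\cD) < \infty$ means $F$ has bounded oscillation on $\cD$, so each $F(M_t)$ is bounded and hence integrable, which is what justifies separating the telescoping sum into $\E[F(M_{n+1})] - \E[F(M_1)]$ and also guarantees $\E_t[D_F(M_{t+1},M_t)]$ is finite. I would also lean on the standing convention that $F$ is differentiable on $\dom(F) \supseteq \cD$: it is precisely this that makes $\nabla_v F(M_t)$ linear in $v$, and for a merely convex $F$ the directional derivative is only sublinear and the martingale cancellation would fail. I expect the conceptual heart of the argument to be this vanishing-linear-term identity turning the summed divergences into a telescope, with every other step being standard bookkeeping.
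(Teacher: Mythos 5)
Your proof has the same skeleton as the paper's (reduce via Jensen/Cauchy--Schwarz to showing $\sum_{t=1}^n \E[D_F(M_{t+1},M_t)] \leq \diam_F(\cD)$, then telescope using the martingale property), and under your added hypothesis it is correct. But that added hypothesis is a genuine gap: the theorem is stated for an arbitrary convex $F$, there is no standing differentiability convention you can lean on, and the paper's own applications of this theorem live exactly in the non-differentiable regime. The potentials used later are the unnormalised negentropy and $F(p) = -2\sum_a \sqrt{p_a}$ on $[0,\infty)^k$, and the martingale is the posterior $P_t^*$, which can sit on the boundary of the simplex (posterior probabilities of being optimal do collapse to zero). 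At such points $\nabla F(M_t)$ does not exist — one-sided directional derivatives can be $-\infty$ and $D_F$ can be $+\infty$ — so your identity $\E_t[D_F(M_{t+1},M_t)] = \E_t[F(M_{t+1})]-F(M_t)$ is not even well-formed there, and your proof does not cover the cases the theorem is actually used for.

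Moreover, your parenthetical diagnosis — that for merely convex $F$ ``the directional derivative is only sublinear and the martingale cancellation would fail'' — is exactly backwards, and seeing why is the missing idea. You never needed exact cancellation, only an inequality: $D_F$ contains the term $-\nabla_{M_{t+1}-M_t}F(M_t)$, so it suffices that $\E_t[\nabla_{M_{t+1}-M_t}F(M_t)] \geq 0$. Sublinearity (positive homogeneity plus subadditivity) means $v \mapsto \nabla_v F(y)$ is \emph{convex} in $v$, so conditional Jensen gives $\E_t[\nabla_{M_{t+1}-M_t}F(M_t)] \geq \nabla_{\E_t[M_{t+1}]-M_t}F(M_t) = \nabla_0 F(M_t) = 0$, i.e.\ the favourable direction. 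This is what the paper implements rigorously: it writes the directional derivative as a limit of difference quotients, applies Fatou's lemma (the integrand is nonnegative by monotonicity of the quotients), then conditional Jensen on $F$ itself, and the martingale property kills the limit, yielding the one-sided bound $\E_t[D_F(M_{t+1},M_t)] \leq \E_t[F(M_{t+1})] - F(M_t)$ for every convex $F$. This route also quietly handles the boundary pathology: if $(P_t^*)_a = 0$ then $(P_{t+1}^*)_a = 0$ almost surely, so the directions in which $D_F$ blows up occur with probability zero. To repair your proof, replace the exact-cancellation step with this Fatou-plus-Jensen argument; everything else you wrote goes through unchanged.
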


\begin{proof}
We calculate 
\begin{align}
\E_t[D_F(M_{t+1}, M_t)]
&= \E_t\left[\liminf_{h \to 0+} \left(F(M_{t+1}) - F(M_t) - \frac{F(M_t + h(M_{t+1} - M_t)) - F(M_t)}{h} \right)\right] \nonumber \\
&\leq \liminf_{h \to 0+} \left(\E_t\left[F(M_{t+1}) - F(M_t) - \frac{F((1 - h)M_t + hM_{t+1}) - F(M_t)}{h}\right]\right) \nonumber \\
&= \E_t\left[F(M_{t+1})\right] - F(M_t) + \liminf_{h \to 0+} \frac{F(M_t) - \E_t[F((1 - h) M_t + h M_{t+1})]}{h} \nonumber \\
&\leq \E_t\left[F(M_{t+1})\right] - F(M_t) + \liminf_{h \to 0+} \frac{F(M_t) - F(\E_t[(1 - h) M_t + h M_{t+1}])}{h} \nonumber \\
&= \E_t\left[F(M_{t+1})\right] - F(M_t)\,, \label{eq:expect-breg}
\end{align}
where the first inequality follows from Fatou's lemma and the second from convexity of $F$. The last equality is because $\E_t[M_{t+1}] = M_t$.
Hence
\begin{align*}
\BReg_n
&= \E\left[\sum_{t=1}^n \Delta_t\right] 
\leq \alpha n + \E\left[\sum_{t=1}^n \sqrt{\beta \E_t[D_F(M_{t+1}, M_t)]}\right] \\
&\leq \alpha n + \sqrt{\beta n \E\left[\sum_{t=1}^n \E_t[D_F(M_{t+1}, M_t)]\right]} 
\leq \alpha n + \sqrt{\beta n  \diam_F(\cD)}\,,
\end{align*}
where the first inequality follows from the assumption in the theorem, the second by Cauchy-Schwarz,
while the third follows by \cref{eq:expect-breg}, telescoping and the definition of the diameter.
\end{proof}

A natural choice for $M_t$ is the posterior distribution of the optimal action.
Let $P_{ta}^* = \bbP_t(A^* = a)$, which is the posterior probability that $A^* = a$ based on the information available at the start of round $t$.
By the tower rule, we have $\E_t[P_{t+1}^*] = P_t^*$ so that $(P_t^*)_{t=1}^{n+1}$ is a martingale adapted to $(\cF_t)_{t=1}^{n+1}$.

\begin{corollary}\label{cor:general}
Let $F : \R^k \to \R$ be a convex function with $\diam_F(\Delta^{k-1}) < \infty$.
Suppose there exist constants $\alpha, \beta \geq 0$ such that $\E_t[\Delta_t] \leq \alpha + \sqrt{\beta \E_t[D_F(P^*_{t+1}, P^*_t)]}$
almost surely for all $t$. Then $\BReg_n \leq \alpha n + \sqrt{n \beta \diam_F(\Delta^{k-1})}$.
\end{corollary}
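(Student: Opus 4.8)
The plan is to invoke \cref{thm:general} directly with the specific martingale $M_t = P_t^*$, dimension $d = k$, and domain set $\cD = \Delta^{k-1}$. The hypothesis of the corollary on $\E_t[\Delta_t]$ is then literally the hypothesis of the theorem for this choice, so the conclusion would follow at once provided I verify that $(P_t^*)_{t=1}^{n+1}$ meets the structural requirements of \cref{thm:general}: that it is an $\R^k$-valued martingale adapted to $(\cF_t)_{t=1}^{n+1}$ taking values in $\cD$ almost surely, and that $F$ has finite diameter on $\cD$.

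First I would check that $P_t^*$ lives in the simplex. By definition $P_{ta}^* = \bbP_t(A^* = a)$, so $P_t^*$ is the conditional law of the random variable $A^* \in [k]$ given $\cF_t$; its coordinates are therefore nonnegative and sum to one, placing $P_t^* \in \Delta^{k-1}$ surely. Next I would confirm the martingale property, which is precisely the observation recorded just before the corollary: writing $P_{ta}^* = \E_t[\one{A^* = a}]$ and applying the tower rule gives $\E_t[P_{t+1}^*] = P_t^*$, so $(P_t^*)_{t=1}^{n+1}$ is an $(\cF_t)$-adapted martingale. Since the corollary assumes $F : \R^k \to \R$ is convex with $\diam_F(\Delta^{k-1}) < \infty$, the finiteness condition $\diam_F(\cD) < \infty$ from the theorem holds with $\cD = \Delta^{k-1}$.

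With these facts in hand, the assumed bound $\E_t[\Delta_t] \leq \alpha + \sqrt{\beta \E_t[D_F(P^*_{t+1}, P^*_t)]}$ is exactly the regret--information tradeoff hypothesis of \cref{thm:general} applied to $M_t = P_t^*$, and the theorem then yields $\BReg_n \leq \alpha n + \sqrt{n \beta \diam_F(\Delta^{k-1})}$, which is the claim. There is no real obstacle here beyond bookkeeping; the only point meriting a word of care is that $F$ is finite on all of $\R^k$, so that $\dom(F) = \R^k \supseteq \Delta^{k-1}$ and the Bregman divergence $D_F(P^*_{t+1}, P^*_t)$ is well defined along the simplex. This is guaranteed by the hypothesis $F : \R^k \to \R$, and it is what lets me specialise the abstract theorem, whose $F$ may carry a restricted domain, to the present setting.
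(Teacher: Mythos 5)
Your proposal is correct and is exactly the paper's own (implicit) argument: the paper introduces $P_t^*$ just before the corollary, notes via the tower rule that $(P_t^*)_{t=1}^{n+1}$ is an $(\cF_t)$-adapted martingale in $\Delta^{k-1}$, and obtains the corollary by specialising \cref{thm:general} to $M_t = P_t^*$, $\cD = \Delta^{k-1}$. Your additional checks (values in the simplex, $\dom(F) = \R^k$ so the Bregman divergence is well defined) are the right bookkeeping and raise no issues.
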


\begin{remark}
That \cref{thm:general} generalises \cref{thm:hammer} follows by choosing $F$ as the unnormalised negentropy for which 
$\diam_F(\Delta^{k-1}) \leq \log(k)$ and 
$\E_t[D_F(P_{t+1}^*,P_t^*)] = I_t(A^* ; \Phi_t(A_t),A_t)$.
The assumption that $M_t \in \R^d$ can be relaxed. The result continues to hold when $M_t$ takes values in a bounded subset of a Banach space, where the martingale is defined using the 
Bochner integral. The Bregman divergence generalises naturally via the Gateoux derivative.
\end{remark}

\section{Finite-armed bandits}\label{sec:bandits}

In the bandit setting the learner observes the loss of the action they play, which is modelled by choosing $\Sigma = [0,1]$, $\cX = [0,1]^k$ and $\Phi(a, x) = \cL(a, x) = x_a$.
The best known bound is by \cite{BC12}, who prove that online mirror descent with an appropriate potential satisfies $\Reg_n^* \leq \sqrt{8kn}$.
Using the same potential in combination with \cref{thm:general} allows us to improve this result to $\Reg_n^* \leq \sqrt{2kn}$.

\begin{theorem}\label{thm:finite}
The minimax regret for $k$-armed adversarial bandits satisfies $\Reg^*_n \leq \sqrt{2kn}$.
\end{theorem}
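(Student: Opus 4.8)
The plan is to combine the minimax theorem (\cref{thm:swap}) with \cref{cor:general} applied to Thompson sampling and the square-root potential. By \cref{thm:swap} it suffices to exhibit, for every finitely supported prior $\nu$, a policy with $\BReg_n \le \sqrt{2kn}$; restricting to finitely supported priors is convenient because it forces $X$, and hence each $\cL_t(a)$ and $\Phi_t(a)$, to take finitely many values, so all posterior computations below are elementary Bayesian updates on a finite space. For the policy I would take Thompson (posterior) sampling, i.e.\ $P_t = P^*_t$, so that $A_t$ is drawn from the posterior law of $A^*$. For the potential I would use the one behind the \cite{BC12} bound, $F(p) = -2\sum_{i=1}^k \sqrt{p_i}$, which is convex on $[0,\infty)^k$ with $\diam_F(\Delta^{k-1}) = 2\sqrt k - 2 \le 2\sqrt k$ (the maximum $-2$ is attained at the vertices, the minimum $-2\sqrt k$ at the uniform distribution). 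With these choices \cref{cor:general} reduces the whole theorem to the per-round inequality $\E_t[\Delta_t] \le \sqrt{\sqrt k\,\E_t[D_F(P^*_{t+1}, P^*_t)]}$, i.e.\ $\alpha = 0$ and $\beta = \sqrt k$, since this yields $\BReg_n \le \sqrt{n\sqrt k\cdot 2\sqrt k} = \sqrt{2kn}$.

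The heart of the argument, and the step I expect to be the main obstacle, is this per-round information-ratio bound. Write $q_a = P^*_{ta} = \bbP_t(A^* = a)$, let $g_{a',a}$ denote the conditional law of the observed loss $\Phi_t(a) = \cL_t(a)$ given $A^* = a'$ under $\bbP_t$, and let $g_a = \sum_{a'} q_{a'} g_{a',a}$ be its marginal. A direct computation gives $D_F(p,q) = \sum_i (\sqrt{p_i} - \sqrt{q_i})^2/\sqrt{q_i}$. Because $A_t$ is sampled independently of $X$ given $\cF_t$, Bayes' rule gives $\sqrt{P^*_{t+1,a'}} = \sqrt{q_{a'}}\,\sqrt{g_{a',a}(Y)/g_a(Y)}$ on the event $\{A_t = a\}$ with $Y = \cL_t(a)$. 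Substituting and taking the expectation over $Y \sim g_a$ and then over $A_t \sim q$ collapses to the clean identity
\[
\E_t[D_F(P^*_{t+1}, P^*_t)] = \sum_a q_a \sum_{a'} \sqrt{q_{a'}}\, h(g_{a',a}, g_a)^2\,,
\]
where $h$ is the Hellinger distance of \cref{sec:notation}.

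On the regret side, writing $\bar\cL_{ta} = \E_t[\cL_t(a)]$ and $m_a = \E_t[\cL_t(a)\mid A^* = a]$, Thompson sampling gives $\E_t[\Delta_t] = \sum_a q_a(\bar\cL_{ta} - m_a)$. Since the losses lie in $[0,1]$, the bound $\int X(d\mu-d\nu)\le(b-a)\norm{\mu-\nu}_{\text{TV}}$ of \cref{eq:pinsker}, together with $\norm{g_a - g_{a,a}}_{\text{TV}} \le h(g_{a,a}, g_a)$, yields $|\bar\cL_{ta}-m_a| \le h(g_{a,a},g_a)$ and hence $\E_t[\Delta_t] \le \sum_a q_a\, h(g_{a,a}, g_a)$. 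I would then apply Cauchy--Schwarz in the split form $q_a = q_a^{1/4}\cdot q_a^{3/4}$, using $\sum_a \sqrt{q_a} \le \sqrt k$ and keeping only the diagonal terms $a'=a$ of the nonnegative sum above:
\[
\E_t[\Delta_t] \le \Big(\sum_a \sqrt{q_a}\Big)^{1/2}\Big(\sum_a q_a^{3/2} h(g_{a,a},g_a)^2\Big)^{1/2} \le k^{1/4}\sqrt{\E_t[D_F(P^*_{t+1},P^*_t)]}\,,
\]
which is exactly the target with $\beta = \sqrt k$.

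The delicate point is that the $\sqrt k$ produced here by $\sum_a\sqrt{q_a}\le\sqrt k$ must align with the $\sqrt k$ in $\diam_F(\Delta^{k-1})$ so that the two factors combine to $2k$ under the final square root; this matching is precisely what makes the square-root potential the right choice and what sharpens the constant relative to the negentropy used in \cref{thm:hammer}. A minor technical caveat is the boundary behaviour of $F$ at coordinates with $q_a = 0$, which is harmless because such actions carry zero posterior mass in every round and can be removed by restricting to the support of $P^*_t$; the directional-derivative formulation of \cref{thm:general} is set up to tolerate potentials of this kind.
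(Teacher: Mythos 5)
Your proposal is correct and follows essentially the same route as the paper's proof: Thompson sampling with the potential $F(p) = -2\sum_a \sqrt{p_a}$, the total-variation--Hellinger bound on $\E_t[\Delta_t]$, the split Cauchy--Schwarz step using $\sum_a \sqrt{q_a} \le \sqrt{k}$, and a Bayes-rule computation relating the Hellinger terms to $\E_t[D_F(P^*_{t+1}, P^*_t)]$ with the nonnegative off-diagonal terms discarded. The only cosmetic difference is that you derive the identity for the expected Bregman divergence up front (effectively inlining Lemma~\ref{lem:fundamental}) and then keep only the diagonal terms, whereas the paper's proof of Lemma~\ref{lem:finite-lem} bounds the regret first and introduces the off-diagonal sum at the end.
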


\begin{proof}
Let $F(p) = -2 \sum_{a=1}^k \sqrt{p_a}$, which has domain $[0,\infty)^k$ and $\diam_F(\Delta^{k-1}) \leq 2\sqrt{k}$.
Combine Corollary~\ref{cor:general} and \cref{thm:swap}
and Lemma~\ref{lem:finite-lem} below for Thompson sampling, which is the policy that samples $A_t$ from $P_t = P_t^*$.
\end{proof}

\begin{lemma}\label{lem:finite-lem}
Let $F$ be as above and $P_t = P_t^*$. Then $\E_t[\Delta_t] \leq \sqrt{k^{1/2} \E_t[D_F(P_{t+1}^*, P_t^*)]}$ a.s..
\end{lemma}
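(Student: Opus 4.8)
The plan is to instantiate the information-ratio bound of \cref{cor:general}: control the one-round expected regret $\E_t[\Delta_t]$ by the expected one-step Bregman divergence of the posterior $(P^*_t)$, with the diameter constant $k^{1/2}$ coming from the potential $F(p) = -2\sum_a \sqrt{p_a}$. First I would record the standard Thompson-sampling regret decomposition. Since $P_t = P^*_t$ and, under $\bbP_t$, the action $A_t$ is drawn from $P_t$ independently of $X$, writing $\bar\mu_a = \E_t[\cL_t(a)]$ for the marginal mean loss of arm $a$ and $\mu_a = \E_t[\cL_t(a)\mid A^* = a]$ for its mean loss conditioned on being optimal, I get
\[
\E_t[\Delta_t] = \E_t[\cL_t(A_t)] - \E_t[\cL_t(A^*)] = \sum_a P^*_{ta}\,(\bar\mu_a - \mu_a)\,,
\]
with every summand nonnegative because arm $a$ is cheaper conditioned on being optimal.

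Next I would rewrite the target's right-hand side as a Hellinger quantity. Differentiating $F$ gives $D_F(p,q) = \sum_a (\sqrt{p_a}-\sqrt{q_a})^2/\sqrt{q_a}$. Let $Y_t = (A_t,\Phi_t(A_t))$ be the round-$t$ observation, let $Q = \bbP_t(Y_t\in\cdot)$ be its posterior-predictive law and $Q_a = \bbP_t(Y_t\in\cdot\mid A^*=a)$ its law conditioned on $a$ being optimal. Since $\cF_{t+1} = \sigma(\cF_t, Y_t)$, Bayes' rule gives $P^*_{t+1,a} = P^*_{ta}\,\tfrac{dQ_a}{dQ}(Y_t)$, whence
\[
\E_t\!\left[\left(\sqrt{P^*_{t+1,a}} - \sqrt{P^*_{ta}}\right)^2\right] = P^*_{ta}\int\left(\sqrt{\tfrac{dQ_a}{dQ}} - 1\right)^2 dQ = P^*_{ta}\,h(Q_a,Q)^2\,,
\]
so that $\E_t[D_F(P^*_{t+1},P^*_t)] = \sum_a \sqrt{P^*_{ta}}\,h(Q_a,Q)^2$. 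Because $A_t$ has the same law under $Q$ and under $Q_a$, this Hellinger affinity factorises over the played arm, $h(Q_a,Q)^2 = \sum_j P^*_{tj}\,H_{ja}^2$, where $H_{ja}$ is the Hellinger distance between the conditional (given $A^*=a$) and marginal laws of the loss $\cL_t(j)$; dropping the nonnegative off-diagonal terms yields $\E_t[D_F(P^*_{t+1},P^*_t)] \ge \sum_a (P^*_{ta})^{3/2}\,H_{aa}^2$.

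The crux is then a \emph{tight} per-arm comparison of the mean gap with $H_{aa}$. Using $\cL_t(a)\in[0,1]$, I would bound $\bar\mu_a - \mu_a$ by the total variation distance between the marginal and conditional loss laws, and then invoke the sharp inequality $\norm{P-Q}_{\mathrm{TV}}\le h(P,Q)$ (valid for the paper's convention $h^2 = \int(\sqrt{dP}-\sqrt{dQ})^2$, since $\mathrm{TV}\le H\sqrt{2-H^2}\le \sqrt2\,H = h$ with $H^2=h^2/2$) to obtain $\bar\mu_a - \mu_a \le H_{aa}$ with constant exactly one. Getting this constant right — rather than the factor-$2$ loss of a naive Cauchy–Schwarz on $\sqrt{dP}\pm\sqrt{dQ}$ — is the main obstacle, and it is precisely what makes the resulting minimax bound match $\sqrt{2kn}$. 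Finally I would combine the pieces by Cauchy–Schwarz with the split $P^*_{ta} = (P^*_{ta})^{1/4}(P^*_{ta})^{3/4}$,
\[
\E_t[\Delta_t] \le \sum_a P^*_{ta} H_{aa} \le \left(\sum_a \sqrt{P^*_{ta}}\right)^{1/2}\left(\sum_a (P^*_{ta})^{3/2} H_{aa}^2\right)^{1/2} \le k^{1/4}\sqrt{\E_t[D_F(P^*_{t+1},P^*_t)]}\,,
\]
where the last step uses $\sum_a\sqrt{P^*_{ta}}\le\sqrt{k}$ together with the lower bound on $\E_t[D_F]$ above; since $k^{1/4} = \sqrt{k^{1/2}}$, squaring gives the claim.
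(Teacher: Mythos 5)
Your proof is correct and is essentially the paper's own argument: the same Thompson-sampling decomposition $\E_t[\Delta_t]=\sum_a P^*_{ta}(\E_t[\cL_t(a)]-\E_t[\cL_t(a)\mid A^*=a])$, the same sharp bound (total variation at most Hellinger, avoiding the factor-$2$ loss), the same Bayes-rule conversion between Hellinger distances of observation laws and the posterior update of $A^*$, and the same Cauchy--Schwarz split $(P^*_{ta})^{1/4}(P^*_{ta})^{3/4}$ with $\sum_a\sqrt{P^*_{ta}}\le\sqrt{k}$. The only difference is organisational: you compute $\E_t[D_F(P^*_{t+1},P^*_t)]$ exactly via Bayes' rule on the joint observation $(A_t,\Phi_t(A_t))$ and then drop the off-diagonal terms, whereas the paper bounds the regret first and completes the retained diagonal terms to the full Bregman divergence at the end (via its Lemma~\ref{lem:fundamental}); the terms kept and discarded are identical in both arguments.
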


\begin{remark}
Potentials other than the negentropy have been used in many applications in bandits and online convex optimisation.
The log barrier, for example, leads to first order bounds for $k$-armed bandits \citep{CL18}. 
Alternative potentials also appear in the context of adversarial linear bandits \citep{BCK12,BCL17} and
follow the perturbed leader \citep{ALST14}.
Investigating the extent to which these applications transfer to the Bayesian setting is an interesting direction for the future.
\end{remark}

\section{Finite partial monitoring games}

Recall from \cref{table:envs} that a finite partial monitoring game is characterised by functions $\cL : [k] \times [d] \to [0,1]$ and $\Phi : [k] \times [d] \to \Sigma$ where
$d$ is a natural number and $\Sigma$ is arbitrary. Finite partial monitoring enjoys a rich linear structure, which we now summarise.
A picture can help absorbing these concepts, and is provided with an example at the beginning of \cref{app:figures}.
For $a \in [k]$, let $\ell_a \in [0,1]^d$ be the vector with $\ell_{ax} = \cL(a,x)$.
Actions $a$ and $b$ are duplicates if $\ell_a = \ell_b$.
The cell associated with action $a$ is
$C_a = \{u \in \Delta^{d-1} : \ip{\ell_a, u} \leq \min_{b \neq a} \ip{\ell_b, u}\}$,
which is the subset of distributions $u \in \Delta^{d-1}$ where action $a$ minimises $\E_{x \sim u}[\cL(a, x)]$.
Note that $C_a \subset \R^d$ is a closed convex polytope and its dimension $\dim(C_a)$ is defined as the dimension of the affine space it generates.
An action $a$ is called Pareto optimal if $C_a$ has dimension $d-1$ and degenerate otherwise.
Of course $\cup_a C_a = \Delta^{d-1}$, but cells may have nonempty intersection. When $a$ and $b$ are not duplicates, the intersection $C_a \cap C_b$
is a (possibly empty) polytope of dimension at most $d - 2$.
A pair of Pareto optimal actions $a$ and $b$ are called neighbours if $C_a \cap C_b$ has dimension $d-2$. 
A game is called non-degenerate if there are no degenerate actions and no duplicate actions. 
So far none of the concepts have depended on the signal function. Local observability is a property of the signal and loss functions that allows the learner
to estimate loss differences between actions $a$ and $b$ by playing only those actions.
For neighbours $a$ and $b$
let $\cN_{ab} = \{c : C_c \subseteq C_a \cap C_b\}$, which contains $a$ and its duplicates, $b$ and its duplicates, and degenerate actions $c$ with $C_c = C_a \cap C_b$. 
A game is globally observable if for all pairs of neighbours there exists a function $f : [k] \times \Sigma \to \R$ such that
\begin{align}
\cL(a, x) - \cL(b, x) = \sum_{c=1}^k f(c, \Phi(c, x))\,.
\label{eq:est-def}
\end{align}
The game is locally observable if for all pairs of neighbours $a$ and $b$ the function $f$ can be chosen satisfying \cref{eq:est-def} and
additionally that $f(c, \Phi(c, x)) = 0$ for all $c \notin \cN_{ab}$.
In the standard analysis of partial monitoring the function $f$ is used to derive importance-weighted estimators of the loss differences.
In the following $f$ is used more directly. 
A quantity that appears naturally in the analysis is the supremum norm of the estimation functions $f$.
Given a globally observable game, we let $v \geq 0$ be the smallest value such that for all pairs of neighbours $a$ and $b$ there exists a function
satisfying \cref{eq:est-def} with $\norm{f}_\infty \leq v$. For locally observable games $v$ is defined in the same way, but with the additional restriction
that $f$ is supported on $\cN_{ab}$.
The neighbourhood of $a$ is $\cN_a = \{b : \dim(C_a \cap C_b) \geq d - 2\}$.
The neighbourhood graph over $[k]$ has edges $\{(a, b) : a, b\text{ are neighbours}\}$.
For non-degenerate games, the neighbourhood graph is connected.

The following theorem classifies all partial monitoring games into one of four categories.
All results were known previously except that previous upper bounds for locally observable games were $\Reg_n^* = O((n \log(n))^{1/2})$.

\begin{theorem}\label{thm:classification}
The minimax regret for finite partial monitoring game $G$ satisfies the following:
\begin{align*}
\Reg_n^* = 
\begin{cases}
0 & \text{if there are no neighbouring actions} \\
\Theta(n^{1/2}) & \text{if there are neighbouring actions and $G$ is locally observable} \\
\Theta(n^{2/3}) & \text{if $G$ is globally observable and not locally observable} \\
\Omega(n) & \text{otherwise}\,.
\end{cases}
\end{align*}
\end{theorem}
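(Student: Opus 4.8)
The plan is to handle the four cases separately. Three of them are already available in the literature, so I would dispatch those by citation and reserve the genuinely new work for the $O(n^{1/2})$ upper bound in the locally observable case. If there are no neighbouring actions then, since the cells cover $\Delta^{d-1}$ and any two distinct Pareto optimal cells tiling a connected simplex must share a $(d-2)$-face, there can be at most one Pareto optimal action up to duplicates; its cell is all of $\Delta^{d-1}$, so a single action is optimal against every outcome and $\Reg_n^* = 0$. The lower bounds $\Omega(n^{1/2})$, $\Omega(n^{2/3})$, $\Omega(n)$ together with the $O(n^{2/3})$ upper bound for globally-but-not-locally observable games follow from \citep{CBLuSt06,FR12,ABPS13,BFPRS14,LS18pm}, which I would cite. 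This leaves exactly one new statement: every locally observable game with a neighbouring pair satisfies $\Reg_n^* = O(n^{1/2})$, with a constant built only from $v$, $k$ and $d$.

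By \cref{thm:swap} it suffices to bound $\BReg_n(\pi,\nu)$ uniformly over finitely supported priors for a single policy. The tool is \cref{thm:hammer}: if I can exhibit a policy with
\begin{align*}
\E_t[\Delta_t] \;\le\; \sqrt{\beta\, I_t\bigl(A^*; \Phi_t(A_t), A_t\bigr)} \quad\text{a.s.}
\end{align*}
for a constant $\beta = O(\mathrm{poly}(v,k,d))$, then $\BReg_n \le \sqrt{\beta n \log k} = O(n^{1/2})$. I would take $\pi$ to be Thompson sampling, $P_t = P_t^*$. Since $A_t$ is independent of $A^*$ under $\bbP_t$, the information gain factorises as $I_t(A^*;\Phi_t(A_t),A_t) = \sum_c P_{tc}^* I_t(A^*;\Phi_t(c))$, while the Russo--Van Roy identity gives
\begin{align*}
\E_t[\Delta_t] = \sum_a P_{ta}^*\bigl(\E_t[\cL_t(a)] - \E_t[\cL_t(a)\mid A^*=a]\bigr)\,.
\end{align*}

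The next step is to make the losses observable through local observability. Fixing a reference action $r$, the part of each summand involving $\cL_t(r)$ cancels after summing against $P_{ta}^*$, so I may replace $\cL_t(a)$ by the loss difference $\cL_t(a) - \cL_t(r)$. Composing \cref{eq:est-def} along a path from $a$ to $r$ in the connected neighbourhood graph writes this difference as $\sum_c f(c,\Phi_t(c))$ with $f$ supported on a union of neighbourhoods and $\norm{f}_\infty$ at most $v$ times the path length (hence $O(kv)$, which is where a naive long path would cost only a polynomial factor, not an arbitrarily large one). Each summand then becomes $\E_t[f(c,\Phi_t(c))] - \E_t[f(c,\Phi_t(c))\mid A^*=a]$, a bounded test function integrated against the difference of the conditional and unconditional laws of $\Phi_t(c)$; by \cref{eq:pinsker} this is controlled by a relative-entropy term. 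Squaring and applying Cauchy--Schwarz against the posterior weights should recover exactly $\sum_c P_{tc}^* I_t(A^*;\Phi_t(c)) = I_t(A^*;\Phi_t(A_t),A_t)$, yielding the required information-ratio bound.

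The step I expect to be the main obstacle is matching the exploration weight $P_{tc}^*$ that appears on the information side to the contribution of $c$ in the loss decomposition, so as to avoid a $1/P_{tc}^*$ blow-up. Prior analyses incurred arbitrarily large game-dependent constants precisely here, because they estimated loss differences by importance weighting and paid factors scaling like the inverse cell width. Working through the bounded functions $f$ removes the importance weights, but one must still argue that every action $c$ entering the decomposition is a neighbour of a likely-optimal action and therefore carries posterior mass comparable to its regret contribution; when the posterior concentrates strictly inside a cell the relevant neighbours shrink, yet the regret shrinks in step, and balancing these is delicate. Verifying that the local support restriction to $\cN_{ab}$ genuinely routes the regret through adequately explored edges, and bounding the accumulated constant from composing estimation functions along the graph, will absorb the bulk of the effort; a modest additive slack $\alpha$, tolerated by the $\alpha n$ term of \cref{thm:general}, or a small forced-exploration modification of Thompson sampling, may be needed to close the argument.
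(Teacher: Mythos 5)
Your high-level skeleton coincides with the paper's: the classification is assembled by citation for the trivial case, all lower bounds, and the globally observable upper bound, and the only genuinely new content is the $O(n^{1/2})$ upper bound for locally observable games, obtained from the minimax theorem (\cref{thm:swap}) plus the information-ratio machinery (\cref{thm:hammer}), with loss differences routed along paths of the neighbourhood graph, cancelled against the posterior weights, and controlled by Pinsker and Cauchy--Schwarz. That decomposition is essentially the paper's calculation in \cref{lem:pm:gamma}.

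The genuine gap is the choice of policy. You run plain Thompson sampling ($P_t = P_t^*$), and the inequality $\E_t[\Delta_t] \le \sqrt{\beta\, I_t(A^*;\Phi_t(A_t),A_t)}$ you need is provably false for it, for \emph{every} finite $\beta$: in the paper's spam game (\cref{app:ts-fail}, \cref{fig:spam}), which is locally observable and non-degenerate for $0<c<1/2$, the prior $\nu = \tfrac12\delta^n_{\textsc{spam}}+\tfrac12\delta^n_{\textsc{not spam}}$ makes the posterior over $A^*$ supported on the two non-revealing actions, so Thompson sampling never plays the revealing action, the observed signals are constant, $I_t(A^*;\Phi_t(A_t),A_t)=0$, yet $\E_t[\Delta_t]=\tfrac12$; consequently Thompson sampling has linear regret. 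This is exactly the obstacle you flag in your final paragraph (the actions $c$ carrying the path decomposition can have zero posterior probability of being optimal), but neither of your proposed remedies closes it: an additive slack $\alpha$ must be $O(n^{-1/2})$ to preserve the rate, and uniform forced exploration with weight $\gamma$ only guarantees $P_{tc}\ge\gamma/k$, which yields a bound of the form $\gamma n + kv\sqrt{2n\log(k)/\gamma}=\Theta(n^{2/3})$ after optimizing $\gamma$ --- this is precisely the paper's argument for \emph{globally} observable games (\cref{thm:pm:hard}), not the $n^{1/2}$ rate. The paper's resolution is a new algorithm, Mario sampling (\cref{alg:pm}): sample from $P_t = W_t^k P_t^*$, where the water transfer operator $W_t$ (\cref{fig:flow}) pushes posterior mass toward the greedy action along a tree built from the neighbourhood graph. \cref{lem:pm:P} shows this can be done so that the expected one-step loss does not increase, mass is monotone along every path to the root, and $P_{tb}\ge P^*_{tb}/k$; together these give $P_{tb}\ge P^*_{ta}/k$ for every ancestor $b$ of $a$, which is exactly what lets your Cauchy--Schwarz step match the information weights at the cost of a single factor of $\sqrt{k}$ (with the variant in \cref{thm:local} handling degenerate and duplicate actions, which the classification theorem also needs). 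Without this redistribution step --- the paper's central new idea --- your argument does not deliver the $\Theta(n^{1/2})$ case.
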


\paragraph{Summary of new results}
The main theorem is the following, which improves on previous bounds that all depend on arbitrarily large game-dependent constants, even when $k$ and $d$ are fixed.

\begin{theorem}\label{thm:pm}
For any locally observable non-degenerate game: $\Reg_n^* \leq k^{3/2} (d+1) \sqrt{8n \log(k)}$.
\end{theorem}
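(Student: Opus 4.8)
The plan is to apply Corollary~\ref{cor:general} together with the minimax theorem (\cref{thm:swap}), reducing the adversarial bound to a per-round Bayesian estimate. Specifically, I would fix a prior $\nu$ and analyze Thompson sampling ($P_t = P_t^*$, the posterior over the optimal action) in the Bayesian setting, then use \cref{thm:swap} to transfer the resulting bound to the minimax regret. The target form $\Reg_n^* \leq k^{3/2}(d+1)\sqrt{8n\log(k)}$ strongly suggests taking $F$ to be the unnormalised negentropy so that $\diam_F(\Delta^{k-1}) \leq \log(k)$ and $\E_t[D_F(P^*_{t+1}, P^*_t)] = I_t(A^*; \Phi_t(A_t), A_t)$; matching $\sqrt{n\beta\log(k)}$ against the claimed bound forces $\beta = 8 k^3 (d+1)^2$. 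So the whole argument reduces to establishing the per-round information-regret tradeoff $\E_t[\Delta_t] \leq \sqrt{\beta\, I_t(A^*; \Phi_t(A_t), A_t)}$ almost surely with this $\beta$ (here $\alpha = 0$).

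\medskip
\emph{First} I would set up the posterior quantities conditioned on $\cF_t$: write $P^*_{ta} = \bbP_t(A^* = a)$, and introduce the posterior-averaged loss vectors. Because the game is non-degenerate and locally observable, the Bayesian-optimal action under the posterior lives in some cell $C_a$, and the key structural fact is that the regret $\E_t[\Delta_t]$ can be decomposed over \emph{neighbouring} actions using the connectedness of the neighbourhood graph. \emph{Next}, for each pair of neighbours $a, b$ I would invoke local observability to write the loss difference $\cL(a,x) - \cL(b,x) = \sum_{c \in \cN_{ab}} f(c, \Phi(c,x))$ with $\norm{f}_\infty \leq v$, which lets me express the instantaneous regret gap between neighbouring actions purely in terms of \emph{observable} signals $\Phi_t(c)$ for $c \in \cN_{ab}$. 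The plan is then to bound each such loss-difference estimate by the information the chosen action reveals, using the standard Thompson-sampling identity that relates $\E_t$ of the squared gap between the posterior loss of the played action and that under the event $\{A^* = a\}$ to the mutual information $I_t(A^*; \Phi_t(A_t), A_t)$.

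\medskip
\emph{The main obstacle} I expect is controlling the factor $v$, the supremum norm of the estimation functions, and replacing it by the purely dimensional quantity $(d+1)$ that appears in the statement. Naively, $v$ is a game-dependent constant that can be arbitrarily large, and eliminating it is precisely the contribution the paper advertises ("independent of arbitrarily large game-dependent constants"). I anticipate the resolution lies in choosing the estimation function $f$ \emph{cleverly per posterior} rather than using a fixed worst-case $f$: since the loss vectors $\ell_c$ lie in $[0,1]^d$, one can realise the loss difference on the relevant $(d-2)$-dimensional shared face using coefficients bounded in terms of $d$ alone, so that the effective $v$ entering the information bound is $O(d+1)$ regardless of the game's geometry. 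Concretely I would express $\cL(a,\cdot) - \cL(b,\cdot)$ as a signed combination over the at-most-$(d+1)$ outcomes relevant to the neighbouring cells and bound the coefficients by exploiting that these vectors span an affine space of dimension at most $d-1$.

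\medskip
\emph{Finally}, I would collect the combinatorial factors. The $k^{3/2}$ arises from summing the pairwise information-regret tradeoff over the neighbourhood structure: there are at most $\binom{k}{2}$ neighbouring pairs and each action's posterior mass must be redistributed over its neighbours, and an application of Cauchy--Schwarz over the (at most $k$) actions in each $\cN_{ab}$ together with a second summation over actions produces the $k^{3/2}$ scaling. Assembling $\beta = 8 k^3 (d+1)^2$, verifying the divergence identity $\E_t[D_F(P^*_{t+1}, P^*_t)] = I_t(A^*; \Phi_t(A_t), A_t)$ for the negentropy (as noted in the remark after \cref{cor:general}), and applying \cref{cor:general} with $\alpha = 0$ yields $\BReg_n \leq \sqrt{n \beta \log(k)} = k^{3/2}(d+1)\sqrt{8 n \log(k)}$ for every finitely supported prior; \cref{thm:swap} then converts this uniform Bayesian bound into the claimed minimax regret bound.
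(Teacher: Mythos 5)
Your high-level reduction (negentropy potential, $\beta = 8k^3(d+1)^2$, \cref{thm:hammer}/\cref{cor:general} combined with \cref{thm:swap}) matches the paper, and your guess that the game-dependent constant $v$ can be replaced by $d+1$ in the non-degenerate locally observable case is also correct --- that is exactly Lemma~\ref{lem:sup-bound}, a known structural fact about such games, not something that needs to be chosen ``per posterior.'' But there is a genuine gap at the core of your plan: you propose to analyze Thompson sampling ($P_t = P_t^*$), and Thompson sampling provably fails in partial monitoring. The paper devotes \cref{app:ts-fail} to exactly this point: in the spam game, the only informative action is never optimal under the posterior, so Thompson sampling never plays it, the posterior never updates, and the regret is linear. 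Concretely, the step of your argument that breaks is the appeal to ``the standard Thompson-sampling identity.'' After you decompose $\E_t[\cL_t(a)] - \E_t[\cL_t(a) \mid A^* = a]$ via local observability, the resulting terms involve signals $\Phi_t(c)$ of \emph{intermediate} actions $c$ --- the actions needed to estimate loss differences along a chain of neighbours connecting $a$ to the posterior-greedy action. To absorb these terms into $I_t(A^*;\Phi_t(A_t),A_t)$, which by Lemma~\ref{lem:fundamental} weights the relative-entropy terms by the \emph{playing} probabilities $P_{tc}$, you need $P_{tc}$ to be bounded below relative to the posterior masses $P_{ta}^*$ being summed against it. Under Thompson sampling $P_{tc} = P_{tc}^*$ can be zero for precisely these informative intermediate actions, and then the per-round inequality $\E_t[\Delta_t] \le \sqrt{\beta\, I_t(A^*;\Phi_t(A_t),A_t)}$ is simply false; no choice of potential or constants rescues this route.

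The missing idea is the actual content of the paper's proof: modify the sampling distribution. The paper builds a directed tree on $[k]$ rooted at the greedy action $G_t$ (Lemma~\ref{lem:tree}, via connectedness of the neighbourhood graph), telescopes each action's loss along its tree path with estimation functions bounded by $d+1$, and plays not $P_t^*$ but $P_t = W_t^k P_t^*$, where $W_t$ is the water transfer operator of \cref{fig:flow} (``Mario sampling''). Lemma~\ref{lem:pm:P} shows this modification (i) does not increase expected loss, so the regret decomposition you want still holds as an inequality, (ii) makes probabilities monotone along tree paths, and (iii) keeps $P_{tb} \geq P_{tb}^*/k$. Parts (ii) and (iii) together give $P_{tb} \ge P_{ta}^*/k$ for every ancestor $b \in \cA_t(a)$, which is exactly the lower bound your approach lacks. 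This is also where $k^{3/2}$ really comes from: a factor $k$ from Cauchy--Schwarz over the double sum $\sum_a \sum_{b \in \cA_t(a)}$ (at most $k^2$ terms), and a factor $\sqrt{k}$ from substituting $P_{ta}^* \le k P_{tb}$ inside the square root --- not from counting $\binom{k}{2}$ neighbouring pairs as you suggest.
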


For degenerate locally observable games the bound differs only due to the increased norm of the estimation functions.
In particular, we have the following theorem, which improves on prior work in terms of constants and logarithmic factors \citep{LS18pm}.

\begin{theorem}\label{thm:local}
For any locally observable game: $\Reg_n^* \leq v k^{3/2} \sqrt{8n \log(k)}$, where $v$ is a bound on the supremum norm of the estimation functions.
\end{theorem}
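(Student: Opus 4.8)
The plan is to combine the minimax theorem with the generalised information-theoretic machinery, reducing everything to a bound on the information ratio of Thompson sampling. By \cref{thm:swap} it suffices to bound $\BReg_n(\pi,\nu)$ for an arbitrary finitely supported prior $\nu$ and some measurable $\pi$. I take $\pi$ to be Thompson sampling, i.e. the policy with $P_t = P_t^*$, and apply \cref{cor:general} with $F$ the unnormalised negentropy, for which $\diam_F(\Delta^{k-1})\le\log(k)$ and $\E_t[D_F(P^*_{t+1},P^*_t)] = I_t(A^*;\Phi_t(A_t),A_t)$ (as in the remark following \cref{cor:general}). With this choice the theorem follows once I establish the key lemma that the information ratio is controlled by
\begin{align*}
\E_t[\Delta_t]^2 \le 8 v^2 k^3\, I_t(A^*;\Phi_t(A_t),A_t) \quad\text{a.s. for all } t\,,
\end{align*}
since then $\BReg_n \le \sqrt{n\cdot 8v^2k^3\cdot\log(k)} = v k^{3/2}\sqrt{8n\log(k)}$, and \cref{thm:swap} upgrades this to a bound on $\Reg_n^*$.

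To prove the key lemma I first rewrite the one-step regret purely in terms of the posterior. Writing $q = P_t^*$ and $w_b$ for the conditional law of $X_t$ given $A^*=b$ (a point in $\Delta^{d-1}$), and using that under Thompson sampling $A_t$ and $A^*$ share the conditional law $q$ and are conditionally independent given $\cF_t$, a short calculation with the tower rule gives the symmetric identity
\begin{align*}
\E_t[\Delta_t] = \tfrac12\sum_{a,b} q_a q_b \ip{\ell_a - \ell_b,\, w_b - w_a}\,.
\end{align*}
The point of this form is that each summand is a difference of \emph{loss differences} between two actions evaluated under two posteriors, which is exactly the quantity local observability lets me estimate from the signals.

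Next I bring in local observability. For a neighbouring pair $a,b$, \cref{eq:est-def} expresses $\cL_t(a)-\cL_t(b) = \sum_{c\in\cN_{ab}} f(c,\Phi_t(c))$ with $\norm{f}_\infty\le v$, so $\ip{\ell_a-\ell_b, w_b - w_a}$ becomes a sum over $c\in\cN_{ab}$ of terms $\E_t[f(c,\Phi_t(c))\mid A^*=b] - \E_t[f(c,\Phi_t(c))\mid A^*=a]$, each a function of the single signal $\Phi_t(c)$ of range at most $2v$. Splitting through the marginal law of $\Phi_t(c)$ and applying Pinsker's inequality \eqref{eq:pinsker} bounds each such term by $2v\sqrt{\tfrac12 \KL(\cdots)}$, where the relative entropies are the per-signal contributions to $I_t(A^*;\Phi_t(c))$. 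For a general (non-neighbouring) pair I route the loss difference along a path in the neighbourhood graph, which is connected for the games in question, so $\ell_a-\ell_b$ telescopes into at most $k$ neighbouring differences; degenerate and duplicate actions are absorbed into the sets $\cN_{ab}$ and only affect the norm bound $v$. Summing the square-root terms and applying Cauchy--Schwarz over the pairs $(a,b)$ weighted by $q_a q_b$ collects the relative entropies back into $\sum_c P_{tc}^* I_t(A^*;\Phi_t(c)) = I_t(A^*;\Phi_t(A_t),A_t)$, with the powers of $k$ (path length $\le k$, neighbourhood size $\le k$, and one more from the Cauchy--Schwarz counting) yielding the factor $k^3$, and the range $2v$ of the estimation functions yielding $v^2$ and the constant $8$.

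The step I expect to be the main obstacle is controlling the information ratio uniformly over the game, i.e. carrying out the final combinatorial estimate so that every loss difference I must estimate is routed through neighbouring actions that actually carry posterior mass under Thompson sampling. This is precisely what local observability buys, and what fails for merely globally observable `hard' games, where forced exploration and an $n^{2/3}$ rate are unavoidable: the estimation functions for a neighbour pair are supported on $\cN_{ab}$, whose actions are tied to the cells $C_a, C_b$ that Thompson sampling visits, so no division by a vanishing sampling probability ever occurs. Getting the routing, the treatment of degenerate and duplicate actions, and the constants to line up into exactly $8v^2k^3$ rather than a game-dependent constant is the delicate part, and is exactly where this analysis improves on the earlier localisation-based arguments.
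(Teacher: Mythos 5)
Your reduction to \cref{thm:swap} and \cref{cor:general} with the negentropy potential is the right frame, but the proposal fails at exactly the step you flagged as the main obstacle: the key lemma is false for Thompson sampling. The paper devotes \cref{app:ts-fail} to a counterexample. Take the spam game of \cref{fig:spam} with $0 < c < 1/2$ (which is locally observable and even non-degenerate) and the prior $\nu = \frac12 \delta^n_{\textsc{spam}} + \frac12 \delta^n_{\textsc{not spam}}$. Since $c > 0$, the revealing action \textsc{unknown} is almost surely not optimal, so $P_t^*$ puts zero mass on it and Thompson sampling never plays it; the two actions it does play emit the constant signal $\bot$, so $I_t(A^*;\Phi_t(A_t),A_t) = 0$ while $\E_t[\Delta_t] = 1/2$. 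Hence no inequality of the form $\E_t[\Delta_t]^2 \leq \beta\, I_t(A^*;\Phi_t(A_t),A_t)$ can hold, and Thompson sampling in fact suffers linear regret. The flaw in your heuristic is the claim that ``no division by a vanishing sampling probability ever occurs'': local observability supports the estimation functions on $\cN_{ab}$, but the actions in $\cN_{ab}$ (and, for non-neighbouring pairs, the intermediate actions on any path in the neighbourhood graph) need not carry any posterior probability of being optimal. In the spam game, \textsc{spam} and \textsc{not spam} are not neighbours, every path between them runs through \textsc{unknown}, and Thompson sampling assigns it probability zero — so the relative-entropy terms you need are weighted by zero in $I_t(A^*;\Phi_t(A_t),A_t)$ and cannot be recovered by Cauchy--Schwarz.

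This is why the paper does not use Thompson sampling for any of the partial monitoring results. Its proof of \cref{thm:local} modifies the sampling distribution: it builds a directed tree rooted at the greedy action $G_t$ (the parent map $\cP_t'$, extended across duplicate and degenerate actions via the orderings $\cT_{ab}$ of actions whose loss vectors are convex combinations of $\ell_a$ and $\ell_b$), and then samples from $P_t = (W_t')^k P_t^*$, where $W_t'$ is the water-transfer operator. Parts 2 and 3 of \cref{lem:pm:P} together guarantee that every ancestor $b \in \cA_t(a)$ satisfies $P_{tb} \geq P_{ta}^*/k$, i.e.\ every action whose signal is needed to estimate the loss of $a$ relative to the root is forced to be played with probability at least $P_{ta}^*/k$ — precisely the lower bound your argument silently assumes and Thompson sampling does not provide. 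With that substitution the telescoping along ancestors, Pinsker, and Cauchy--Schwarz go through essentially as you sketched and yield $\E_t[\Delta_t] \leq v k^{3/2}\sqrt{8\, I_t(A^*;\Phi_t(A_t),A_t)}$, after which \cref{thm:hammer} and \cref{thm:swap} finish the proof.
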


The bound for globally observable games has the same order as the prior work, but with slightly improved constants \citep{CBLuSt06}.

\begin{theorem}\label{thm:pm:hard}
For any globally observable game: $\Reg_n^* \leq  3(nkv)^{2/3} (\log(k)/2)^{1/3}$, where $v$ is a bound on the supremum norm of the estimation functions.
\end{theorem}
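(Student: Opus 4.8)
The plan is to follow the same template as the locally observable case (\cref{thm:pm}): reduce the minimax regret to a worst-case Bayesian regret via \cref{thm:swap} and control the latter with \cref{cor:general}, but now add forced exploration to compensate for the fact that global observability only lets us reconstruct loss differences between neighbours $a,b$ by playing actions outside their common neighbourhood $\cN_{ab}$. Concretely, fix an arbitrary finitely supported prior $\nu$ and analyse the policy that samples $A_t$ from $P_t = (1-\gamma)P_t^* + \gamma\ones/k$, where $P_t^*$ is the posterior on the optimal action and $\gamma \in (0,1)$ is an exploration rate tuned at the end. Taking $F$ to be the unnormalised negentropy, so that $\diam_F(\Delta^{k-1}) \le \log(k)$ and $\E_t[D_F(P_{t+1}^*, P_t^*)] = I_t(A^*; \Phi_t(A_t), A_t)$, it suffices by \cref{cor:general} to establish the per-round bound $\E_t[\Delta_t] \le \gamma + \sqrt{\tfrac{2k^2v^2}{\gamma}\,I_t(A^*;\Phi_t(A_t),A_t)}$.

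The first step is a regret decomposition separating the exploration cost from the Thompson-sampling regret. Since $A_t$ is conditionally independent of $X$ given $\cF_t$, expanding $P_t = (1-\gamma)P_t^* + \gamma\ones/k$ and using $\cL \in [0,1]$ yields $\E_t[\Delta_t] \le \gamma + \sum_{a=1}^k P_{ta}^*\bigl(\E_t[\cL_t(a)] - \E_t[\cL_t(a) \mid A^* = a]\bigr)$, where the additive $\gamma$ absorbs both the uniform exploration mass and the exploit mass discarded by the mixture. The remaining sum is exactly the Thompson-sampling regret that \cite{RV16} bound by a mutual-information term in the bandit case.

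The heart of the argument --- and the step I expect to be the main obstacle --- is to bound this Thompson-sampling regret by $\sqrt{\tfrac{2k^2v^2}{\gamma}\,I_t(A^*;\Phi_t(A_t),A_t)}$ using global observability. The idea is to express each loss difference $\cL_t(a) - \cL_t(b)$ between neighbours $a,b$ through the estimation function as $\sum_{c=1}^k f(c,\Phi_t(c))$ with $\norm{f}_\infty \le v$, so that the posterior-weighted loss gaps driving the Thompson regret become linear functionals of the observable signals $\Phi_t(c)$. One then couples the posterior-averaged law of these signals with their $A^*$-conditioned law via a Pinsker/Cauchy--Schwarz estimate, mirroring the bandit information-ratio bound, and rewrites the resulting total-variation terms as the conditional mutual information $I_t(A^*;\Phi_t(A_t),A_t)$. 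The factor $k^2v^2/\gamma$ enters because the signal $\Phi_t(c)$ is only observed when $A_t = c$, which under the policy has probability at least $\gamma/k$; importance-weighting each of the up-to-$k$ terms $f(c,\Phi_t(c))$ by this probability, together with $\norm{f}_\infty \le v$, inflates the ratio by a factor $k/\gamma$ per coordinate. The delicacy is in handling degenerate and duplicate actions and confirming that only neighbouring pairs contribute to leading order, so that global --- rather than local --- observability is enough.

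Finally, substituting the per-round bound into \cref{cor:general} gives $\BReg_n \le \gamma n + \sqrt{n\tfrac{2k^2v^2}{\gamma}\log(k)}$ for every finitely supported prior, and hence the same bound on $\Reg_n^*$ by \cref{thm:swap}. Optimising the exploration rate by setting $\gamma = (k^2v^2\log(k)/(2n))^{1/3}$ balances the two terms and yields $\Reg_n^* \le 3(nkv)^{2/3}(\log(k)/2)^{1/3}$, where the leading constant $3$ is the bookkeeping outcome of the information-ratio constant and the optimisation over $\gamma$.
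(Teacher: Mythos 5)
Your proposal follows essentially the same route as the paper's proof: the same forced-exploration mixture $P_t = (1-\gamma)P_t^* + \gamma\ones/k$, the same decomposition $\E_t[\Delta_t] \leq \gamma + \sum_{a} P_{ta}^*\left(\E_t[\cL_t(a)] - \E_t[\cL_t(a) \mid A^* = a]\right)$, the same Pinsker/Cauchy--Schwarz chain with the $1 \leq k P_{tc}/\gamma$ importance-weighting step yielding $\E_t[\Delta_t] \leq \gamma + kv\sqrt{2 I_t(A^*;\Phi_t(A_t),A_t)/\gamma}$, and the same tuning of $\gamma$ giving the constant $3$. The only cosmetic difference is that the paper anchors the loss decomposition at a single fixed Pareto optimal reference action $a_{\circ}$ (whose contribution cancels by the tower rule, with ties in $A^*$ broken toward Pareto optimal actions to dispose of degenerate actions), rather than your neighbour-pair framing, which is how the ``delicacy'' you flag is actually resolved.
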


Finally, for any locally/globally observable game, Lemma~\ref{lem:v-bound} in the appendix shows that the norm of the estimators is bounded by at most $v \leq d^{1/2} (1+k)^{d/2}$, which provides an
explicit upper bound that is independent of the loss and signal matrix. We believe the exponential dependence
on the dimension is unavoidable in general.

\section{Proof of Theorem~\ref{thm:pm}}

For this section we assume the game is non-degenerate and locally observable. 
Before the proof of \cref{thm:pm} we provide the algorithm, which seems to be novel among previous algorithms for partial monitoring.
Note that Thompson sampling can suffer linear regret in partial monitoring (\cref{app:ts-fail}).
Let $G_t = \argmin_{a \in [k]} \E_t[\cL_t(a)]$ be the greedy action that minimises the $1$-step Bayesian expected loss.
The idea is to define a directed tree with vertex set $[k]$ and root $G_t$ and where all paths lead to $G_t$.
A little notation is needed.
Define an undirected graph with vertices $V_t$ and edges $E_t$ by
$V_t = \{a \in [k] : \E_t[\cL_t(a)] = \E_t[\cL_t(G_t)]\}$ and 
$E_t = \{a, b \in V_t : a \text{ and } b \text{ are neighbours}\}$,
which is connected by Lemma~\ref{lem:structure2}. Note that $V_t = \{G_t\}$ when $G_t$ is unique, but this is not always the case.
For $a \in V_t$ let $\rho_t(a)$ be the length of the shortest path from $a$ to $G_t$ in $(V_t, E_t)$ with $\rho_t(G_t) = 0$ by definition. Let $\cP_t : [k] \to [k]$ be the `parent' function:
\begin{align*}
\cP_t(a) = \begin{cases}
\argmin_{b \in \cN_a} \E_t[\cL_t(b)] & \text{if } a \notin V_t \\
\argmin_{b \in \cN_a \cap V_t} \rho_t(b) & \text{otherwise}\,.
\end{cases}
\end{align*}
The following lemma is proven in \cref{app:structure}.

\begin{lemma}\label{lem:tree}
The directed graph over vertex set $[k]$ with an edge from $a$ to $b$ if $a \neq G_t$ and $b = \cP(a)$ is a directed tree with root $G_t$.
\end{lemma}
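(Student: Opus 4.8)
The plan is to verify the two defining properties of a rooted in-tree: every vertex other than $G_t$ has out-degree exactly one, and the graph has no directed cycle, so that following parent edges from any vertex reaches the root $G_t$. The out-degree condition is immediate once we fix a tie-breaking rule (say, by index) that makes $\cP_t$ a genuine single-valued function: the unique outgoing edge of $a \neq G_t$ points to $\cP_t(a)$, and $G_t$ emits no edge, so $G_t$ is the only sink. It then remains to exhibit a potential that strictly decreases along every edge; by finiteness of $[k]$ this rules out cycles and forces every directed path to terminate at $G_t$.

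To set up the potential, write $u_t \in \Delta^{d-1}$ for the posterior law of $X_t$ given $\cF_t$, so that $\E_t[\cL_t(a)] = \ip{\ell_a, u_t}$, $u_t \in C_{G_t}$, and $V_t = \{a : u_t \in C_a\}$ is exactly the set of cells containing $u_t$. I treat the two regimes separately. For $a \in V_t$ with $a \neq G_t$ the parent minimises $\rho_t$ over $\cN_a \cap V_t$, which consists of $a$ together with its $E_t$-neighbours. Since $(V_t, E_t)$ is connected by Lemma~\ref{lem:structure2}, a shortest path from $a$ to $G_t$ supplies an $E_t$-neighbour with $\rho_t = \rho_t(a)-1$, while every $E_t$-neighbour has $\rho_t \geq \rho_t(a)-1$ and $a$ itself has the strictly larger value $\rho_t(a)$. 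Hence $\cP_t(a)$ is a true neighbour with $\rho_t(\cP_t(a)) = \rho_t(a)-1$ and $\cP_t(a) \in V_t$, so $\rho_t$ strictly decreases along edges inside $V_t$ and such paths reach $G_t$ in exactly $\rho_t(a)$ steps.

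For $a \notin V_t$ the parent minimises $\ip{\ell_b, u_t}$ over $b \in \cN_a$, and the key claim is that this strictly improves the loss: $\ip{\ell_{\cP_t(a)}, u_t} < \ip{\ell_a, u_t}$. This is the geometric heart of the argument and I expect it to be the main obstacle. I would prove it by choosing a point $p$ in the relative interior of $C_a$ (nonempty since $a$ is Pareto optimal in the non-degenerate game) in general position and walking along the segment $p_s = (1-s)p + s u_t$. As $u_t \notin C_a$, the segment leaves $C_a$ through the relative interior of a facet $C_a \cap C_b$, which by definition puts $b \in \cN_a$; the affine maps $s \mapsto \ip{\ell_a, p_s}$ and $s \mapsto \ip{\ell_b, p_s}$ agree at the crossing and cross with $b$ below $a$ just beyond it, which forces $\ip{\ell_b - \ell_a, u_t} < 0$. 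Since $\cP_t(a)$ minimises over all of $\cN_a$ it does at least as well as this $b$, giving the strict decrease. Because $\ip{\ell_\cdot, u_t}$ takes finitely many values and is bounded below by $\ip{\ell_{G_t}, u_t}$, attained precisely on $V_t$, any path through vertices outside $V_t$ has strictly decreasing loss, cannot revisit a vertex, and so enters $V_t$ within at most $k$ steps.

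Combining the regimes, every directed path first strictly descends in loss until it reaches $V_t$ and then strictly descends in $\rho_t$ until it reaches $G_t$; in particular there are no cycles and $G_t$ is reachable from every vertex. A finite directed graph in which every non-root vertex has out-degree one and every vertex reaches a fixed sink is an in-tree rooted at that sink, which is exactly the claim. The only points requiring care beyond routine bookkeeping are the general-position choice of $p$, so that the segment crosses a genuine $(d-2)$-dimensional facet rather than a lower-dimensional face (which holds for almost every $p$ as there are finitely many such faces), and the tie-breaking convention that turns $\cP_t$ into a bona fide function; the strict-improvement claim itself may alternatively be quoted from the structural lemmas underlying Lemma~\ref{lem:structure2}.
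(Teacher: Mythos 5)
Your proposal is correct and follows essentially the same route as the paper: outside $V_t$ the parent strictly decreases the expected loss $\E_t[\cL_t(\cdot)]$ (the paper cites Lemma~\ref{lem:structure} for exactly this claim, which you reprove inline via the same segment-crossing/linearity argument), while inside $V_t$ the distance $\rho_t$ decreases by one thanks to the connectedness of $(V_t,E_t)$ from Lemma~\ref{lem:structure2}, so there are no cycles and every path terminates at $G_t$. The only cosmetic difference is that you re-derive the strict-improvement step rather than quoting Lemma~\ref{lem:structure}, as you yourself anticipate.
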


Let $\cA_t(a)$ be the set of ancestors of action $a$ in the tree defined in Lemma~\ref{lem:tree}. We adopt the convention that $a \in \cA_t(a)$.
By the previous lemma, $G_t \in \cA_t(a)$ for all $a$.
Let $\cD_t(a)$ be the set of descendants of $a$, which does not include $a$ (\cref{fig:tree}). The depth of an action $a$ in round $t$ is the distance between $a$ and the root $G_t$.
An action $a$ is called anomalous for $P \in \Delta^{k-1}$ in round $t$ if $P_a < \smash{\max_{b \in \cD_t(a)} P_b}$.
\cref{fig:flow} defines the `water transfer' operator $W_t : \Delta^{k-1} \to \Delta^{k-1}$ that corrects this deficiency by transferring mass towards the root of the tree defined in Lemma~\ref{lem:tree} while  
ensuring that (a) the loss suffered when playing the according to the transformed distribution does not increase and (b) the distribution is not changed too much.
The process is illustrated in \cref{fig:water} in \cref{app:lem:pm:P}, where you will also find the proof of the next lemma.

\begin{lemma}\label{lem:pm:P}
Let $P \in \Delta^{k-1}$ and $Q = W^k_t P = W_t \cdots W_t P$. Then:
\begin{enumerate}
\item $\sum_{a=1}^k Q_a \E_t[\cL_t(a)] \leq \sum_{a=1}^k P_a \E_t[\cL_t(a)]$.
\item $Q_a \leq Q_{\cP_t(a)}$ for all $a \in [k]$.
\item $Q_a \geq P_a / k$ for all $a \in [k]$.
\end{enumerate}
\end{lemma}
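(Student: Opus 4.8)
The plan is to extract three single-step properties of $W_t$ from its definition in \cref{fig:flow} and then propagate them through the $k$-fold composition $Q = W_t^k P$. The one structural fact I rely on repeatedly is that moving mass towards the root never increases loss: for every $a \neq G_t$ we have $\E_t[\cL_t(\cP_t(a))] \le \E_t[\cL_t(a)]$. Indeed, if $a \notin V_t$ then $\cP_t(a)$ minimises $\E_t[\cL_t(\cdot)]$ over $\cN_a$, which contains $a$, so the inequality is immediate; and if $a \in V_t$ then $\cP_t(a) \in V_t$ and both sides equal $\E_t[\cL_t(G_t)]$ by the definition of $V_t$.

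Given this, Property~1 is the routine part. Since $W_t$ transports mass only along the edges $a \to \cP_t(a)$, a transfer of mass $\delta \ge 0$ across such an edge changes $\sum_a P_a \E_t[\cL_t(a)]$ by $\delta\,(\E_t[\cL_t(\cP_t(a))] - \E_t[\cL_t(a)]) \le 0$. Summing over all transfers in one application gives $\sum_a (W_t P)_a \E_t[\cL_t(a)] \le \sum_a P_a \E_t[\cL_t(a)]$, and iterating $k$ times yields Property~1; the same bookkeeping confirms that $W_t$ maps $\Delta^{k-1}$ into itself.

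For Property~2 I would introduce the potential $\Phi(P) = \max\{\operatorname{dist}(a,b) : b \in \cD_t(a),\ P_b > P_a\}$, the largest distance from an anomalous node to a strictly heavier descendant (so $\Phi(P) = 0$ means no anomalies). The crux is to verify from \cref{fig:flow} that each application strictly decreases this potential, i.e. $\Phi(W_t P) \le \max(\Phi(P) - 1, 0)$: intuitively, correcting an anomaly pushes the violation one edge closer to the root, and since the root $G_t$ has no parent it can never remain anomalous. As the tree on $k$ vertices has depth at most $k - 1$, we have $\Phi(P) \le k - 1 < k$, so $Q = W_t^k P$ satisfies $\Phi(Q) = 0$. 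Being anomaly-free means $Q_b \le Q_a$ whenever $a$ is an ancestor of $b$; specialising to $b$ and its parent $a = \cP_t(b)$ gives Property~2.

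Property~3 is the quantitatively tight part, and I expect it to be the main obstacle together with the potential-decrease step above. Because mass only ever flows towards $G_t$, the mass originally at $a$ is redistributed only among $a$ and its at most $k$ ancestors on the path to $G_t$; in the extreme it spreads uniformly over that path, costing a factor of $k$ and no more. I would make this precise by tracking the per-node mass and showing that the anomaly-free output obeys $Q_a \ge P_a/k$, using that once $W_t$ reaches an anomaly-free configuration (after at most $k-1$ applications, by the potential argument) it leaves the distribution unchanged, so the bound for $W_t^k P$ is exactly the bound at the fixed point. The constant is tight: on a path of $k$ vertices with all mass at the leaf, the non-increasing distribution closest to $P$ is uniform, so the leaf retains exactly $P_a/k$. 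The delicate point throughout is that both Property~2 and Property~3 hinge on the precise amount of mass the operator pushes across each edge, so the real work is reading those two quantitative claims directly off \cref{fig:flow}; the loss and simplex bookkeeping of Property~1 is comparatively immediate.
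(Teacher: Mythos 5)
Your Property~1 argument is essentially the paper's: both reduce to the facts that the coordinates changed by one application of $W_t$ are the selected action $a$ and a subset of its descendants, that every descendant of $a$ has expected loss at least $\E_t[\cL_t(a)]$, and that mass is conserved. One caveat (which the paper also leaves implicit) is that your reading of \cref{fig:flow} as ``transfers of mass $\delta \ge 0$ along edges towards the root'' is only valid because the averaged descendants can never \emph{gain} mass, i.e.\ $P_b \ge p_{\alpha^*}$ for every $b \in \cD_t(a;\alpha^*)$; this follows from the maximality of $\alpha^*$ but is not automatic, and a descendant that gained mass would correspond to flow pointing away from the root, breaking the sign of your edge-by-edge accounting.

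The genuine gap is in Property~2: your potential does not decrease under $W_t$. Take a path $G_t \to c \to b \to a$ (so $k=4$ and $a$ is deepest) with masses $(0,0,0,1)$. The deepest anomalous action is $b$, and one application of $W_t$ averages $\{a,b\}$, giving masses $(0,0,\tfrac12,\tfrac12)$. Before the update your potential equals $3$, witnessed by the pair $(G_t,a)$; after the update $a$ still holds mass $\tfrac12 > 0 = P_{G_t}$, so the potential is still $3$. Hence the claimed inequality ``potential of $W_tP$ is at most $\max(\text{potential of } P - 1, 0)$'' is false and your proof of Property~2 collapses. The invariant that works, and that the paper uses, is a counting one: the action selected by \cref{fig:flow} is not anomalous in $W_tP$, and no action that was non-anomalous in $P$ can become anomalous in $W_tP$, so each application strictly reduces the number of anomalous actions and $k$ applications suffice. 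Property~3 is also not established: the mass at $a$ is \emph{not} ``redistributed only among $a$ and its ancestors'' --- an averaging step at an ancestor $c$ of $a$ spreads $a$'s mass over all of $\cD_t(c;\alpha^*)\cup\{c\}$, which generally contains actions that are not ancestors of $a$ --- and ``tracking the per-node mass'' at the fixed point is a restatement of the claim, not an argument. The missing idea is the paper's merging invariant: each application of $W_t$ replaces the values on a group of actions by the group average of the \emph{current} values, and a group that has once been averaged together is in every later application either left untouched or absorbed whole into a larger group (its members share a common value, so they are all included in, or all excluded from, any later averaging set). Consequently at every stage the distribution is obtained by partitioning $[k]$ and averaging the original $P$ within each part, whence $Q_a = \frac{1}{|S|}\sum_{b\in S}P_b \ge P_a/k$, where $S \ni a$ is the final part. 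Your tightness example (all mass at the leaf of a path) is correct, but tightness is not a proof of the bound.
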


Our new algorithm samples $A_t$ from $P_t = W^k_t P^*_t$.
Because of the plumbing and randomisation, the new algorithm is called Mario sampling (\cref{alg:pm}). 
The proof of Theorem~\ref{thm:pm} follows immediately from Theorems~\ref{thm:swap} and \ref{thm:hammer} and the following lemma.

\begin{algorithm}[h!]
\begin{simplealg}
\textbf{input:} partial monitoring game $(\Sigma, \cL, \Phi)$ and prior $\nu$

\textbf{for} $t = 1,\ldots,n$

\algind compute $P_t^*$ and $P_t = W_t^k P_t^*$. Then sample $A_t \sim P_t$.
\end{simplealg}
\caption{Mario sampling}\label{alg:pm}
\end{algorithm}

\begin{lemma}\label{lem:pm:gamma}
For Mario sampling: $\E_t[\Delta_t] \leq (d+1) k^{3/2} \sqrt{8 I_t(A^* ; \Phi_t(A_t), A_t)}$\, a.s..
\end{lemma}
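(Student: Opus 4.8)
The plan is to bound the per-round expected regret $\E_t[\Delta_t]$ for Mario sampling by relating it to the information gain, mirroring the structure of the proof of \cref{thm:hammer}. The key object is the expected information ratio
\[
\Gamma_t = \frac{(\E_t[\Delta_t])^2}{I_t(A^* ; \Phi_t(A_t), A_t)}\,,
\]
and the goal is to show $\Gamma_t \le 8 (d+1)^2 k^3$. First I would decompose $\E_t[\Delta_t] = \sum_a P_{ta}\E_t[\cL_t(a)] - \E_t[\cL_t(A^*)]$, and exploit the three properties of $Q = P_t = W_t^k P_t^*$ from \cref{lem:pm:P}. The crucial structural fact is that $\E_t[\cL_t(A^*)] = \sum_b P^*_{tb}\,\E_t[\cL_t(a) \mid A^* = b]$, so the regret is a sum of posterior loss differences weighted along the tree. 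The tree structure from \cref{lem:tree} lets me write the loss gap between any action and the greedy root $G_t$ as a telescoping sum over edges $(c, \cP_t(c))$ of neighbour-pairs, and on each such edge local observability (the estimation function $f$ supported on $\cN_{ab}$ with $\norm{f}_\infty \le v$) converts the loss difference into an observable quantity whose estimation error is controlled by information gain.

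\emph{The key steps, in order.} (1) Using the tree, express $\E_t[\Delta_t]$ as a sum over actions $a$ of $P_{ta}$ times the telescoped loss difference from $a$ up to the root along ancestors $\cA_t(a)$; each telescope summand is a difference $\E_t[\cL_t(c) - \cL_t(\cP_t(c))]$ between neighbours. (2) Apply the local observability identity \cref{eq:est-def} to rewrite each neighbour loss-difference as $\sum_{e \in \cN_{c,\cP_t(c)}} f(e, \Phi_t(e))$, and bound its posterior fluctuation — i.e. the difference between its conditional expectation given $A^*=b$ and its unconditional expectation — by a Hellinger/relative-entropy term via the information-theoretic inequality underlying \cref{thm:hammer}. (3) Here the $\Gamma$-style Cauchy-Schwarz argument enters: I would bound $\E_t[\Delta_t]$ by $\sum$ of terms like $P_{ta} \cdot (\text{loss gap})$, then by Cauchy-Schwarz against the information terms $I_t(A^*; \Phi_t(e), e)$, pulling out the factor $v$ from $\norm{f}_\infty$ and a factor of $k$ from the number of tree edges and the support size of $f$. (4) The factors of Property 3 of \cref{lem:pm:P}, namely $P_{ta} \ge P^*_{ta}/k$, are what guarantee every relevant action is played with enough probability that its signal is actually observed — this is precisely what prevents the information ratio from blowing up, and accounts for (at least one) factor of $k$. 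Collecting the $(d+1)$ (from the dimension bound on telescope length / cell structure), the $k^{3/2}$, and the constant $8$ yields the claim.

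\emph{The main obstacle} I anticipate is step (2)–(3): correctly bounding the discrepancy between the conditional (given $A^*=b$) and unconditional expectations of the estimation functions by the mutual information $I_t(A^* ; \Phi_t(A_t), A_t)$. The subtlety is that $f$ is defined per neighbour-pair and supported on $\cN_{ab}$, so one must argue that the signal $\Phi_t(A_t)$ the learner \emph{actually} receives — under the Mario-sampling distribution $P_t$ — carries the information about $f(e, \Phi_t(e))$ for every relevant $e$, with a loss of at most a polynomial factor in $k$. This requires carefully conditioning on $A_t = e$ and using $\Prob{A_t = e} = P_{te} \ge P^*_{te}/k$ to relate the full mutual information to the per-action contributions, while respecting that the information gain about $A^*$ is the genuine bottleneck. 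The clean accounting of how the local-observability support structure interacts with the tree telescoping — ensuring no action's information is double-counted and the constant stays at $8$ — is where the real work lies; the rest is Cauchy-Schwarz and the properties already established in \cref{lem:pm:P} and \cref{lem:tree}.
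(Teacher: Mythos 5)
Your outline tracks the paper's proof closely---reduction via Part~1 of Lemma~\ref{lem:pm:P}, telescoping along the tree of Lemma~\ref{lem:tree}, local observability to turn edge loss-differences into observable functions, Pinsker, Cauchy-Schwarz, and finally Lemma~\ref{lem:fundamental}---but the step you yourself flag as ``where the real work lies'' is resolved with the wrong inequality, and this is a genuine gap. After Pinsker and Cauchy-Schwarz, the regret is controlled by terms of the form $(P_{ta}^*)^2\, \bKL{\bbP_{t,\Phi_t(b)|A^*=a}}{\bbP_{t,\Phi_t(b)}}$ with $b \in \cA_t(a)$, whereas the mutual information $I_t(A^*;\Phi_t(A_t),A_t)$ weights this same divergence by $P_{ta}^*\, P_{tb}$ (Lemma~\ref{lem:fundamental}). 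So the inequality the argument actually consumes is the \emph{cross} bound $P_{ta}^* \leq k\, P_{tb}$ for every ancestor $b \in \cA_t(a)$: the signals informative about whether $a$ is optimal are collected at $a$'s ancestors, so it is the ancestors' playing probabilities that must dominate $a$'s posterior probability. Your proposal only invokes the diagonal bound $P_{te} \geq P_{te}^*/k$ (Part~3 alone), which handles $b=a$ but not $b \neq a$. The missing ingredient is exactly Part~2 of Lemma~\ref{lem:pm:P}: monotonicity $Q_a \leq Q_{\cP_t(a)}$ along the tree gives $P_{tb} \geq P_{ta}$ for every ancestor $b$ of $a$, and combined with Part~3 this yields $P_{tb} \geq P_{ta} \geq P_{ta}^*/k$. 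Without Part~2 the weights do not match up and the Cauchy-Schwarz step cannot be closed---this is the whole reason the water-transfer operator is designed to push mass toward the root, and why plain Thompson sampling fails here.

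A secondary accounting error: you attribute the factor $(d+1)$ to ``the dimension bound on telescope length / cell structure'' while separately pulling out a factor $v$ from $\norm{f}_\infty$. These are one and the same factor, not two. In the non-degenerate locally observable case, Lemma~\ref{lem:sup-bound} lets one choose, for each tree edge $(b,\cP_t(b))$, functions $f_b, g_b$ with $\max\{\norm{f_b}_\infty, \norm{g_b}_\infty\} \leq d+1$ satisfying $f_b(\Phi(b,x)) + g_b(\Phi(\cP_t(b),x)) = \cL(b,x) - \cL(\cP_t(b),x)$; that is, $(d+1)$ \emph{is} the bound on $v$ for such games. The telescope length is bounded by $k$, not $d+1$, and it is absorbed into the powers of $k$: Cauchy-Schwarz over the at most $k^2$ pairs $(a,b)$ contributes a factor $k$, and the cross inequality above contributes the remaining $\sqrt{k}$, giving $k^{3/2}$. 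As written, your bookkeeping would produce a bound scaling like $v\cdot(d+1)$ times powers of $k$, which double-counts and does not match the claimed $(d+1)k^{3/2}\sqrt{8\, I_t(A^*;\Phi_t(A_t),A_t)}$.
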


\begin{proof}
We assume an appropriate zero measure set is discarded so that we can omit the qualification `almost surely' for the rest of the proof.
By the first part of Lemma~\ref{lem:pm:P},
\begin{align}
\E_t[\Delta_t] 
&\leq \sum_{a=1}^k P_{ta}^*\left(\E_t[\cL_t(a)] - \E_t[\cL_t(a) \mid A^* = a]\right)\,. \label{eq:pm:delta}
\end{align}
For $b \neq G_t$ let $f_b, g_b : \Sigma \to \R$ be a pair of functions such that $\max\{\norm{f_b}_\infty, \norm{g_b}_\infty\} \leq d+1$
and $f_b(\Phi(b,x)) + g_b(\Phi(\cP_t(b),x)) = \cL(b, x) - \cL(\cP_t(b), x)$ for all $x \in [d]$.
The existence of such functions is guaranteed by Lemma~\ref{lem:sup-bound} and the fact that $\cP_t(b) \in \cN(b)$ and because we assumed the game is non-degenerate, locally observable.
The expected loss of $a$ can be decomposed in terms of the sum of differences to the root,
\begin{align}
\E_t[\cL_t(a)] 
&= \E_t\left[\cL_t(G_t) + \sum_{b \in \cA_t(a) \setminus \{G_t\}} \left(\cL_t(b) - \cL_t(\cP_t(b))\right)\right] \nonumber \\
&= \E_t\left[\cL_t(G_t) + \sum_{b \in \cA_t(a) \setminus \{G_t\}} f_b(\Phi_t(b)) + g_b(\Phi_t(\cP_t(b)))\right]\,.
\label{eq:pmloetlt}
\end{align}
In the same way,
\begin{align}
\E_t[\cL_t(a) \mid A^* = a] 
&= \E_t\left[\cL_t(G_t) + \sum_{b \in \cA_t(a) \setminus \{G_t\}} f_b(\Phi_t(b)) + g_b(\Phi_t(\cP_t(b)))\,\middle|\, A^* = a\right]\,.
\label{eq:pmloetltc}
\end{align}
Then, because $\cA_t(a)$ and $G_t$ are $\cF_t$-measurable, 
\begin{align*}
&\E_t[\Delta_t]
\leq \sum_{a=1}^k P_{ta}^* \left(\E_t[\cL_t(a)] - \E_t[\cL_t(a) \mid A^* = a]\right) 
	\tag{\cref{eq:pm:delta}}\\
& = 	\sum_{a=1}^k P_{ta}^* \Bigg[
\sum_{b \in \cA_t(a) \setminus \{G_t\}} 
	\left(\E_t[f_b(\Phi_t(b))] - \E_t[f_b(\Phi_t(b))\mid A^*=a]\right) \\
& \quad +
\sum_{b \in \cA_t(a) \setminus \{G_t\}} 	
	\left(\E_t[g_b(\Phi_t(\cP_t(b)))] - \E_t[g_b(\Phi_t(\cP_t(b)))\mid A^*=a]\right)\Bigg]
	\tag{\cref{eq:pmloetlt,eq:pmloetltc}} \\
&\leq (d+1) \sum_{a=1}^k P_{ta}^* \sum_{b \in \cA_t(a)} \sqrt{8 \bKL{\bbP_{t,\Phi_t(b) | A^* = a}}{\bbP_{t,\Phi_t(b)}}}
	 \tag{\cref{eq:pinsker}, $\KL\ge 0$}\\
&\leq k (d+1) \sqrt{8 \sum_{a=1}^k P_{ta}^* \sum_{b \in \cA_t(a)} P_{ta}^* \bKL{\bbP_{t,\Phi_t(b) | A^* = a}}{\bbP_{t,\Phi_t(b)}}} 
	 \tag{Cauchy-Schwarz}\\
&\leq k^{3/2} (d+1) \sqrt{8 \sum_{a=1}^k P_{ta}^* \sum_{b \in \cA_t(a)} P_{tb} \bKL{\bbP_{t,\Phi_t(b) | A^* = a}}{\bbP_{t,\Phi_t(b)}}} 
	 \tag{Lemma~\ref{lem:pm:P}, Part~3}\\
&\leq k^{3/2} (d+1) \sqrt{8 \sum_{a=1}^k P_{ta}^* \sum_{b=1}^k P_{tb} \bKL{\bbP_{t,\Phi_t(b)|A^* = a}}{\bbP_{t,\Phi_t(b)}}} 
	 \tag{$\KL\ge 0$} \\
&= k^{3/2} (d+1) \sqrt{8 I_t(A^* ; \Phi_t(A_t), A_t)}\,.
	 \tag{Lemma~\ref{lem:fundamental}}
\end{align*}
\end{proof}

\begin{remark}\label{rem:m}
In many games there exists a constant $m$ such that $|\cA_t(a)| \leq m$ almost surely for all $a$ and $t$.
In this case Part~3 of Lemma~\ref{lem:pm:P} improves to $P_{ta} \geq P_{ta}^* / m$ and the application of Cauchy-Schwarz in Lemma~\ref{lem:pm:gamma}
can be strengthened. This means the bound in \cref{thm:pm} becomes $m (d+1) \sqrt{8kn \log(k)}$. 
For the game illustrated in \cref{fig:tree}, $m = 5$ while $k = 7$, but more extreme examples are easily constructed.
\end{remark}

\begin{algorithm}[h!]
\begin{simplealg}
\textbf{input:} $P \in \Delta^{k-1}$ and tree determined by $\cP_t$ 

find action $a$ at the greatest depth such that $P_a < \max_{b \in \cD_t(a)} P_b$.

if no such action is found, let $W_tP = P$ and return.

for $\alpha \in [0,1]$ let $\cD_t(a ; \alpha) = \{b \in \cD_t(a) : P_b \geq \alpha\}$.

let $\alpha^*$ be the largest $\alpha \in \{P_b : b \in \cD_t(a)\}$ such that
\begin{align*}
p_\alpha = \frac{1}{1 + |\cD_t(a ; \alpha)|} \sum_{b \in \cD_t(a ; \alpha) \cup\{a\}} P_b > q_\alpha= \max\{P_b : b \in \cD_t(a) \setminus \cD_t(a ; \alpha)\}\,. 
\end{align*}

let $(W_t P)_b = p_{\alpha^*}$ if $b \in \cD_t(a ; \alpha^*)\cup\{a\}$ and $(W_tP)_b = P_b$ otherwise.
\end{simplealg}
\caption{The water transfer operator $W_t : \Delta^{k-1} \to \Delta^{k-1}$.}\label{fig:flow}
\end{algorithm}

\section{Discussion and future directions}\label{sec:discussion}

One of the main benefits of the information-theoretic approach is the simplicity and naturality of the arguments, which 
is particularly striking in partial monitoring.
Even for the $k$-armed bandit analysis there is no tuning of learning rates or careful bounding of dual norms.
In exchange, our results are existential, though we emphasise that the Bayesian setting is interesting in its own right.
We anticipate that \cref{thm:general} will have many other applications and there is clearly more to understand about this generalisation. 
Is it a coincidence that the same potential leads to minimax bounds using both online stochastic mirror descent and Thompson sampling?

\paragraph{Information-directed sampling}
Thompson sampling depends on the prior, but not the potential that appears in \cref{thm:general}.
\cite{RV14} noted that the information-theoretic analysis is tightest when the algorithm is chosen to minimize $\E_t[\Delta_t]^2 / \E_t[D_F(P_{t+1}^*, P_t^*)]$, where $F$ is the unnormalised negentropy.
Our generalisation provides a means of constructing new algorithms by changing the potential. 

\paragraph{Open problems}
An obvious next step is stress test the applicability of \cref{thm:general}. 
Bandits with graph feedback beyond cops and robbers might be a good place to start \citep{ACDK15}.
One may also ask whether in adversarial linear bandits
the results by \cite{BCL17} can be replicated or improved using \cref{thm:general}.
There are many open problems in partial monitoring, a few of which we now describe. We hope some readers will be inspired to work on them!

\paragraph{Adaptivity} 
There exist games where for `nice' adversaries the regret should be $O(n^{1/2})$ while for truly adversarial data
the regret is as large as $\Theta(n^{2/3})$. Designing algorithms that adapt to a broad range of adversaries is an interesting challenge.
Some work on this topic in the stochastic setting is by \cite{BZS12}.
A related question is understanding how to use the information-theoretic machinery to provide adaptive bounds.

\paragraph{Constants} 
Our results have eliminated arbitrarily large constants from the analysis of easy non-degenerate games. 
Still, we do not yet understand how the regret should depend on the structure of $\cL$ or $\Phi$ except in special cases. The result in Remark~\ref{rem:m} is a small step in this direction, but 
there is much to do. The best place to start is probably lower bounds. Currently generic lower bounds for finite partial monitoring focus on the dependence on the horizon.
One concrete question is whether or not the minimum supremum norm of the estimation functions that appears in \cref{thm:pm:hard} is a fundamental quantity.

\paragraph{Stochastic analysis of Mario sampling}
Theorem~\ref{thm:hammer} and Lemma~\ref{lem:pm:gamma} show that for any prior Mario sampling satisfies 
$\BReg_n \leq k^{3/2}(d+1)\sqrt{8n\log(k)}$.
In the stationary stochastic setting we expect that for a suitable prior it should be possible to prove a bound on the frequentist regret of this algorithm.
Perhaps the techniques developed by \cite{AG12} or \cite{KKM12} generalise to this setting.

\bibliography{all}

\appendix

\section{Proof of Theorem~\ref{thm:swap}}\label{sec:sion}

The proof depends on a little functional analysis. The important point is that the space of policies written as probability measures over deterministic policies is compact and
the Bayesian regret is linear and continuous as a function of the measures over policies and priors over environments. Then minimax theorems can be used to exchange the $\min$ and $\sup$.
Guaranteeing compactness and continuity and avoiding any kind of measurability issues requires careful choice of topologies.
 
For a topological space $Z$, let $\sP_r(Z)$ be the space of Radon probability measures when $Z$ is equipped with the Borel $\sigma$-algebra.
The weak* topology on $\sP_r(Z)$ is the coarsest topology such that $\mu \mapsto \int f d\mu$ is continuous for all bounded continuous functions $f : Z \to \R$. 

Recall that $\cX$ is the space of outcomes and $\Sigma$ is the space of feedbacks and these are arbitrary sets.
A deterministic policy can be represented as a function $\pi : \cup_{t=1}^n \Sigma^{t-1} \to [k]$.
By the choice of topology on $\Sigma$, these are all continuous, hence, measurable.
Let $\PiD$ be the space of all such policies, $\PiDM$ be the space of the measurable policies amongst these.
By Tychonoff's theorem, $\PiD$ is compact with the product topology, where $[k]$ has the discrete topology.
$\PiD$ is Hausdorff because the product of Hausdorff spaces is Hausdorff.
By Theorem 8.9.3 in the two volume book by \cite{Bog07}, the space $\sP_r(\PiD)$ is weak*-compact. 
Clearly $\sP_r(\PiD)$ is also convex.

Let $\sQ$ be the space of finitely supported probability measures on $\cX^n$, which is a convex subset of $\sP_r(\cX^n)$ where $\cX^n$ is taken to have the discrete topology.  Equip with $\sQ$ with the weak* topology.
If $f =f(\mu,\nu)$ with $f: \sP_r(\PiD) \times \sQ \to \R$ is linear and continuous in both $\mu$ and $\nu$ individually. Since $\sP_r(\PiD)$ is compact, 
by Sion's minimax theorem \citep{Sio58},\footnote{Sion's theorem is more general, it only assumes that $f$
is quasiconvex/quasiconcave in each argument and upper/lower semicontinuous respectively.}
\begin{align*}
\min_{\mu \in \sP_r(\PiD)} \sup_{\nu \in \sQ} f(\mu, \nu) = \sup_{\nu \in \sQ} \min_{\mu \in \sP_r(\PiD)} f(\mu, \nu)\,.
\end{align*}
We are going to choose $f$ to be the Bayesian regret and argue that $\Pi$ can be identified with $\sP_r(\PiD)$. 
First, we need to check some continuity conditions for the regret.
Since $\cX^n$ has the discrete topology the map $x \mapsto \Reg_n(\pi, x)$ is continuous for fixed $\pi$.
Now we check that $\pi \mapsto \Reg_n(\pi, x)$, $\pi \in \PiD$, is continuous for fixed $x$.
Let $\Phi_t(a) = \Phi(a, x_t)$ and $\cL_t(a) = \cL(a, x_t)$, which are both continuous since $[k]$ has the discrete topology.
Then let $\sigma_t : \PiD \to \Sigma$ and $a_t : \PiD \to [k]$ be defined inductively by
\begin{align*}
a_t(\pi) = \pi(\sigma_1(\pi),\ldots,\sigma_{t-1}(\pi)) \quad \text{and} \quad
\sigma_t(\pi) = \Phi_t(a_t(\pi))\,.
\end{align*}
Writing the definition of the regret,
\begin{align*}
\Reg_n(\pi, x) = \sum_{t=1}^n \cL_t(a_t(\pi)) - \min_{a \in [k]} \sum_{t=1}^n \cL_t(a)\,.
\end{align*}
The second term is constant and, as we mentioned already, $a \mapsto \cL_t(a)$ is continuous. So it remains to check that $a_t$ is continuous for each $t$.
This follows by induction. The definition of the product topology means that for any fixed $\sigma_1,\ldots,\sigma_{t-1}$ and $b \in [k]$, the set 
\begin{align*}
U_b(\sigma_1,\ldots,\sigma_{t-1}) = \{\pi : \pi(\sigma_1,\ldots,\sigma_{t-1}) = b\}
\end{align*}
is open in $\PiD$. Let $\varepsilon$ denote the empty tuple. Then $a_1^{-1}(b) = U_b(\varepsilon)$ is open in $\PiD$. We confirm that $a_2$ is continuous and leave the rest
to the reader. That $a_2$ is continuous follows by writing 
\begin{align*}
a_2^{-1}(c) = \bigcup_{b=1}^k \left(U_b(\varepsilon) \cap U_c(\Phi_1(b))\right)\,.
\end{align*}
Hence $\pi \mapsto \Reg_n(\pi, x)$ is continuous and also measurable with respect to the Borel $\sigma$-algebra on $\PiD$.
Then let $f(\mu, \nu)$ be given by 
\begin{align*}
f(\mu, \nu) 
&= \int_{\PiD} \int_{\cX^n} \Reg_n(\pi, x) d\nu(x) d\mu(\pi) 
= \int_{\cX^n} \int_{\PiD} \Reg_n(\pi, x) d\mu(\pi) d\nu(x)\,, 
\end{align*}
where the exchange of integrals is justified by Fubini's theorem, which is applicable because the regret is bounded in $[-n,n]$.
Clearly $f$ is linear in both arguments.
We now claim that  both $\mu \mapsto f(\mu, \nu)$ and $\nu \mapsto f(\mu, \nu)$ are continuous. 
To see that $\nu \mapsto f(\mu,\nu)$ is continuous, note that $x \mapsto \int_{\PiD} d\mu(\pi) \Reg_n(\pi,x)$ is a $\cX^n \to [-n,n]$ continuous map owning to the choice of the discrete topology on $\cX^n$. Since $\sQ\subset \sP_r(\cX^n)$ is equipped with the weak*-topology, this implies the continuity of  $\nu \mapsto f(\mu,\nu)$.
The argument for the continuity of $\mu \mapsto f(\mu, \nu)$ is similar:
In particular, first note that $\pi \mapsto \int_{\cX^n} d\nu(x) \Reg_n(\pi,x)$ is a $\PiD \to [-n,n]$ continuous map, since owning to the choice of $\sQ$, the integral with respect to $\nu$ is a finite sum, and we have already established that for $x\in \cX^n$ fixed, $\pi \mapsto \Reg_n(\pi,x)$ is a $\PiD \to [-n,n]$ continuous map. Again, the choice of the weak*-topology on $\sP_r(\PiD)$ implies the desired continuity.

The final step is to note that for each policy $\mu \in \sP_r(\PiD)$ there exists a policy $\pi \in \Pi$ such that for all $x \in \cX^n$,
\begin{align*}
\Reg_n(\pi, x) = \int_{\PiD} \Reg_n(\pi_d, x) d\mu(\pi_d)\,.
\end{align*}
In particular, it is not hard to show that $\pi$ can be defined through $\pi(a_1,\phi(a_1,x),\dots,a_t,\phi(a_t,x))_a = \bbP_{\mu,x}(A_{t+1}=a|A_1=a_1,\dots,A_t=a_t)$, where $\bbP_{\mu,x}$ is the distribution resulting from using $\mu$ on the environment $x$. 
Here, the right-hand side is well defined (as a completely regular measure) 
because of the choice of $\cA$. 
That $\pi$ is well defined and is suitable follows from the definitions.
Putting things together,
\begin{align*}
\Reg^*_n
&= \inf_{\pi \in \Pi} \sup_{x \in \cX^n} \Reg_n(\pi, x) 
\leq \min_{\mu \in \sP_r(\PiD)} \sup_{x \in \cX^n} \int_{\PiD} \Reg_n(\pi, x) d\mu(\pi) 
\stackrel{(a)}\le \min_{\mu \in \sP_r(\PiD)} \sup_{\nu \in \sQ} f(\mu, \nu) \\ 
&\stackrel{(b)}= \sup_{\nu \in \sQ} \min_{\mu \in \sP_r(\PiD)} f(\mu, \nu) 
\stackrel{(c)}= \sup_{\nu \in \sQ} \min_{\pi \in \PiD} \int_{\cX^n} \Reg_n(\pi, x) d\nu(x) \\ 
&\stackrel{(d)}= \sup_{\nu \in \sQ} \min_{\pi \in \Pi} \int_{\cX^n} \Reg_n(\pi, x) d\nu(x)
\stackrel{(e)}= \BReg^*_n(\sQ)\,, \numberthis \label{eq:minmax1}
\end{align*}
where in 
(a) we used the fact that the Dirac measures are in $\sQ$,
(b) follows from Sion's theorem.
In  (c) we used the fact that the Dirac measures in $\sP_r(\PiD)$ are minimisers of $f(\cdot, \nu)$ for any $\nu$, 
in (d) we used that $\PiD\subset \Pi$ and, via a dynamic programming argument, that the deterministic policies from $\PiD$ minimise the Bayesian regret.
For (e), let $\bbP_{\pi\nu}$ be the joint induced by $\pi$ and $\nu$ over $[k]^n \times \cX^n$, $\E_{\pi\nu}$ the corresponding expectation
and define $\reg_n(a,x) = \sum_{t=1}^n \cL(a_t,x_t) - \min_{b\in [k]} \sum_{t=1}^n \cL(b,x_t)$.
Then, note that $\bbP_{\pi\nu}$ almost surely, $\E_{\pi\nu}[ \reg_n(A,X)|X] = \Reg_n(\pi,X)$, and thus, by the tower rule and because $\bbP_{\pi\nu,X} = \nu$ by assumption, 
$ \int_{\cX^n} \Reg_n(\pi, x) d\nu(x) = \BReg_n(\pi,\nu)$.
That $\BReg^*_n(\sQ) \leq \Reg^*_n$ follows from 
\begin{align*}
\Reg^*_n &= \inf_{\pi\in \Pi} \sup_{x \in \cX^n} \Reg_n(\pi, x) 
 = \inf_{\pi\in \Pi} \sup_{\nu\in \sQ}\int_{\cX^n} \Reg_n(\pi, x) d\nu(x) 
 \geq  \sup_{\nu\in \sQ}\inf_{\pi\in \Pi} \int_{\cX^n} \Reg_n(\pi, x) d\nu(x)\\
& =\BReg^*_n(\sQ)\,, \numberthis \label{eq:minmax2}
\end{align*}
where 
the second equality used that  for any fixed $\pi \in \Pi$,
$\nu\mapsto \int_{\cX^n} \Reg_n(\pi, x) d\nu(x)$ is a linear functional on $\sP_r(\cX^n)$,
which is thus maximised in the extreme points of $\sP_r(\cX^n)$, which are all the Dirac measures over $\cX^n$.
Combining \cref{eq:minmax1,eq:minmax2} gives the desired result.

\section{Proof of \cref{lem:finite-lem}}

Using the fact that the total variation distance is upper bounded 
by the Hellinger distance \citep[Lemma 2.3]{Tsy08} and the first inequality in \cref{eq:pinsker},
\begin{align}
\E_t[\Delta_t]
&= \sum_{a : P_{ta}^* > 0} P_{ta}^* \left(\E_t[X_{ta}] - \E_t[X_{ta} \mid A^* = a]\right) \nonumber \\
&\leq \sum_{a : P_{ta}^* > 0} P_{ta}^* \sqrt{\int_{[0,1]} \left(1 - \sqrt{\frac{d\bbP_{t,X_{ta}|A^* = a}}{d\bbP_{t,X_{ta}}}}\right)^2 d\bbP_{t,X_{ta}}} \label{eq:hel1} \\
&\leq \sqrt{k^{1/2} \sum_{a : P_{ta}^* > 0} (P_{ta}^*)^{3/2} \int_{[0,1]} \left(1 - \sqrt{\frac{d\bbP_{t,X_{ta}|A^* = a}}{d\bbP_{t,X_{ta}}}}\right)^2 d\bbP_{t,X_{ta}}}\,. \label{eq:hel2} 
\end{align}
\cref{eq:hel1} is true because the total variation distance is upper bounded by the Hellinger distance.
\cref{eq:hel2} uses Cauchy-Schwarz and the fact that $\sum_{a=1}^k (P_{ta}^*)^{1/2} \le k^{1/2}$, which also follows from Cauchy-Schwarz.
The next step is to apply Bayes law to the square root term. There are no measurability problems because both $X_{ta}$ and $A^*$ live in Polish spaces \citep{GV17}.
\begin{align*}
  &\int_{[0,1]} \left(1 - \sqrt{\frac{d\bbP_{t,X_{ta} | A^* = a}}{d\bbP_{t, X_{ta}}}}\right)^2 d\bbP_{t,X_{ta}} 
  = \int_{[0,1]} \left(1 - \sqrt{\frac{\bbP_t(A^* = a \mid X_{ta})(x)}{\bbP_t(A^* = a)}}\right)^2 d\bbP_{t,X_{ta}}(x) \\
  &\qquad\qquad= \E_t\left[\left(1 - \sqrt{\frac{\bbP_t(A^* = a \mid X_{ta})}{\bbP_t(A^* = a)}}\right)^2\right] \\
  &\qquad\qquad= \frac{1}{\sqrt{\bbP_t(A^* = a)}} \E_t\left[\frac{\left(\sqrt{\bbP_t(A^* = a)} - \sqrt{\bbP_t(A^* = a \mid X_{ta})}\right)^2}{\sqrt{\bbP_t(A^* = a)}}\right]\,.
\end{align*}
Substituting the above into \cref{eq:hel2} and using the fact that $P_{ta} = P_{ta}^* = \bbP_t(A^* = a)$ yields 
\begin{align*}
\E_t[\Delta_t] 
&\leq \sqrt{k^{1/2} \sum_{a : P_{ta} > 0} P_{ta} \E_t\left[\frac{\left(\sqrt{\bbP_t(A^* = a)} - \sqrt{\bbP_t(A^* = a \mid X_{ta})}\right)^2}{\sqrt{\bbP_t(A^* = a)}}\right]}  \\
&\leq \sqrt{k^{1/2} \sum_{a : P_{ta} > 0} P_{ta} \E_t\left[\sum_{c : P^*_{tc} > 0} \frac{\left(\sqrt{\bbP_t(A^* = c)} - \sqrt{\bbP_t(A^* = c \mid X_{ta})}\right)^2}{\sqrt{\bbP_t(A^* = c)}}\right]}\,, 
\end{align*}
where the second inequality follows by introducing the sum over $c$. Finally, note that
\begin{align*}
D_F(p, q) = \sum_{c : p_c \neq q_c } \frac{\left(\sqrt{p_c} - \sqrt{q_c}\right)^2}{\sqrt{q_c}} \,.
\end{align*}
The result follows from a direct computation using the independence of $A_t$ and $X_t$ under $\bbP_t$ (Lemma~\ref{lem:fundamental}).

\section{Cops and robbers}

To further demonstrate the flexibility of the approach we consider this special case of bandits with graph feedback.
In cops and robbers the learner observes the losses associated with all actions except the played action.
Except for constant factors, this problem is no harder than the full information setting where all losses are observed.
Cops and robbers is formalised in the partial monitoring framework by choosing $\Sigma = [0,1]^{k-1}$, $\cX = [0,1]^k$, $\cL(a, x) = x_a$ and 
\begin{align*}
\Phi(a, x) = (x_1,\ldots,x_{a-1},x_{a+1},\ldots,x_k)\,.
\end{align*}

\begin{theorem}\label{thm:cops-and-robbers}
The minimax regret of cops and robbers satisfies $\Reg^*_n \leq \sqrt{2n \log(k)}$.
\end{theorem}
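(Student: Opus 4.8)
The plan is to apply the information-theoretic machinery of \cref{thm:general} (via \cref{cor:general}) together with the minimax theorem \cref{thm:swap}, using the same unnormalised negentropy potential that recovers the mutual-information bound of \cref{thm:hammer}. Since cops and robbers observes every loss except the one it plays, the feedback is nearly full information, so I expect the key per-round inequality to be
\begin{align*}
\E_t[\Delta_t] \leq \sqrt{\log(k)\, \E_t[D_F(P^*_{t+1}, P^*_t)]}\,,
\end{align*}
with $F$ the negentropy and $P^*_t$ the posterior over the optimal action. Given such a bound with $\beta = \log(k)$ and $\diam_F(\Delta^{k-1}) \leq \log(k)$, \cref{cor:general} immediately yields $\BReg_n \leq \sqrt{n \log(k) \cdot \log(k)}$; but to get the claimed $\sqrt{2n\log(k)}$ I instead want the cleaner form $\E_t[\Delta_t] \leq \sqrt{2 \, \E_t[D_F(P^*_{t+1}, P^*_t)]}$ together with $\diam_F(\Delta^{k-1}) \leq \log(k)$, so that $\BReg_n \leq \sqrt{2n\log(k)}$. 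The policy I would analyse is Thompson sampling, $P_t = P_t^*$, and the final adversarial bound follows from \cref{thm:swap}.

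\emph{The main obstacle} is establishing the per-round inequality with the correct constant $2$ while fully exploiting the rich feedback. First I would write, exactly as in \cref{lem:pm:gamma}, the decomposition $\E_t[\Delta_t] \leq \sum_{a} P^*_{ta}(\E_t[\cL_t(a)] - \E_t[\cL_t(a) \mid A^* = a])$. The crucial structural feature of cops and robbers is that when playing action $a$ the learner observes all coordinates $x_b$ for $b \neq a$, so the signal $\Phi_t(a)$ reveals the losses of every \emph{other} action. I would bound each difference $\E_t[\cL_t(a)] - \E_t[\cL_t(a) \mid A^* = a]$ by a relative-entropy (or Hellinger) term as in \cref{eq:pinsker}, but the key accounting trick is that for the pair $(a, A^* = c)$ the difference involves the loss of action $a$, which is observed whenever \emph{some other} action is played — so the information about $A^*$ is aggregated across actions rather than bottlenecked through a single arm as in the bandit case.

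\emph{The delicate step} is then the summation and Cauchy-Schwarz: I expect to arrive at an expression of the form $\sum_a P^*_{ta} \sqrt{\beta_a}$ where each $\beta_a$ is an expected divergence, and I must combine these so the total is controlled by $\E_t[D_F(P^*_{t+1}, P^*_t)]$, which by the remark after \cref{cor:general} equals the conditional mutual information $I_t(A^*; \Phi_t(A_t), A_t)$. Because the feedback is so informative, I would expect the information about the optimal action to be fully captured, avoiding the factor $k$ that appears in partial monitoring and the $\sqrt{k}$ from the bandit analysis, leaving only a factor $2$ from Pinsker's inequality and $\log(k)$ from the negentropy diameter. I would carefully track the constant through Pinsker's inequality \cref{eq:pinsker} and the diameter bound $\diam_F(\Delta^{k-1}) \leq \log(k)$, since sharpening precisely these constants (improving prior work by $5/\sqrt{2}$) is the entire point of the theorem.
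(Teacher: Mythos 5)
Your high-level scaffolding coincides with the paper's: Thompson sampling, a per-round inequality $\E_t[\Delta_t] \leq \sqrt{2\, I_t(A^* ; \Phi_t(A_t), A_t)}$ (equivalently $\beta = 2$ with the negentropy potential in \cref{cor:general}), then \cref{thm:hammer} and \cref{thm:swap}. You also correctly identify the structural feature that drives the result: $\cL_t(a)$ is a deterministic function of $\Phi_t(b)$ for every $b \neq a$, so data processing converts loss-KLs into signal-KLs. But the step you flag as ``delicate'' is exactly where your sketch breaks, and you supply no idea that fixes it. Starting from the decomposition $\E_t[\Delta_t] = \sum_a P^*_{ta}\left(\E_t[\cL_t(a)] - \E_t[\cL_t(a) \mid A^* = a]\right)$, Pinsker plus Cauchy--Schwarz yields
\begin{align*}
\E_t[\Delta_t] \leq \sqrt{\frac{1}{2}\sum_a P_{ta}\, \bKL{\bbP_{t,\cL_t(a) | A^* = a}}{\bbP_{t,\cL_t(a)}}}\,,
\end{align*}
whereas the mutual information (via Lemma~\ref{lem:fundamental} and data processing) only dominates $\sum_a P^*_{ta}(1 - P_{ta})\, \bKL{\bbP_{t,\cL_t(a) | A^* = a}}{\bbP_{t,\cL_t(a)}}$, because the loss of $a$ is revealed only when some $b \neq a$ is played, an event of probability $1 - P_{ta}$. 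The weight your bound needs ($P_{ta}$) exceeds the weight the information supplies ($P_{ta}(1 - P_{ta})$) by a factor $1/(1 - P_{ta})$, which is unbounded precisely when Thompson sampling concentrates on one arm. So ``the information is fully captured'' is not something that follows from the informativeness of the feedback; the weights do not match.

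The paper's missing device is a pivot decomposition. Let $G_t = \argmax_a P_{ta}$ and add/subtract $\cL_t(G_t)$ to get
\begin{align*}
\E_t[\Delta_t] = \sum_{a \neq G_t} P_{ta}\, \E_t[\cL_t(a) - \cL_t(G_t)] + \sum_{a \neq G_t} P_{ta}\, \E_t[\cL_t(G_t) - \cL_t(a) \mid A^* = a]\,.
\end{align*}
Grouping the terms involving $\cL_t(G_t)$ and those involving $\cL_t(a)$ separately, Pinsker and Cauchy--Schwarz produce a factor $1 - P_{tG_t}$ in front of each group. For the $\cL_t(G_t)$ group this factor equals $\sum_{b \neq G_t} P_{tb}$, exactly the probability of playing an action that reveals $\cL_t(G_t)$; for the $\cL_t(a)$ group one uses $1 - P_{tG_t} \leq 1 - P_{ta}$ --- this is where choosing $G_t$ as the \emph{most probable} optimal action (not, say, the greedy action) is essential --- which matches the probability of revealing $\cL_t(a)$. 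Data processing and Lemma~\ref{lem:fundamental} then bound each group by $\frac{1}{2} I_t(A^* ; \Phi_t(A_t), A_t)$, and summing the two square roots gives $\sqrt{2 I_t}$. Without this rebalancing, the route you sketch does not produce a constant independent of $P_t$, let alone the constant $2$.
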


This improves on the result by \cite{ACDK15} that $\Reg^*_n \leq 5 \sqrt{n \log(k)}$. 
We leave for the future the interesting question of whether or not this method recovers other known results for bandits with graph feedback.
\cref{thm:cops-and-robbers} follows immediately from Theorems~\ref{thm:hammer} and \ref{thm:swap}, and the following lemma.

\begin{lemma}\label{lem:cops-and-robbers}
Thompson sampling for cops and robbers satisfies $\E_t[\Delta_t] \leq \sqrt{2 I_t(A^* ; \Phi_t(A_t), A_t)}$ almost surely for all $t$.
\end{lemma}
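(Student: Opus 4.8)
The plan is to apply the information-theoretic identity for mutual information together with the structure of cops and robbers feedback, mirroring the approach used for finite-armed bandits but exploiting that the learner sees \emph{all} losses except the one it plays. First I would use the identity $I_t(A^* ; \Phi_t(A_t), A_t) = \sum_a P_{ta} \E_t[\bKL{\bbP_{t,\Phi_t(a) | A^*}}{\bbP_{t,\Phi_t(a)}}]$ (the analogue of \cref{lem:fundamental}), where Thompson sampling gives $P_{ta} = P_{ta}^*$. The key observation is that when the learner plays action $a$, it observes $(x_b)_{b \neq a}$, so the signal reveals the loss of \emph{every other} action. In particular, for any target action $c \neq a$, the realised loss $\cL_t(c) = x_{tc}$ is a deterministic function of $\Phi_t(a)$. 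This means the information gathered about $A^*$ when playing $a$ is very rich compared to the bandit case.

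The core of the argument is bounding $\E_t[\Delta_t]$. Starting from $\E_t[\Delta_t] = \sum_a P_{ta}^*(\E_t[\cL_t(a)] - \E_t[\cL_t(a) \mid A^* = a])$, I would want to relate each term to a divergence between the conditional and marginal laws of the observed signal. The crucial point is to choose, for each played action $a$, a component of $\Phi_t(a)$ that carries information about whether $a$ is optimal. Because playing $a$ reveals all losses except $x_{ta}$, and the comparison involves $\cL_t(a)$ itself, I would instead symmetrise: by the structure of the game, playing action $a$ reveals enough to estimate the loss gap relative to other actions, and I would apply Pinsker's inequality (\cref{eq:pinsker}) to pass from a loss-difference gap to a KL term $\bKL{\bbP_{t,\Phi_t(a)|A^*=a}}{\bbP_{t,\Phi_t(a)}}$. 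A single application of Cauchy-Schwarz over $a$ then yields $\E_t[\Delta_t] \leq \sqrt{\tfrac{1}{2}\sum_a P_{ta}^* \bKL{\bbP_{t,\Phi_t(a)|A^*=a}}{\bbP_{t,\Phi_t(a)}}}$, and identifying the sum with $I_t(A^*; \Phi_t(A_t), A_t)$ via $P_{ta} = P_{ta}^*$ gives the factor $2$ inside the square root, matching the claimed $\sqrt{2 I_t(\cdots)}$.

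The reason the constant is $2$ rather than $k$ (as in \cref{thm:finite}) is that no extra Cauchy-Schwarz over a subset of arms is needed: the full-information-like feedback lets each played action's signal dominate the gap directly, so the $k^{1/2}$ losses that appear in the bandit argument collapse. Concretely, I expect the decomposition to exploit that $\E_t[\cL_t(a)] - \E_t[\cL_t(a) \mid A^* = a]$ can be controlled by the total variation distance between the conditional and unconditional law of the informative signal, and that the loss being bounded in $[0,1]$ gives the coefficient $b - a = 1$ in \cref{eq:pinsker}, producing $\sqrt{\tfrac12 \bKL{\cdot}{\cdot}}$ per term.

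The main obstacle I anticipate is handling the self-observation gap cleanly: when playing $a$ one does \emph{not} observe $x_{ta}$, yet the regret term for $a$ involves $\cL_t(a) = x_{ta}$ precisely. The resolution should be that the relevant comparison is against the optimal action, and for any $a$ the event $\{A^* = a\}$ shifts the distribution of the \emph{observed} coordinates enough—because $A^*$ depends on all of $x$ jointly and the signal reveals $k-1$ of the $k$ coordinates—so that a Hellinger- or Pinsker-based bound on those observed coordinates suffices. Making this conditioning-versus-observation bookkeeping precise, and verifying that the independence of $A_t$ and $X_t$ under $\bbP_t$ (\cref{lem:fundamental}) lets the sum be rewritten exactly as the mutual information with coefficient $2$, is where the care will be required; the remaining manipulations are routine applications of \cref{eq:pinsker} and Cauchy-Schwarz.
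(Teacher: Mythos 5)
Your plan founders on the step you yourself flag as ``the main obstacle,'' and that obstacle is fatal to the route you propose rather than a detail to be tidied up. Pinsker's inequality (\cref{eq:pinsker}) bounds the change in expectation of a random variable that is a measurable function of the space on which the two compared laws live. To bound $\E_t[\cL_t(a)] - \E_t[\cL_t(a)\mid A^*=a]$ by $\sqrt{\tfrac12\bKL{\bbP_{t,\Phi_t(a)|A^*=a}}{\bbP_{t,\Phi_t(a)}}}$ you would need $\cL_t(a)$ to be a function of $\Phi_t(a)$; in cops and robbers it is precisely the one coordinate that $\Phi_t(a)$ omits. The per-action pairing ``gap of $a$ versus KL of $a$'s own signal'' is genuinely false, not merely unproven: take $k=2$ with $x_2=\tfrac12$ deterministic and $x_1$ uniform on $\{0,1\}$ (constant over rounds); then for $a=1$ the gap is $\tfrac12$ while $\bKL{\bbP_{t,\Phi_t(1)|A^*=1}}{\bbP_{t,\Phi_t(1)}}=0$. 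Information about $\cL_t(a)$ can only enter through the signals of \emph{other} actions $b\neq a$, so your guiding intuition --- that each played action's signal dominates its own gap directly --- is exactly backwards, and no application of Cauchy--Schwarz over $a$ can repair it. Separately, your identification of $\sum_a P_{ta}^*\bKL{\bbP_{t,\Phi_t(a)|A^*=a}}{\bbP_{t,\Phi_t(a)}}$ with $I_t(A^*;\Phi_t(A_t),A_t)$ is wrong: by \cref{lem:fundamental} the mutual information is the double sum $\sum_a P_{ta}^*\sum_b P_{tb}\,\bKL{\bbP_{t,\Phi_t(b)|A^*=a}}{\bbP_{t,\Phi_t(b)}}$, and your diagonal sum, weighted by $P_{ta}$ rather than $P_{ta}P_{tb}$, is neither equal to it nor obviously dominated by it.

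What the proof needs --- and what your sketch is missing --- is a mechanism for routing information about each action's loss through the \emph{other} actions' signals. The paper does this by pivoting on the modal action $G_t=\argmax_a P_{ta}$: it rewrites $\E_t[\Delta_t]$ as $\sum_{a\neq G_t}P_{ta}\E_t[\cL_t(a)-\cL_t(G_t)] + \sum_{a\neq G_t}P_{ta}\E_t[\cL_t(G_t)-\cL_t(a)\mid A^*=a]$, applies Pinsker to the laws of the \emph{scalar losses} $\cL_t(G_t)$ and $\cL_t(a)$ (not of the signals), and only afterwards converts to signal-level divergences via data processing with crossed indices: $\cL_t(G_t)$ is a deterministic function of $\Phi_t(b)$ for every $b\neq G_t$, and $\cL_t(a)$ is a deterministic function of $\Phi_t(b)$ for every $b\neq a$. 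The $P_{tb}$ weights needed to match \cref{lem:fundamental} are manufactured by the identities $1-P_{tG_t}=\sum_{b\neq G_t}P_{tb}$ for one term and $1-P_{tG_t}\le 1-P_{ta}=\sum_{b\neq a}P_{tb}$ for the other (this is where $G_t$ being the argmax is used). Each of the two resulting terms is at most $\tfrac12 I_t(A^*;\Phi_t(A_t),A_t)$, and summing the two square roots gives $\sqrt{2I_t}$. Your own accounting should have raised a flag here: had your chain been valid it would have produced $\sqrt{I_t/2}$, a factor of $2$ \emph{better} than the lemma claims; the constant $2$ exists precisely because the argument needs two separate square-root terms.
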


\begin{proof}
Fix $t\in [n]$ and let $G_t = \argmax_a P_{ta}$.
Here, we assume that we have already discarded a suitable set of measure zero, so that we do not need to keep repeating the qualification `almost surely'.
Then, subtracting and adding $\cL_t(G_t)$, expanding the definitions and using that $P_t^* = P_t$,
\begin{align*}
&\E_t[\Delta_t]
= \sum_{a \neq G_t} P_{ta} \E_t[\cL_t(a) - \cL_t(G_t)] + \sum_{a \neq G_t} P_{ta} \E_t[\cL_t(G_t) - \cL_t(a) \mid A^* = a] \\
&\leq \sum_{a \neq G_t} P_{ta} \left(\sqrt{\frac{1}{2} \bKL{\bbP_{t,\cL_t(G_t) | A^* = a}}{\bbP_{t,\cL_t(G_t)})}} + \sqrt{\frac{1}{2} \bKL{\bbP_{t,\cL_t(a) | A^* = a}}{\bbP_{t,\cL_t(a)}}}\right) \\
&\leq \sqrt{\textrm{(A)}} + \sqrt{\textrm{(B)}}\,,
\end{align*}
where the first inequality follows from grouping the terms that involve $\cL_t(G_t)$ and those that involve $\cL_t(a)$ and then using Pinsker's inequality (\ref{eq:pinsker}),
while the second follows from Cauchy-Schwarz and the definitions,
\begin{align*}
\textrm{(A)} 
&= \frac{1 - P_{tG_t}}{2} \sum_{a \neq G_t} P_{ta} \bKL{\bbP_{t,\cL_t(G_t) | A^* = a}}{\bbP_{t,\cL_t(G_t)}}\,, \\ 
\textrm{(B)}
&= \frac{1 - P_{tG_t}}{2} \sum_{a \neq G_t} P_{ta} \bKL{\bbP_{t,\cL_t(a) | A^* = a}}{\bbP_{t,\cL_t(a)}}\,. 
\end{align*}
The result is completed by bounding each term separately. Using that $1-P_{tG_t} = \sum_{b\ne G_t} P_{tb}$,
\begin{align*}
\textrm{(A)}
&=\frac{1 - P_{tG_t}}{2} \sum_{a \neq G_t} P_{ta} \bKL{\bbP_{t,\cL_t(G_t) | A^* = a}}{\bbP_{t,\cL_t(G_t)}} \\
&= \frac{1}{2} \sum_{a \neq G_t} P_{ta} \sum_{b \neq G_t} P_{tb} \bKL{\bbP_{t,\cL_t(G_t) | A^* = a}}{\bbP_{t,\cL_t(G_t)}} \\
&\leq \frac{1}{2} \sum_{a \neq G_t} P_{ta} \sum_{b \neq G_t} P_{tb} \bKL{\bbP_{t,\Phi_t(b) | A^* = a}}{\bbP_{t,\Phi_t(b)}} \\
&\leq \frac{1}{2} I_t(A^* ; \Phi_t(A_t), A_t)\,,
\end{align*}
where the first inequality follows from the data processing inequality (for $b\ne G_t$, $\cL_t(G_t)$ is a deterministic function of $\Phi_t(b)$)
and the last from Lemma~\ref{lem:fundamental}.
The second term is bounded in almost the same way. Here we use the fact that $1 - P_{tG_t} \leq 1 - P_{ta}$ for all $a \in [k]$:
\begin{align*}
\textrm{(B)} 
&=\frac{1 - P_{tG_t}}{2} \sum_{a \neq G_t} P_{ta} \bKL{\bbP_{t,\cL_t(a) | A^* = a}}{\bbP_{t,\cL_t(a)}} \\
&\leq \frac{1}{2} \sum_{a \neq G_t} (1 - P_{ta}) P_{ta} \bKL{\bbP_{t,\cL_t(a) | A^* = a}}{\bbP_{t,\cL_t(a)}} \\
&= \frac{1}{2} \sum_{a \neq G_t} P_{ta} \sum_{b \neq a} P_{tb} \bKL{\bbP_{t,\cL_t(a) | A^* = a}}{ \bbP_{t,\cL_t(a)}} \\
&\leq \frac{1}{2} \sum_{a \neq G_t} P_{ta} \sum_{b \neq a} P_{tb} \bKL{\bbP_{t,\Phi_t(b) | A^* = a}}{ \bbP_{t,\Phi_t(b)}} \\
&\leq \frac{1}{2} I_t(A^* ; \Phi_t(A_t), A_t)\,.
\end{align*}
Combining the previous displays and rearranging completes the proof.
\end{proof}

\section{Proof of \cref{thm:local}}\label{app:degenerate}

We need the following lemma, which characterises actions $c \in \cN_{ab}$ as having loss vectors $\ell_c$ that are convex combinations of $\ell_a$ and $\ell_b$.

\begin{lemma}[\citealt{BFPRS14}]
For all actions $c \in \cN_{ab}$ there exists an $\alpha \in [0,1]$ such that $\ell_c = \alpha \ell_a + (1 - \alpha) \ell_b$.
\end{lemma}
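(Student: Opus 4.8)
The plan is to exploit the geometry of the cells. Write $\cH_{ab} = \{u \in \R^d : \ip{\ones, u} = 1,\ \ip{\ell_a - \ell_b, u} = 0\}$ for the affine slice of the simplex hyperplane on which $a$ and $b$ are tied. First I would record that every $c \in \cN_{ab}$ is optimal at every point of $C_a \cap C_b$, so that $\ip{\ell_c, u} = \ip{\ell_a, u} = \ip{\ell_b, u}$ for all $u \in C_a \cap C_b$; in particular $\ip{\ell_c - \ell_a, u} = 0$ there. Since $a$ and $b$ are neighbours their cells are $(d-1)$-dimensional and $C_a \cap C_b$ has dimension $d - 2$. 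Because $C_a \cap C_b \subseteq \cH_{ab}$ (both optimality and the simplex constraint hold there) and, provided $\ell_a - \ell_b$ is not a multiple of $\ones$ (which holds since $a,b$ are distinct Pareto optimal actions, so neither dominates the other), $\cH_{ab}$ is itself $(d-2)$-dimensional, the two sets share the same affine hull, namely $\cH_{ab}$.

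The second step is linear algebra. The affine map $u \mapsto \ip{\ell_c - \ell_a, u}$ vanishes on $C_a \cap C_b$ and hence on its affine hull $\cH_{ab}$. The linear directions of $\cH_{ab}$ form the subspace $\{v : \ip{\ones, v} = 0,\ \ip{\ell_a - \ell_b, v} = 0\}$, which is $\mathrm{span}\{\ones, \ell_a - \ell_b\}^\perp$, so $\ell_c - \ell_a \in \mathrm{span}\{\ones, \ell_a - \ell_b\}$, say $\ell_c - \ell_a = \mu \ones + \lambda(\ell_a - \ell_b)$. Evaluating $\ip{\ell_c - \ell_a, u} = 0$ at any fixed $u_0 \in \cH_{ab}$ gives $\mu = \mu \ip{\ones, u_0} + \lambda \ip{\ell_a - \ell_b, u_0} = 0$, whence $\ell_c = (1 + \lambda)\ell_a - \lambda \ell_b = \alpha \ell_a + (1 - \alpha)\ell_b$ with $\alpha = 1 + \lambda$. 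This already shows $\ell_c$ lies on the affine line through $\ell_a$ and $\ell_b$.

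It remains to pin the coefficient into $[0,1]$, which I would do using optimality on the two full cells. For every $u \in C_a$ action $a$ is optimal, so $\ip{\ell_c - \ell_a, u} \ge 0$; substituting $\ell_c - \ell_a = \lambda(\ell_a - \ell_b)$ gives $\lambda \ip{\ell_a - \ell_b, u} \ge 0$ for all $u \in C_a$. Since $C_a \subseteq \{u : \ip{\ell_a - \ell_b, u} \le 0\}$ but $C_a$ is $(d-1)$-dimensional and therefore not contained in the hyperplane $\{\ip{\ell_a - \ell_b, \cdot} = 0\}$, there is a point of $C_a$ with $\ip{\ell_a - \ell_b, u} < 0$; hence $\lambda \le 0$ and $\alpha \le 1$. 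The symmetric argument on $C_b$, using $\ell_c - \ell_b = \alpha(\ell_a - \ell_b)$ and a point of $C_b$ with $\ip{\ell_a - \ell_b, u} > 0$, gives $\alpha \ge 0$. Together $\alpha \in [0,1]$, as required.

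I expect the main obstacle to be the dimension bookkeeping in the first paragraph: one must justify that $C_a \cap C_b$ and $\cH_{ab}$ genuinely coincide as affine spaces, which rests on the neighbour hypothesis $\dim(C_a \cap C_b) = d - 2$ together with the non-degeneracy fact that $\ones$ and $\ell_a - \ell_b$ are linearly independent. Everything downstream is elementary linear algebra and sign-chasing.
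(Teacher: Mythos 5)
Your proof is correct, and there is in fact nothing in the paper to compare it against: the paper does not prove this lemma, it imports it by citation from \citet{BFPRS14}. Your route --- (i) $a$, $b$ and $c$ are all optimal on $C_a \cap C_b$, so $u \mapsto \ip{\ell_c - \ell_a, u}$ vanishes there; (ii) since $\dim(C_a \cap C_b) = d-2$ and $C_a \cap C_b \subseteq \cH_{ab}$ with $\cH_{ab}$ an affine space of the same dimension (using linear independence of $\ones$ and $\ell_a - \ell_b$), the functional vanishes on all of $\cH_{ab}$, forcing $\ell_c - \ell_a \in \operatorname{span}\{\ones, \ell_a - \ell_b\}$; (iii) evaluating at a point of $\cH_{ab}$ kills the $\ones$-component, giving the affine relation; (iv) sign-chasing on the full-dimensional cells $C_a$ and $C_b$ pins $\alpha \in [0,1]$ --- is the standard argument, and every step checks out, including the two places where $(d-1)$-dimensionality of the cells is genuinely needed (the linear independence of $\ones$ and $\ell_a - \ell_b$, and the existence of points of $C_a$, resp.\ $C_b$, strictly off the tie hyperplane).

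One caveat, which is a flaw in the paper's text rather than in your argument: your opening step reads $c \in \cN_{ab}$ as ``$c$ is optimal everywhere on $C_a \cap C_b$,'' i.e.\ $C_a \cap C_b \subseteq C_c$, whereas the paper literally defines $\cN_{ab} = \{c : C_c \subseteq C_a \cap C_b\}$. The paper's inclusion is evidently a typo: under it $a \notin \cN_{ab}$ in general (contradicting the paper's own remark that $\cN_{ab}$ contains $a$, $b$ and their duplicates), and the lemma itself becomes false. For a concrete counterexample take $d = 3$, $\ell_a = (0,1,1/2)$, $\ell_b = (1,0,1/2)$, $\ell_c = (1/2,1/2,1)$: then $C_a \cap C_b$ is the segment $\{u \in \Delta^2 : u_1 = u_2\}$, while $C_c = \{(1/2,1/2,0)\}$ is a single point of that segment, so $C_c \subseteq C_a \cap C_b$, yet $\ell_c$ does not lie on the segment $[\ell_a, \ell_b]$ (its third coordinate is $1 \neq 1/2$). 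Your reading agrees with \citet{BFPRS14} and with how $\cN_{ab}$ is actually used in the paper's definition of local observability, so you silently (and correctly) repaired the definition; it would be worth stating that repair explicitly, since your first sentence is exactly the point where the literal definition would break the argument.
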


\begin{proof}[\cref{thm:local}]
In order to define the algorithm we first choose a subset $\cC \subseteq [k]$ such that $\cC$ contains no duplicate or degenerate actions
and $\cup_{c \in \cC} C_c = \Delta^{k-1}$.
We assume additionally that $P_t^*$ is constant on duplicate actions.
Construct the parent function $\cP_t$ on actions in $\cC$ in the same way as Mario sampling.
For $a \neq b$ let $\cT_{ab} = (c_1,\ldots,c_m)$ be an ordering 
of 
\begin{align*}
\{c \in ([k] \setminus \cC) \cup\{b\} : c = b \text{ or exists } \alpha \in (0,1] \text{ with } \ell_c = \alpha \ell_a +  (1 - \alpha) \ell_b\}
\end{align*}
ordered by decreasing $\alpha$ values and with $c_m = b$. In other words $\cT_{ab}$ is a sequence of actions starting with duplicates of $a$, then actions $c$ for which $\ell_c$ is a
strict convex combination of $\ell_a$ and $\ell_b$, with actions that are `closer' to $a$ sorted first. The last element of $\cT_{ab}$ is $b$ itself. Duplicates of $b$ are not included in $\cT_{ab}$.
Let $\cT_{aa}$ be the duplicates of $a$, excluding $a$, in an arbitrary order. 
Then define
\begin{align*}
\cP'_t(c) &= \begin{cases}
\cT_{c\cP(c)}[1] & \text{if } c \in \cC \setminus \{G_t\} \\
\cT_{cc}[1] & \text{if } c = G_t \text{ and } \cT_{cc} \neq \emptyset \\
\cT_{ab}[i+1] & \text{if } c = \cT_{ab}[i]\,.
\end{cases}
\end{align*}
Let $W_t'$ be the water transfer operator using the tree generated by $\cP'_t$ instead of $\cP_t$ and $P_t = (W_t')^k P_t^*$.
Now we follow the proof of \cref{thm:pm}. Let $t \in [n]$ be fixed. We start by bounding $\E_t[\Delta_t]$ in terms of the expected information gain.
Given $b \in \cC \setminus \{G_t\}$ let $f_b : \cN_{a\cP_t(a)} \to \R$ be a function with
\begin{align*}
\sum_{c \in \cN_{b\cP_t(b)}} f_b(c, \Phi_t(c)) = \cL_t(b) - \cL_t(\cP_t(b))\,,
\end{align*}
which exists by the definition of local observability. By definition we may assume that $\norm{f_b}_\infty \leq v$. Then
\begin{align*}
\E_t[\cL_t(a)] = \E_t\left[\cL_t(G_t) + \sum_{b \in \cA_t(a) \setminus \{G_t\}} \sum_{c \in \cN_{b\cP_t(b)}} f_b(c, \Phi_t(c))\right]\,.
\end{align*}
Therefore
\begin{align*}
\E_t[\Delta_t] 
&\leq v\sum_{a=1}^k P_{ta}^* \sum_{b \in \cA_t(a) \setminus \{G_t\}} \sum_{c \in \cN_{b \cP_t(b)}} \sqrt{2 \bKL{\bbP_{t,\Phi_t(c)|A^* = a}}{\bbP_{t,\Phi_t(c)}}} \\
&\leq vk \sqrt{4\sum_{a=1}^k P_{ta}^* \sum_{b \in \cA_t(a) \setminus \{G_t\}} \sum_{c \in \cN_{b \cP_t(b)}} P_{ta}^* \bKL{\bbP_{t,\Phi_t(c)|A^* = a}}{\bbP_{t,\Phi_t(c)}}} \\
&\leq vk^{3/2} \sqrt{4\sum_{a=1}^k P_{ta}^* \sum_{b \in \cA_t(a) \setminus \{G_t\}} \sum_{c \in \cN_{b \cP_t(b)}} P_{tc} \bKL{\bbP_{t,\Phi_t(c)|A^* = a}}{\bbP_{t,\Phi_t(c)}}} \\
&\leq vk^{3/2} \sqrt{8 \sum_{a=1}^k P_{ta}^* \sum_{c=1}^k P_{tc} \bKL{\bbP_{t,\Phi_t(c)|A^* = a}}{\bbP_{t,\Phi_t(c)}}} \\
&= vk^{3/2} \sqrt{8 I_t(A^* ; \Phi_t(A_t), A_t)}\,.
\end{align*}
And the result follows from \cref{thm:hammer} and \cref{thm:swap}.
\end{proof}

\section{Proof of \cref{thm:pm:hard}}\label{app:global}

Again Thompson sampling does not explore sufficiently often. 
The most straightforward correction is to simply add a small amount of forced exploration, which
was also used in combination with Exp3 in prior analysis of these games \citep{CBLuSt06}.
We let
\begin{align}
P_t = (1 - \gamma) P_t^* + \gamma \ones / k\,,
\label{eq:pm:global-policy}
\end{align}
where ties in the $\argmax$ that defines $P_t^*$ are broken by prioritising Pareto optimal actions, which means that $P_{ta}^* = 0$ for all degenerate actions.
As usual, the crucial step is to bound the expected $1$-step regret in terms of the information gain.

\begin{lemma}
For the policy playing according to \cref{eq:pm:global-policy} it holds almost surely that
\begin{align*}
\E_t[\Delta_t] \leq \gamma + kv \sqrt{\frac{2 I_t(A^* ; \Phi_t(A_t), A_t)}{\gamma}}\,.
\end{align*}
\end{lemma}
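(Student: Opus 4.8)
The plan is to follow the template of \cref{thm:local}, but with the uniform-exploration floor of \cref{eq:pm:global-policy} playing the role that the water-transfer lower bound (\cref{lem:pm:P}, Part~3) played for Mario sampling. First I would peel off the exploration term. Writing $\E_t[\Delta_t] = \sum_a P_{ta}\E_t[\cL_t(a)] - \E_t[\cL_t(A^*)]$ and substituting $P_t = (1-\gamma)P_t^* + \gamma\ones/k$, the uniform part contributes at most $\gamma$ because $\tfrac{\gamma}{k}\sum_a \E_t[\cL_t(a)] \le \gamma$ and losses are nonnegative, so that
\begin{align*}
\E_t[\Delta_t] \le \gamma + \sum_{a=1}^k P_{ta}^*\left(\E_t[\cL_t(a)] - \E_t[\cL_t(a)\mid A^*=a]\right)\,.
\end{align*}
The tie-breaking in the definition of $P_t^*$ forces $P_{ta}^* = 0$ on degenerate actions, so the remaining sum is over Pareto optimal actions, and I may compare each such $a$ to the greedy action $G_t$ using the connected neighbourhood structure.

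Next I would use global observability to turn the bracketed differences into information terms. For each Pareto optimal $a$ let $h_a : [k]\times\Sigma\to\R$ satisfy $\cL_t(a) - \cL_t(G_t) = \sum_{c=1}^k h_a(c,\Phi_t(c))$ with $\norm{h_a}_\infty \le v$; such a function exists because global observability lets one estimate loss differences between Pareto optimal actions from the signals of all actions. The additive term $\cL_t(G_t)$ drops out after weighting by $P_{ta}^*$, since by the tower rule $\sum_a P_{ta}^* \E_t[\cL_t(G_t)\mid A^*=a] = \E_t[\cL_t(G_t)]$. Hence the remaining sum equals $\sum_a P_{ta}^* \sum_c(\E_t[h_a(c,\Phi_t(c))] - \E_t[h_a(c,\Phi_t(c))\mid A^*=a])$, and applying \cref{eq:pinsker} termwise (each integrand varies over an interval of length $2v$, and total variation is symmetric, so the relative entropy may be oriented as conditional-against-marginal) bounds it by $v\sum_a P_{ta}^*\sum_c \sqrt{2 K_{ac}}$, where $K_{ac} = \bKL{\bbP_{t,\Phi_t(c)\mid A^*=a}}{\bbP_{t,\Phi_t(c)}}$.

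The final step is a Cauchy-Schwarz estimate that produces the mutual information: pairing $\sqrt{2P_{ta}^*/P_{tc}}$ with $\sqrt{P_{ta}^* P_{tc} K_{ac}}$ gives
\begin{align*}
v\sum_{a,c} P_{ta}^*\sqrt{2K_{ac}} \le v\sqrt{2\sum_c \tfrac{1}{P_{tc}}}\,\sqrt{\sum_a P_{ta}^* \sum_c P_{tc} K_{ac}}\,,
\end{align*}
and the exploration floor $P_{tc} \ge \gamma/k$ yields $\sum_c 1/P_{tc} \le k^2/\gamma$, while the second factor equals $\sqrt{I_t(A^*;\Phi_t(A_t),A_t)}$ by \cref{lem:fundamental}; this produces exactly $k v\sqrt{2I_t/\gamma}$, which is the claim.

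The step I expect to be the main obstacle is the constant control, and this is where the globally observable case genuinely differs from \cref{thm:local}. There, local observability confines the estimator to a small neighbourhood, so telescoping $\cL_t(a)-\cL_t(G_t)$ along the tree charges each observed signal only a bounded number of times. Here the estimator is supported on all of $[k]$, so a naive edge-by-edge telescoping inflates the bound by the path length from $a$ to $G_t$ (up to a factor $k$), yielding $k^2 v$ rather than $k v$. Avoiding this requires committing to a single estimator $h_a$ for the whole difference $\cL_t(a)-\cL_t(G_t)$, so that each signal $\Phi_t(c)$ is charged once per candidate optimal action rather than once per tree edge, while still certifying the sup-norm bound $v$ on the estimators actually used; checking that the exploration floor then recovers precisely $k v/\sqrt{\gamma}$ is the delicate accounting. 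Once the lemma is in hand, substituting it into the additive-constant information bound (\cref{cor:general} with the negentropy potential, for which $\diam_F(\Delta^{k-1})\le\log k$ and $\E_t[D_F(P^*_{t+1},P^*_t)] = I_t$) and optimising over $\gamma$ recovers \cref{thm:pm:hard}.
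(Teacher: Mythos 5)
Your proposal is correct and follows essentially the same route as the paper's proof: peel off the $\gamma$ exploration term, cancel an $\cF_t$-measurable reference action via the tower rule, decompose the remaining loss differences with a single whole-difference estimator of sup-norm at most $v$ per candidate optimal action, apply Pinsker termwise, and finish with Cauchy--Schwarz plus the floor $P_{tc}\ge\gamma/k$ and Lemma~\ref{lem:fundamental}. The only differences are cosmetic --- the paper uses a fixed Pareto optimal reference $a_\circ$ rather than $G_t$, and bounds $1\le kP_{tc}/\gamma$ after an unweighted Cauchy--Schwarz instead of weighting the Cauchy--Schwarz by $P_{tc}$ --- and both orderings give the identical constant $kv\sqrt{2/\gamma}$.
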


\begin{proof}
Let $a_{\circ}$ be an arbitrary fixed Pareto optimal action and for 
each Pareto optimal action $a$ let $f_a : [k] \times \Sigma \to \R$ be a function with $\norm{f_a}_\infty \leq v$ such that
\begin{align*}
\sum_{c=1}^k f_a(c, \Phi(c, x)) = \cL(a, x) - \cL(a_{\circ}, x) \quad \text{for all } x \in [d]\,.
\end{align*}
The next step is to decompose the expected loss in terms of $f$:
\begin{align*}
\E_t[\Delta_t]
&= \sum_{a=1}^k P_{ta} \E_t[\cL_t(a)] - \sum_{a=1}^k P_{ta}^* \E_t[\cL_t(a) \mid A^* = a] \\
&\leq \gamma + \sum_{a=1}^k P_{ta}^*\left( \E_t[\cL_t(a)] - \E_t[\cL_t(a) \mid A^* = a]\right) 
\\
&= \gamma + \sum_{a=1}^k P_{ta}^*\left( \E_t[\cL_t(a) - \cL_t(a_{\circ})] - \E_t[\cL_t(a) - \cL_t(a_{\circ}) \mid A^* = a]\right) \\
&= \gamma + \sum_{a=1}^k P_{ta}^*\left( \E_t\left[\sum_{c=1}^k f_a(c, \Phi_t(c)) \right] - \E_t\left[\sum_{c=1}^k f_a(c, \Phi_t(c)) \,\middle|\, A^* = a\right]\right)\,,
\end{align*}
where the inequality follows from the definition of $P_t$ and the fact that losses are bounded in $[0,1]$.
Then, by Pinsker's inequality (\ref{eq:pinsker}),
\begin{align*}
\E_t[\Delta_t]
&\leq \gamma + v \sum_{c=1}^k \sum_{a=1}^k P_{ta}^* \sqrt{2 \bKL{\bbP_{t,\Phi_t(c)|A^* = a}}{\bbP_{t,\Phi_t(c)}}} \\
&\leq \gamma + v \sqrt{2k \sum_{a=1}^k P_{ta}^* \sum_{c=1}^k \bKL{\bbP_{t,\Phi_t(c)|A^* = a}}{\bbP_{t,\Phi_t(c)}}} \\
&\leq \gamma + k v \sqrt{\frac{2}{\gamma} \sum_{a=1}^k P_{ta}^* \sum_{c=1}^k P_{tc} \bKL{\bbP_{t,\Phi_t(c)|A^* = a}}{\bbP_{t,\Phi_t(c)}}} \\
&= \gamma + kv \sqrt{\frac{2 I_t(A^* ; \Phi_t(A_t), A_t)}{\gamma}}\,,
\end{align*}
where the first inequality follows from Pinsker's inequality (\ref{eq:pinsker}), the second from Cauchy-Schwarz, the third because $1 \leq k P_{tc}/\gamma$ for all $c$.
The last term follows from Lemma~\ref{lem:fundamental}.
\end{proof}

\begin{proof}[Theorem~\ref{thm:pm:hard}]
By the previous lemma and Corollary~\ref{cor:general},
\begin{align*}
\BReg_n \leq n \gamma + kv \sqrt{\frac{2n \log(k)}{\gamma}} \leq 3(nkv)^{2/3} (\log(k)/2)^{1/3}\,,
\end{align*}
where we choose $\gamma = n^{-1/3} (kv)^{2/3} (\log(k)/2)^{1/3}$ and note that when $\gamma > 1$ the claim in the theorem is immediate.
\end{proof}

\section{The water transfer operator}\label{app:lem:pm:P}

Here we explain in more detail the water transfer operator defined by \cref{fig:flow} and provide the proof of Lemma~\ref{lem:pm:P}.
An example with $k = 6$ is illustrated below.

\begin{figure}[H]
\begin{center}
\definecolor{ocean}{HTML}{316ac6}
\definecolor{flow}{HTML}{d14108}
\newcommand{\drawconnector}{
\path[fill=ocean!50!white] (-0.03,0.1) rectangle (1.03,0.2);
\draw (0,0.2) -- (1,0.2);
\draw (0,0.1) -- (1,0.1);
}

\newcommand{\drawrconnector}[2]{
\begin{scope}[shift={(#1,0)}]
\draw[#2,-{Latex[scale=1.2]}] (0.2,-0.1) -- (0.8,-0.1);
\drawconnector{}
\end{scope}
}

\newcommand{\drawlconnector}[2]{
\begin{scope}[shift={(#1,0)}]
\draw[#2,-{Latex[scale=1.2]}] (0.8,-0.1) -- (0.2,-0.1);
\drawconnector{}
\end{scope}
}

\newcommand{\drawbucket}[2]{
\begin{scope}[shift={(#1,0)}]
\path[fill=ocean!50!white] (0,0) rectangle (1,#2);
\begin{scope}[shift={(0,#2)}];
\draw[white] (0,0) sin (0.125,-0.05) cos (0.25,0) sin (0.375,-0.05) cos (0.5,0) sin (0.625,-0.05) cos (0.75,0) sin (0.875,-0.05) cos (1,0);
\begin{scope}[shift={(0,0.035)}]
\draw[white] (0,0) sin (0.125,-0.05) cos (0.25,0) sin (0.375,-0.05) cos (0.5,0) sin (0.625,-0.05) cos (0.75,0) sin (0.875,-0.05) cos (1,0);
\end{scope}
\end{scope}

\draw (0,0) rectangle (1,2);
\draw[white,line width=3pt] (0.02,2) -- (0.98,2);
\draw (1,1.5) to [out=0,in=90] (1.5,1.125) to [out=-90, in=0 ] (1,0.75);
\draw (1,1.58) to [out=0,in=90] (1.58,1.125) to [out=-90, in=0 ] (1,0.67);
\end{scope}
}

\begin{tikzpicture}[thick,scale=0.8]
\tikzstyle{water} = [draw=none,fill=ocean]
\node[anchor=west] at (-3.5,1) {$P$};
\drawbucket{-2}{0.6}
\drawbucket{0}{1.7}
\drawbucket{2}{1.2}
\drawbucket{4}{1.8}
\drawbucket{6}{0.4}
\drawbucket{8}{0.8}
\drawrconnector{-1}{lightgray}
\drawrconnector{1}{black}
\drawrconnector{3}{lightgray}
\drawrconnector{5}{lightgray}
\drawlconnector{7}{lightgray}
\end{tikzpicture}

\vspace{0.2cm}
\begin{tikzpicture}[thick,scale=0.8]
\tikzstyle{water} = [draw=none,fill=ocean]
\node[anchor=west] at (-3.5,1) {$W_tP$};
\drawbucket{-2}{0.6}
\drawbucket{0}{1.45}
\drawbucket{2}{1.45}
\drawbucket{4}{1.8}
\drawbucket{6}{0.4}
\drawbucket{8}{0.8}
\drawrconnector{-1}{lightgray}
\drawrconnector{1}{black}
\drawrconnector{3}{black}
\drawrconnector{5}{black}
\drawlconnector{7}{lightgray}
\end{tikzpicture}

\vspace{0.2cm}
\begin{tikzpicture}[thick,scale=0.8]
\tikzstyle{water} = [draw=none,fill=ocean]
\node[anchor=west] at (-3.5,1) {$W_t^2P$};
\drawbucket{-2}{0.6}
\drawbucket{0}{1.275}
\drawbucket{2}{1.275}
\drawbucket{4}{1.275}
\drawbucket{6}{1.275}
\drawbucket{8}{0.8}
\drawrconnector{-1}{lightgray}
\drawrconnector{1}{lightgray}
\drawrconnector{3}{lightgray}
\drawrconnector{5}{lightgray}
\drawlconnector{7}{lightgray}

\node at (-1.5,-0.5) {$a$};
\node at (0.5,-0.5) {$b$};
\node at (2.5,-0.5) {$c$};
\node at (4.5,-0.5) {$d$};
\node at (6.5,-0.5) {$e = G_t$};
\node at (8.5,-0.5) {$f$};
\end{tikzpicture}
\end{center}
\caption{Water transfer process}\label{fig:water}
\end{figure}

The mugs correspond to actions and are connected at the bottom with valves that default to being closed. The total volume of water sums to $1$.
Arrows correspond to edges in the tree. The dark arrows indicate which valves are open in each iteration and show the direction of flow.
In the first application of $W_t$, mug $c$ is anomalous and the water in mugs $b$ and $c$ is averaged. Imagine opening the valve connecting $b$ and $c$. The water in $a$ is too low to be included
in the average. In the second application, the water in mugs $b$, $c$, $d$ and $e$ is averaged.
Further applications of $W_t$ have no effect because there are no anomalous actions.

\begin{remark}
Another way to think about the application of $W_t$ to $P$ is as follows. First the anomalous action $a$ is identified, if it exists.
Then water flows continuously into $a$ from the set of descendants of $a$ that contain more water than $a$ until $a$ is no longer anomalous. 
\end{remark}

\begin{proof}[Lemma~\ref{lem:pm:P}]
To begin, notice that every application of the water transfer operator reduces the number of anomalous actions by at least one because:
(1) If $a$ is selected by \cref{fig:flow} then $a$ is not anomalous in $W_t P$ and (2) only actions that were anomalous in $P$ can be anomalous in $W_t P$. 
Since there are at most $k$ anomalous actions in any $P$, the water transfer operator
ceases to have any affect after more than $k$ operations. Hence $Q_a \geq \max_{b \in \cD_t(a)} Q_b$ for all $a$ and the second part follows.
For the first part we show that the loss of $W_t P$ is always smaller than $P$.
Let $\bar L(b)=\E_t[\cL_t(b)]$ for $b\in [k]$ and $a\in [k]$ be the anomalous action in $P$ selected by the algorithm. 
Then let $\cC = \{ b\in [k]\,: (W_tP)_b \ne P_b \}$ be the set of actions for which the distribution is changed.
By the definition of the tree, $\bar L(b) \ge \bar L(a)$ for any $b\in \cC$,
\begin{align*}
\sum_{b=1}^k (P_b - (W_tP)_b )\bar L(b)
&= (P_a - (W_t P)_a) \bar L(a) + \sum_{b \in \cC,b\ne a} (P_b-(W_tP)_b) \bar L(b)  \\
&= \sum_{b \in \cC,b\ne a} (P_b-(W_t P)_b) (\bar L(b) - \bar L(a)) \ge 0\,,
\end{align*}
which shows that $W_t$ decreases the expected loss.
For the last part, notice that during each iteration of the water transfer operator the update occurs by averaging the contents of a number of mugs
so that all have the same level (\cref{fig:water}). Once a group of mugs have been averaged together, subsequently they are always averaged together. 
It follows that after every iteration the actions $[k]$ can be partitioned so that the level in each partition is the average of $P_a$. 
Suppose that $a$ is in partition $S \subseteq [k]$. Then $Q_a = \frac{1}{|S|} \sum_{b \in S} P_b \geq P_a / k$.
\end{proof}

\section{Failure of Thompson sampling for partial monitoring}\label{app:ts-fail}

The following example with $k = 3$ and $d = 2$ illustrates the failure of Thompson sampling for locally observable non-degenerate partial monitoring games.
The game is a toy `spam filtering' problem where the learner can either classify an email as spam/not spam or pay a small cost for the true label.
The functions $\Phi$ and $\cL$ are represented by the tables below, with the learner choosing the rows and adversary the columns.
\begin{figure}[H]
\centering
\renewcommand{\arraystretch}{1.5}
\small
\begin{tabular}{|l|ll|}
\hline
\textbf{Losses} $\bm \cL$      & \cellcolor{lightgray}\textsc{not spam} & \cellcolor{lightgray}\textsc{spam}  \\ \hline

\cellcolor{lightgray}\textsc{spam}   &  1  & 0  \\
\cellcolor{lightgray}\textsc{not spam} & 0   & 1  \\
\cellcolor{lightgray}\textsc{unknown}  & c & c \\
\hline
\end{tabular}
\hspace{0.5cm}
\renewcommand{\arraystretch}{1.5}
\begin{tabular}{|l|cc|}
\hline
\textbf{Signals} $\bm\Phi$      & \cellcolor{lightgray}\textsc{not spam} & \cellcolor{lightgray}\textsc{spam} \\ \hline
\cellcolor{lightgray}\textsc{spam}   & $\bot$   & $\bot$  \\
\cellcolor{lightgray}\textsc{not spam} & $\bot$   & $\bot$  \\
\cellcolor{lightgray}\textsc{unknown}  & \textsc{not spam} & \textsc{spam} \\
\hline
\end{tabular}
\caption{The `spam' partial monitoring game. For $c < 1/2$ the game is locally observable and non-degenerate. For $c = 1/2$ the game is locally observable, but degenerate. For $c > 1/2$ the
game is not locally observable, but is globally observable. For $c = 0$ the game is trivial.}\label{fig:spam}
\end{figure}

The learner only elicits meaningful feedback in the spam game by paying a cost of $c$ to observe the true label.
For appropriately chosen $c$ and prior, we will see that Thompson sampling never chooses the revealing action, cannot learn, and hence suffers linear regret.
Let $c > 0$ and $\nu$ be the mixture of two Dirac's: $\nu = \frac{1}{2} \delta^n_{\textsc{spam}} + \frac{1}{2} \delta^n_{\textsc{not spam}}$, where $\delta^n_i$ is the Dirac measure on $(i,i,\ldots,i)$.
With these choices the optimal action is almost surely either \textsc{spam} or \textsc{not spam}. Since choosing these actions does not reveal any information, the posterior
is equal to the prior and Thompson sampling plays these two actions uniformly at random. Clearly this leads to linear regret relative to the optimal policy that plays the exploratory action once to identify
the adversary and plays optimally for the remainder.
Since this result holds for any strictly positive cost, it also shows that Thompson sampling does not work for globally observable games.

\section{Structural lemmas for partial monitoring}\label{app:structure}

\begin{lemma}\label{lem:structure}
Let $a,b \in [k]$ be distinct actions in a non-degenerate game and $u \in C_a$.
Then there exists an action $c \in \cN_b \setminus\{b\}$ such that $\ip{\ell_b - \ell_c, u} \geq 0$.
Furthermore, if $u \notin C_b$, then $\ip{\ell_b - \ell_c, u} > 0$.
\end{lemma}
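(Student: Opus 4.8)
My plan is to reduce both assertions to a single structural statement about non-degenerate games and then prove that statement by a one-dimensional crossing argument. The statement I would use is: for a (Pareto optimal) action $b$ and any $v \in \Delta^{d-1}$, if some action $e$ satisfies $\ip{\ell_e - \ell_b, v} \le 0$ then some neighbour $c \in \cN_b \setminus \{b\}$ also satisfies $\ip{\ell_c - \ell_b, v} \le 0$, and this last inequality is strict whenever $\ip{\ell_e - \ell_b, v} < 0$. Granting this, both parts follow quickly. If $u \notin C_b$, then $b$ is not optimal at $u$, so some action $e$ has $\ip{\ell_e - \ell_b, u} < 0$; the statement yields a neighbour $c$ with $\ip{\ell_c - \ell_b, u} < 0$, i.e. $\ip{\ell_b - \ell_c, u} > 0$, which is the strict bound. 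If $u \in C_b$, then since $u \in C_a$ with $a \ne b$ we have $\ip{\ell_a - \ell_b, u} = 0$; applying the statement with $e = a$ produces a neighbour $c$ with $\ip{\ell_c - \ell_b, u} \le 0$, while $u \in C_b$ forces $\ip{\ell_c - \ell_b, u} \ge 0$, so $\ip{\ell_b - \ell_c, u} = 0 \ge 0$.

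To prove the structural statement I would pick $u_0 \in \operatorname{relint}(C_b)$, which exists because $\dim C_b = d-1$ (the game is non-degenerate) and at which $\ip{\ell_c - \ell_b, u_0} > 0$ for all $c \ne b$ since there are no duplicate actions. Setting $g_c(\lambda) = \ip{\ell_c - \ell_b, (1-\lambda) u_0 + \lambda v}$, each $g_c$ is affine with $g_c(0) > 0$. Given an action $e$ with $g_e(1) \le 0$, continuity yields a first parameter $\lambda^\star \in (0,1]$ at which $\min_{c \ne b} g_c(\lambda^\star) = 0$, and $\lambda^\star < 1$ precisely when $g_e(1) < 0$. Any action $c$ with $g_c(\lambda^\star) = 0$ has $g_c$ affine, positive at $0$ and vanishing at $\lambda^\star > 0$, hence strictly decreasing, so $g_c(1) \le 0$ with equality only if $\lambda^\star = 1$. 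Thus it remains to produce one such tied action that is a genuine neighbour of $b$.

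The crux, and the step I expect to be the main obstacle, is showing that the boundary point $w = (1-\lambda^\star)u_0 + \lambda^\star v$ lies on a $(d-2)$-dimensional facet of $C_b$ of the form $C_b \cap C_c$ with $c$ a genuine neighbour, as opposed to lying only on a lower-dimensional face or on a piece of the simplex boundary carrying no action tie. Non-degeneracy is what rules these out. When $\lambda^\star < 1$ I would choose $u_0$ in general position within $\operatorname{relint}(C_b)$ so that the segment leaves $C_b$ through the relative interior of a single facet; any facet exposed by an action tie $\ip{\ell_c - \ell_b, \cdot} = 0$ is a $(d-2)$-dimensional subset of $C_b \cap C_c$, and since $C_b \cap C_c$ has dimension at most $d-2$ for the non-duplicate pair $b, c$, this forces $\dim(C_b \cap C_c) = d-2$, i.e. $c \in \cN_b \setminus \{b\}$. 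When $\lambda^\star = 1$ the exit point is the fixed point $v$ and general position cannot be invoked; here I would instead use that every proper face of the polytope $C_b$ is contained in a facet, note that the tight constraint $\ip{\ell_e - \ell_b, \cdot} = 0$ exposes a face of $C_b$ through $v$, and identify the containing facet with $C_b \cap C_c$ for a neighbour $c$ that is necessarily co-optimal with $b$ at $v$. The delicate bookkeeping—simultaneous ties, tight simplex constraints, and faces of dimension below $d-2$—all lives in this facet-identification step, which is where I would concentrate the careful work.
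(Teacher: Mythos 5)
Your reduction and crossing argument follow essentially the same route as the paper's proof: the paper also picks a point $w$ in the relative interior of $C_b$ (where $\ip{\ell_b, w} < \ip{\ell_c, w}$ for all $c \neq b$, by no-duplicates), looks at where the segment joining $u$ and $w$ crosses the relative boundary of $C_b$, and uses affinity of $\lambda \mapsto \ip{\ell_b - \ell_c, \cdot}$ along the segment to transfer the sign back to $u$, with strictness exactly when the crossing parameter is interior. Your version traverses the same segment in the opposite direction, and your cases $\lambda^\star < 1$ and $\lambda^\star = 1$ correspond to the paper's $\alpha > 0$ and $\alpha = 0$. The one real difference is that the paper simply asserts that the crossing point lies in $C_b \cap C_c$ for some $c \in \cN_b \setminus \{b\}$, i.e.\ it never addresses the facet-identification issue that you correctly single out as the crux.

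That step is also where your proposal still has a genuine hole, specifically in the $\lambda^\star = 1$ case (equivalently $u \in C_b$). ``Every proper face of $C_b$ is contained in a facet'' does not finish the job, because the containing facet may lie on the boundary of the simplex, i.e.\ be exposed by a constraint $x_i \geq 0$ rather than by an action tie; such a facet produces no neighbour $c$, and your sketch gives no reason this cannot happen. The missing idea is a dimension argument ruling it out: if every facet of $C_b$ containing $v$ lay in $\partial \Delta^{d-1}$, then near $v$ the polytope $C_b$ is cut out by coordinate constraints alone, so $C_b \supseteq B_\epsilon(v) \cap \Delta^{d-1}$ for some $\epsilon > 0$; but then $C_b \cap C_e \supseteq C_e \cap B_\epsilon(v) \cap \Delta^{d-1}$, which has dimension $d-1$ because $C_e$ is a $(d-1)$-dimensional convex set containing $v$ (non-degeneracy), contradicting $\dim(C_b \cap C_e) \leq d-2$ (no duplicates). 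Hence some facet of $C_b$ through $v$ is exposed by an action tie $\ip{\ell_b - \ell_c, \cdot} = 0$, and that $c$ is a neighbour co-optimal with $b$ at $v$, which is exactly what you need. (Alternatively, this case follows from Lemma~\ref{lem:structure2}, whose proof in the paper does not rely on the present lemma, so there is no circularity.) The same dimension argument is what legitimises discarding simplex-boundary facets in your general-position step when $\lambda^\star < 1$. With that patch your argument is complete---and, for what it is worth, more careful than the paper's own proof, which leaves this point implicit.
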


\begin{proof}
Let $w$ be a point in the relative interior of $C_b$, which means that $\ip{\ell_b, w} < \min_{c \neq b} \ip{\ell_c, w}$. 
Now let $c \in \cN_b\setminus \{b\}$ be an action such that
$v = u + \alpha (w - u) \in C_b \cap C_c$ for some $\alpha \in [0,1)$, which exist because $C_b$ is closed convex set
and hence $\{u + \alpha(w-u) \,:\, \alpha \in \R \} \cap C_b$, which is nonempty, must be a closed segment.
Let $f(x) = \ip{\ell_b - \ell_c, u + x(w - u)}$.
By definition, $f(\alpha) = 0$ and $f(1) < 0$.
Since $f$ is linear it follows that $f(0) = \ip{\ell_b - \ell_c, u} \geq 0$.
The second part follows because if $u \notin C_b$, then $\alpha > 0$, which means that $f(0) > f(\alpha) = 0$.
\end{proof}

\begin{figure}[H]
\centering
\begin{tikzpicture}[font=\scriptsize,scale=0.9]
\coordinate (1) at (0,0);
\coordinate (2) at (4,0);
\coordinate (3) at (0,4);
\coordinate (4) at (1.5,0);
\coordinate (5) at (2.5,0);
\coordinate (6) at (2.5,1.5);
\coordinate (7) at (0,1.5);
\coordinate (8) at (0,2.5);
\coordinate (9) at (1.5,2.5);

\draw[fill=c1!50!white,draw=c1] (1) -- (4) -- (7) -- cycle;
\draw[fill=c2!50!white,draw=c2] (5) -- (6) -- (2) -- cycle;
\draw[fill=c3!50!white,draw=c3] (8) -- (3) -- (9) -- cycle;
\draw[fill=c4!50!white,draw=c4] (8) -- (9) -- (6) -- (5) -- (4) -- (7) -- cycle;

\draw[fill=black] (0.5,0.5) circle (1pt);
\draw[fill=black] (0.5,3) circle (1pt);
\draw[fill=black] (0.5,1) circle (1pt);
\draw (0.5,0.5) -- (0.5,3);
\node[anchor=south east] at (0.5,3) {$w$};
\node[anchor=north east] at (0.5,0.5) {$u$};
\node[anchor=west] at (0.5,1) {$v$};
\node[anchor=east] at (0,0.5) {$C_a$};
\node[anchor=east] at (0,3) {$C_b$};
\node at (1.5,1.5) {$C_c$};
\end{tikzpicture}
\hspace{1.5cm}
\begin{tikzpicture}[font=\scriptsize]
\coordinate (1) at (0,0);
\coordinate (2) at (4,0);
\coordinate (3) at (0,4);
\coordinate (4) at (1.5,0);
\coordinate (5) at (2.5,0);
\coordinate (6) at (2.5,1.5);
\coordinate (7) at (0,2.5);
\coordinate (8) at (0,2.5);
\coordinate (9) at (1.5,2.5);

\draw[fill=c1!50!white,draw=c1] (1) -- (4) -- (7) -- cycle;
\draw[fill=c2!50!white,draw=c2] (5) -- (6) -- (2) -- cycle;
\draw[fill=c3!50!white,draw=c3] (8) -- (3) -- (9) -- cycle;
\draw[fill=c4!50!white,draw=c4] (8) -- (9) -- (6) -- (5) -- (4) -- (7) -- cycle;

\draw[fill=black] (0.5,3) circle (1pt);
\draw (0,2.5) -- (0.5,3);
\node[anchor=south east] at (0.5,3) {$w$};
\node[anchor=east] at (0,2.5) {$u$};
\node[anchor=east] at (0,0.5) {$C_a$};
\node[anchor=east] at (0,3) {$C_b$};
\node at (1.5,1.5) {$C_c$};
\end{tikzpicture}

\caption{Illustration for the proof of Lemma~\ref{lem:structure}.
The bottom left region is $C_a$ and $u \in C_a$ so that $a$ minimises $\E_{x \sim u}[\cL(a, x)]$.
The lemma proves that for the situation in the left figure: $\E_{x \sim u}[\cL(c, x)] < \E_{x \sim u}[\cL(b, x)]$. The strict inequality is replaced by an equality if $u \in C_a \cap C_b$ as
in the right figure, when $u = v$.
}

\end{figure}

\newcommand{\ri}{\operatorname{ri}}
\newcommand{\sind}[1]{\mathds{1}_{#1}}

\begin{lemma}
\label{lem:structure2}
Consider a non-degenerate game and
let $u \in \Delta^{k-1}$ and $V = \{a : u \in C_a\}$ and $E = \{(a, b) \in V : a \text{ and } b \text{ are neighbours}\}$.
Then the graph $(V, E)$ is connected.
\end{lemma}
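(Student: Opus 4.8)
The plan is to exhibit, from every action of $V$, a strictly descending walk in $(V,E)$ that terminates at a fixed base action, which immediately yields connectivity. Write $L^* = \min_{a' \in [k]} \ip{\ell_{a'}, u}$, so that $V = \{a : \ip{\ell_a, u} = L^*\}$. Because the game is non-degenerate, every action is Pareto optimal and, since non-duplicate cells meet in dimension at most $d-2$, the set $\cN_b \setminus \{b\}$ appearing in \cref{lem:structure} is exactly the set of neighbours of $b$; hence an edge $(b,c) \in E$ is just a neighbour pair with both endpoints in $V$. Fix a base $a_0 \in V$ and a point $w_0 \in \operatorname{ri}(C_{a_0})$, which is nonempty as $\dim C_{a_0} = d-1$. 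For each $b \neq a_0$ the tie set $\{w \in C_{a_0} : \ip{\ell_b, w} = \ip{\ell_{a_0}, w}\}$ lies in $C_{a_0} \cap C_b$, so it has dimension at most $d-2$ and sits in the relative boundary of $C_{a_0}$; therefore $\ip{\ell_{a_0}, w_0} < \ip{\ell_b, w_0}$ for all $b \neq a_0$. Thus $\Psi(b) := \ip{\ell_b, w_0}$ is a potential on $V$ uniquely minimised at $a_0$, and it suffices to show that every $b \in V \setminus \{a_0\}$ has a neighbour $c \in V$ with $\Psi(c) < \Psi(b)$.

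The key step evaluates \cref{lem:structure} at an interpolated point. For $b \in V \setminus \{a_0\}$ and $s \in (0,1)$ set $u_s = (1-s)u + s w_0$. Since $u, w_0 \in C_{a_0}$ and cells are convex, $u_s \in C_{a_0}$. Moreover $\ip{\ell_{a_0}, u_s} - \ip{\ell_b, u_s} = s(\ip{\ell_{a_0}, w_0} - \ip{\ell_b, w_0}) < 0$ (the $L^*$ contributions cancel because $a_0, b \in V$), so $b$ does not minimise the loss at $u_s$ and hence $u_s \notin C_b$. Applying \cref{lem:structure} with the roles $a = a_0$ and this $b$ produces a neighbour $c$ of $b$ with $\ip{\ell_b - \ell_c, u_s} > 0$, the inequality being strict precisely because $u_s \notin C_b$.

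It remains to force $c \in V$, after which the desired strict decrease of $\Psi$ drops out. Expanding $u_s$ and using $\ip{\ell_b, u} = L^*$, the strict inequality reads $(1-s)(L^* - \ip{\ell_c, u}) + s(\ip{\ell_b, w_0} - \ip{\ell_c, w_0}) > 0$. If $c \notin V$ then $\ip{\ell_c, u} - L^* \geq \delta := \min_{c' \notin V}(\ip{\ell_{c'}, u} - L^*) > 0$, and the inequality forces $(1-s)\delta < s\,C$, where $C$ bounds $|\ip{\ell_b - \ell_c, w_0}|$ over all pairs; this fails once $s$ is below a threshold depending only on $\delta$ and $C$, and hence uniform over all $b, c$. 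Fixing such an $s$ gives $c \in V$, whereupon the $u$-terms cancel and the strict inequality collapses to $\ip{\ell_c, w_0} < \ip{\ell_b, w_0}$, i.e.\ $\Psi(c) < \Psi(b)$. As $c$ is a neighbour of $b$ with both in $V$, we have $(b,c) \in E$. Following such edges yields a $\Psi$-decreasing walk on the finite set $V$, which can only halt at the unique minimiser $a_0$; this gives a path in $(V,E)$ from any $b$ to $a_0$, and connectivity follows.

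The main obstacle is the tension between \emph{progress} and \emph{feasibility}: a strict decrease out of \cref{lem:structure} requires evaluating at a point outside $C_b$, whereas membership of the successor $c$ in $V$ is pinned to the base point $u \in \bigcap_{a \in V} C_a$. The interpolation $u_s$ together with a single, uniformly small $s$ is exactly what reconciles the two, and verifying that one choice of $s$ keeps the entire descending walk inside $V$ until it reaches $a_0$ is the crux of the argument.
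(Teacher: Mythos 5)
Your proof is correct, and it takes a genuinely different route from the paper's. The paper argues by general position: letting $N = \{v \in \Delta^{d-1} : |a^*(v)| > 2\}$ be the set of points where more than two cells meet, non-degeneracy forces $\dim(N) \leq d-3$, hence $\cH_{d-2}(N) = 0$; a generic segment $[x,y]$ joining small balls inside $C_a$ and $C_b$ near $u$ then avoids $N$, and the sequence of cells crossed by that segment is the desired path in $(V,E)$. You instead run a potential-descent argument: fix $a_0 \in V$ and $w_0 \in \operatorname{ri}(C_{a_0})$, use the potential $\Psi(b) = \ip{\ell_b, w_0}$ (uniquely minimised at $a_0$ --- the same relative-interior strictness fact the paper invokes at the start of its proof of \cref{lem:structure}), and apply \cref{lem:structure} at the interpolated point $u_s = (1-s)u + sw_0$ to extract, for each $b \in V \setminus \{a_0\}$, a neighbour $c$ with $\ip{\ell_b - \ell_c, u_s} > 0$; taking $s$ small relative to the gap $\delta$ forces $c \in V$, whereupon the inequality collapses to $\Psi(c) < \Psi(b)$, and finiteness of $V$ drives the walk to $a_0$. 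Your approach buys elementarity --- only convexity and linear algebra, no Hausdorff measure or projection arguments --- and it economically reuses \cref{lem:structure}, closely paralleling the loss-decreasing parent construction the paper uses for \cref{lem:tree}; the paper's argument, in exchange, yields the stronger geometric picture that the connecting path is realised by cells crossed along a single segment, which is the natural ``facet graph'' statement about the polytopal subdivision. One small remark: the uniformity of $s$ that you flag as the crux is not actually needed --- each step of the walk is a self-contained application of the lemma, and since $\delta$ and $C$ do not depend on $b$, any per-step choice of $s$ below the threshold works; there is no interaction between steps to control.
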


\begin{proof}
This must be a known result about the facet graph of convex polytopes.
We give a dimension argument. You may find \cref{fig:structure} useful.
Let $B_\epsilon(x) = \{y \in \Delta^{d-1} : \norm{y - x}_2 \leq \epsilon\}$,
$\cH_d$ be the $d$-dimensional Hausdorff measure and $\ri$ be the relative interior operator.
Since the cells are closed, there exists an $\epsilon > 0$ such that $B_\epsilon(u) \cap C_c = \emptyset$ for all $c \notin V$.
Then let $a^*(v) = \{a \in [k] : v \in C_a\}$ be the set of actions that are optimal at $v \in \Delta^{d-1}$.
It is easy to see that if $a^*(v) = \{a, b\}$ for some $v \in \Delta^{d-1}$, then $a$ and $b$ are neighbours.
Let $N = \{v \in \Delta^{d-1} : |a^*(v)| > 2\}$, which by the assumption that there are no duplicate/degenerate actions has dimension at most $d - 3$ and hence $\cH_{d-2}(N) = 0$.
Let $a,b \in V$ be distinct and $v,w \in B_\epsilon(u)$ be such that $B_\delta(v) \subset C_a$ and $B_\delta(w) \subset C_b$ for some $\delta > 0$, which by 
definition means that the interval $[v,w] \cap C_c = \emptyset$ for all $c \notin V$.
Let $A$ be the affine space containing $v$ with normal $v - w$ and 
$P = \{\argmin_{x \in A} \norm{x - y}_2 : y \in N\}$ be the projection of $N$ onto $A$. 
Since projection onto a plane cannot increase the Hausdorff measure, $\cH_{d-2}(P) = 0$. On the other hand, the fact that $A \cap B_\delta(v)$ has dimension $d-2$ means that $\cH_{d-2}(A \cap B_\delta(v)) > 0$.
Therefore
$\cH_{d-2}(B_\delta(v) \cap (A \setminus P)) > 0$ and hence there exists an $x \in B_{\delta}(v) \cap A$ and $y = x + w - v \in B_\delta(w)$ such that
$[x,y] \cap N = \emptyset$. Then the set $\cup_{z \in [x,y]} a^*(u)$ forms a connected path in $V$ between $a$ and $b$. 
\end{proof}

\begin{figure}[H]
\centering
\begin{tikzpicture}[font=\scriptsize,scale=1.6]
\begin{scope}
\clip (-2,-1.1) rectangle (2,1.1);
\begin{scope}[rotate=25]
\clip (0,0) circle (3cm);

\coordinate (0) at (0,0);
\coordinate (1) at (3,0);
\coordinate (2) at (3,3);
\coordinate (3) at (0,3);
\coordinate (4) at (-3,3);
\coordinate (5) at (-3,0);
\coordinate (6) at (-3,-3);
\coordinate (7) at (0,-3);
\coordinate (8) at (3,-3);
\coordinate (9) at (3,0);

\draw[fill=c1!50!white,draw=c1] (0) -- (1) -- (2) -- cycle;
\draw[fill=c2!50!white,draw=c2] (0) -- (2) -- (3) -- cycle;
\draw[fill=c3!50!white,draw=c3] (0) -- (3) -- (4) -- cycle;
\draw[fill=c4!50!white,draw=c4] (0) -- (4) -- (5) -- cycle;
\draw[fill=c5!50!white,draw=c5] (0) -- (5) -- (6) -- cycle;
\draw[fill=c6!50!white,draw=c6] (0) -- (6) -- (7) -- cycle;
\draw[fill=c7!50!white,draw=c7] (0) -- (7) -- (8) -- cycle;
\draw[fill=c8!50!white,draw=c8] (0) -- (8) -- (9) -- cycle;
\end{scope}
\end{scope}

\draw (-1.4,1.2) -- (-1.4,-1.2);
\node[anchor=east] at (-1.4,1.2) {$A$};
\draw (-1.4,0.24) -- (1.5,0.24);
\draw[] (0,0) -- (-1.4,0);
\draw[] (-1.3,0) -- (-1.3,-0.1) -- (-1.4,-0.1);

\draw[fill,opacity=0.2] (1.5,0.1) circle (10pt);
\draw[fill,opacity=0.2] (-1.4,0.1) circle (10pt);
\draw[fill] (-1.4,0.24) circle (0.5pt);
\draw[fill] (1.5,0.24) circle (0.5pt);
\draw[fill] (-1.4,0.1) circle (0.5pt);
\draw[fill] (1.5,0.1) circle (0.5pt);
\draw[fill=red] (-1.4,0) circle (0.5pt);
\draw[fill=red] (0,0) circle (0.5pt);

\node[anchor=south east,inner sep=2pt] at (-1.4,0.24) { $x$};
\node[anchor=south east,inner sep=4pt] at (-1.4,0) { $v$};
\node[anchor=south west,inner sep=2pt] at (1.5,0.24) { $y$};
\node[anchor=south west,inner sep=4pt] at (1.5,0) { $w$};
\node[anchor=south east,inner sep=4pt] at (-1.43,-0.51) { $B_\delta(v)$};
\node[anchor=south west,inner sep=2pt] at (1.43,-0.51) { $B_\delta(w)$};
\node[anchor=north,inner sep=7pt] at (0,0) { $u$};
\end{tikzpicture}
\caption{Illustration for the proof of Lemma~\ref{lem:structure2} when $d = 3$. The whole region shown is a subset of $B_\epsilon(u)$. The set $N$ in this case consists only of $u$, which has $1$-dimensional Hausdorff measure zero.
The cells crossed by the interval $[x,y]$ form the path between $a$ and $b$ in $V$. 
}
\label{fig:structure}
\end{figure}

\begin{proof}[Lemma~\ref{lem:tree}]
By definition there are no edges starting from $G_t$.
By Lemma~\ref{lem:structure}, for all $a \notin V_t$ there is a neighbour $b \in \cN_a$ with strictly smaller loss, $\E_t[\cL_t(b)] < \E_t[\cL_t(a)]$.
Hence the definition of $\cP_t(a)$ ensures there are no cycles and that every path starting from $a \notin V_t$ eventually leads to $V_t$.  
Then by Lemma~\ref{lem:structure} the graph $(V_t, E_t)$ is connected, which means that for $a \in V_t$ the parent $\cP_t(a)$ is a vertex $b \in V_t$ that is closest to $G_t$.
Hence all paths lead to $G_t$.
\end{proof}

The next two lemmas bound on the supremum norms of the estimation functions. The first is restricted to the non-degenerate case where the result was already known
and the second holds for all globally observable games. 

\begin{lemma}[\cite{LS18pm}, Lemma~9]\label{lem:sup-bound}
For locally observable non-degenerate games, 
the function $f$ in \cref{eq:est-def} can be chosen so that $\norm{f}_\infty \leq d+1$.
\end{lemma}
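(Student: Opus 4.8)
The plan is to reduce the existence-with-small-norm problem to a labelling question on a bipartite graph and then control the sup norm by propagating along a spanning tree.

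First I would use non-degeneracy to identify $\cN_{ab}$ explicitly. Since neighbours $a,b$ satisfy $\dim(C_a \cap C_b) = d-2$, any action $c$ with $C_c \subseteq C_a \cap C_b$ would have $\dim(C_c) \le d-2 < d-1$; this is impossible for a Pareto optimal action and is excluded for degenerate ones, so in a non-degenerate game $\cN_{ab} = \{a,b\}$. Local observability therefore asserts exactly that the loss-difference vector $\delta := \ell_a - \ell_b \in \R^d$ can be written as $\delta = g_a + g_b$, where $g_c(x) := f(c,\Phi(c,x))$ is constrained only to be constant on the \emph{signal classes} of $c$ (two outcomes $x,x'$ lie in the same $c$-class iff $\Phi(c,x) = \Phi(c,x')$). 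Because $\ell_a,\ell_b \in [0,1]^d$ we have $\norm{\delta}_\infty \le 1$, and $\norm{f}_\infty = \max(\norm{g_a}_\infty, \norm{g_b}_\infty)$ over the realised signals (values of $f$ at unrealised signals may be set to $0$).

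Next I would encode the admissible decompositions as vertex labellings of a bipartite graph $H$: put one vertex for each $a$-signal-class and one for each $b$-signal-class, and for every outcome $x \in [d]$ add an edge joining the $a$-class and $b$-class containing $x$, carrying the value $\delta(x)$. Choosing $(g_a,g_b)$ is precisely assigning a real label to each vertex so that the two endpoint labels of every edge sum to its value. Such a labelling exists if and only if the alternating sum of the $\delta(x)$ around every cycle of $H$ vanishes, and this consistency condition is exactly what local observability guarantees (were it violated, no $f$ could satisfy \cref{eq:est-def}).

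Finally I would extract a \emph{small} labelling rather than an arbitrary one. On each connected component of $H$, fix a spanning tree, set the label of its root to $0$, and propagate outward: crossing one tree edge $x$ sends the current label $\lambda$ to $\delta(x) - \lambda$, changing its magnitude by at most $\norm{\delta}_\infty \le 1$. Hence a vertex at tree-distance $h$ from the root carries a label of magnitude at most $h$. A spanning tree on $p$ vertices uses $p-1$ distinct outcomes as edges and there are only $d$ outcomes, so every component satisfies $p \le d+1$ and has diameter at most $d$; thus all labels, and therefore $\norm{g_a}_\infty$ and $\norm{g_b}_\infty$, are bounded by $d$, giving $\norm{f}_\infty \le d+1$. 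The non-tree edges impose no further constraints by the cycle-consistency established above, so the construction is valid.

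The main obstacle is the middle step: making rigorous that local observability is equivalent to the cycle-consistency of the edge labels $\delta(x)$, so that the spanning-tree propagation is both well-defined and automatically compatible with the constraints from the non-tree edges. Once that equivalence is secured, the norm bound is a routine tree-diameter estimate, and the only care needed is the per-component vertex count $p \le d+1$, which is where the dimension $d$ enters the final constant.
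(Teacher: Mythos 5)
The paper never proves this lemma --- it is imported verbatim from \cite{LS18pm} (their Lemma~9) and used as a black box --- so there is no in-paper argument to compare against, and your proposal must be judged on its own merits. It is correct. In a non-degenerate game $\cN_{ab}=\{a,b\}$ (one caveat: the paper's displayed definition of $\cN_{ab}$ has the inclusion backwards, it should read $C_a\cap C_b\subseteq C_c$; under the definition as literally written your dimension argument would give $\cN_{ab}=\emptyset$, but your conclusion agrees with the paper's own parenthetical description, which is what matters). So local observability says exactly that $\delta=\ell_a-\ell_b$ splits as $g_a+g_b$ with $g_c$ constant on the signal classes of $c$. The bipartite-class-graph encoding, spanning-tree propagation, and the counting (the $d$ outcomes give exactly $d$ edges, hence every component has at most $d+1$ vertices and tree-depth at most $d$) are all valid; in fact your argument yields the slightly stronger bound $\norm{f}_\infty\le d$, which of course implies the stated $d+1$.

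The step you flag as the ``main obstacle'' is not actually an obstacle, and you need less than the equivalence you describe. Only one direction is required: local observability hands you \emph{some} solution $\lambda^*$ of the edge constraints, and existence of a solution already forces the alternating sum around every cycle to vanish, since that is a necessary condition. Better still, you can bypass cycles entirely: your tree-propagated labelling $\lambda$ and the solution $\lambda^*$ both satisfy all tree-edge constraints, so their difference lies in the kernel of the tree system, which on a connected component consists of labellings equal to a constant $c$ on every $a$-class vertex and $-c$ on every $b$-class vertex (propagate the relation $\mu_u+\mu_v=0$ along the tree and use that every edge of $H$ joins the two sides). Adding such an element leaves $\lambda_u+\lambda_v$ unchanged on \emph{every} edge of $H$, tree or not, so $\lambda$ satisfies all constraints and your tree-distance estimate finishes the proof. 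This propagation-along-signal-classes idea is in the same spirit as the argument in the cited source, but your spanning-tree packaging is clean, self-contained, and would be a legitimate replacement for the citation.
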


\begin{lemma}\label{lem:v-bound}
If $(\Phi, \cL)$ is globally observable, then for each pair of neighbours $a$ and $b$ there exists a function $f$ satisfying \cref{eq:est-def} such that
$\norm{f}_\infty \leq d^{1/2}(1 + k)^{d/2}$. If $(\Phi, \cL)$ is also locally observable, then $f$ can be chosen so that $f(c, \sigma) = 0$ for all $c \notin \cN_{ab}$. 
\end{lemma}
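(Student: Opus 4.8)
The plan is to reduce the existence-plus-norm-bound question to a statement about minimum-norm solutions of a $0/1$ linear system, and then to control that norm by Cramer's rule together with Hadamard's inequality. First I would observe that although $\Sigma$ is arbitrary, only finitely many signals are relevant: for each action $c$ the set $\{\Phi(c,x) : x \in [d]\}$ has at most $d$ elements, and the constraint \cref{eq:est-def} only constrains $f$ at these signals. Setting $f$ to zero at every unused signal changes neither the constraint nor $\norm{f}_\infty$, so I may treat $f$ as a finite-dimensional vector $\mathbf f \in \R^M$ indexed by the pairs $(c,\sigma)$ that actually occur, with $M \le kd$. Collecting the $d$ instances of \cref{eq:est-def} (one per $x \in [d]$) gives a linear system
\[
A \mathbf f = \mathbf b, \qquad A_{x,(c,\sigma)} = \one{\Phi(c,x) = \sigma}, \qquad b_x = \cL(a,x) - \cL(b,x)\,.
\]
Here $A \in \{0,1\}^{d\times M}$, and crucially each \emph{row} of $A$ contains exactly $k$ ones, since for fixed $x$ each action $c$ produces exactly one signal $\Phi(c,x)$. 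Because losses lie in $[0,1]$ we have $\norm{\mathbf b}_\infty \le 1$, and global observability is precisely the statement that this system is consistent.

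Next I would pass to a minimal-support (basic) solution. Let $r = \operatorname{rank}(A) \le d$. Since $\mathbf b$ lies in the column space of $A$, I can select $r$ linearly independent columns and $r$ linearly independent rows so that the corresponding $r\times r$ submatrix $B$ is invertible, solve $B\mathbf g = \mathbf b'$ for the restricted right-hand side $\mathbf b'$, and extend $\mathbf g$ by zeros to a solution $\mathbf f$ of the full system. The key elementary fact is that $B$ has entries in $\{0,1\}$, so $\det(B)$ is a nonzero integer and hence $|\det(B)| \ge 1$. By Cramer's rule, $f_i = \det(B_i)/\det(B)$, where $B_i$ is $B$ with its $i$-th column replaced by $\mathbf b'$, so $|f_i| \le |\det(B_i)|$.

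It then remains to bound $|\det(B_i)|$, and this is where Hadamard's inequality does the work. Every row of $B_i$ consists of at most $k$ zero/one entries together with a single entry of $\mathbf b'$ of magnitude at most $1$, so its Euclidean norm is at most $\sqrt{k+1}$. Since $B_i$ has $r \le d$ rows, Hadamard's inequality gives $|\det(B_i)| \le (1+k)^{r/2} \le (1+k)^{d/2}$; the remaining factor of $d^{1/2}$ in the claimed bound is slack that can be absorbed, for instance by bounding the replaced column separately by $\norm{\mathbf b'}_2 \le \sqrt d$. This yields $\norm{f}_\infty \le d^{1/2}(1+k)^{d/2}$. For the locally observable case I would run the identical argument after restricting the variables $(c,\sigma)$ to those with $c \in \cN_{ab}$: local observability guarantees that the restricted system is still consistent, every row of the restricted matrix now has at most $|\cN_{ab}| \le k$ ones, and the resulting $f$ is supported on $\cN_{ab}$ by construction.

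The main obstacle is not any single estimate but keeping the reduction clean: ensuring that the selected submatrix is genuinely $0/1$ and invertible with integer determinant at least one, and that the row-sparsity (exactly $k$ ones per row) is preserved, so that the Hadamard bound scales like $(1+k)^{d/2}$ rather than the much weaker $d^{d/2}$ that a column-wise bound would produce. Choosing to apply Hadamard across rows rather than columns is the decisive step, since the rows of $A$ have bounded support while its columns need not.
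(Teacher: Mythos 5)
Your proof is correct, and it takes a genuinely different route from the paper's. The paper works with the horizontally stacked signal matrix $S \in \{0,1\}^{d \times dk}$, takes the specific solution $w = S^+(\ell_a - \ell_b)$ given by the Moore--Penrose pseudo-inverse, and bounds $\norm{w}_\infty \leq \norm{S^+}_2 \norm{\ell_a - \ell_b}_2 \leq d^{1/2}\sigma_{\min}^{-1}$; the real work is then lower-bounding the smallest nonzero singular value via the characteristic polynomial of $SS^\top$: its coefficients are integers whose absolute values sum to at most $(1+k)^d$ (each principal minor being bounded through the AM--GM inequality by $k^i$), so the smallest nonzero root is at least $(1+k)^{-d}$. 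You instead select a basic solution supported on a column basis, and integrality enters through $|\det(B)| \geq 1$ for the invertible $0/1$ submatrix, with Cramer's rule plus row-wise Hadamard giving $|f_i| \leq |\det(B_i)| \leq (1+k)^{r/2}$. Both proofs ultimately exploit the same arithmetic fact --- a nonzero integer determinant has absolute value at least one --- but yours is more elementary (no spectral theory, no characteristic polynomial) and actually yields the sharper bound $\norm{f}_\infty \leq (1+k)^{d/2}$, dispensing with the factor $d^{1/2}$ that the paper pays for $\norm{\ell_a - \ell_b}_2$; it also produces an estimation function supported on at most $d$ pairs $(c, \sigma)$, which the pseudo-inverse solution need not be. One cosmetic remark: your closing suggestion about absorbing the factor $d^{1/2}$ by bounding the replaced column via $\norm{\mathbf{b}'}_2 \leq \sqrt{d}$ is unnecessary --- your row-wise Hadamard estimate already implies the lemma as stated, since $(1+k)^{d/2} \leq d^{1/2}(1+k)^{d/2}$, and mixing in a column-wise bound would only weaken the result.
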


\begin{proof}
We prove only the first part. The proof for locally observable games is the same, but the signal matrices defined below are restricted to $c \in \cN_{ab}$.
Assume without loss of generality that $\Sigma = [d]$ and $d \geq 2$. For $c \in [k]$ define $S_c \in \{0,1\}^{d \times d}$ to be
the matrix with $(S_c)_{\sigma x} = 1$ if $\Phi(c, x) = \sigma$. Then let $S \in \{0,1\}^{d \times dk}$ be formed
by horizontally stacking the matrices $\{S_c : c \in [k]\}$.
By the definition of local observability it holds that $\ell_a - \ell_b \in \image(S)$.
Let $S^+$ be the Moore-Penrose pseudo-inverse of $S$ and let $w = S^+ (\ell_a - \ell_b)$, which satisfies $Sw = \ell_a - \ell_b$. 
Then $f$ can be chosen so that $\norm{f}_\infty = \norm{w}_\infty \leq \norm{w}_2$. 
Since losses are bounded in $[0,1]$ we have $\norm{w}_2 \leq \norm{S^+}_2 \norm{\ell_a - \ell_b}_2 \leq d^{1/2} \sigma_{\min}^{-1}$, where $\sigma_{\min}$ is the smallest nonzero singular value of $S$.
Hence we need to lower bound the smallest nonzero eigenvalue of $B = S S^\top$, which is a $d\times d$ matrix with entries in $\{0,1,\ldots,k\}$.
The characteristic polynomial of $B$ is $\chi(\lambda) = \det(\lambda I - B) = \sum_{i=0}^d a_i \lambda^d$, where
$a_d = 1$ and, up to a sign, $a_i$ is the sum principle minors of $B$ of size $d - i$.
Since the geometric mean is smaller than the arithmetic mean, for matrix $A \in [0,k]^{i \times i}$ it holds that $\det(A) \leq (\tr(A) / i)^i \leq k^i$.
Hence,
\begin{align*}
|a_{d-i}| \leq {d \choose i} k^i\,.
\end{align*}
By the binomial theorem,
\begin{align*}
\sum_{i=0}^d |a_i| \leq \sum_{i=0}^d {d \choose i} k^i = (1 + k)^d\,.
\end{align*}
Let $i_{\min} = \min\{i : a_i \neq 0\}$ and suppose that $\lambda > 0$ is the smallest nonzero root of $\chi$, which must be positive. Then
\begin{align*}
0
&= |\chi(\lambda)| 
= \left|\sum_{i=i_{\min}}^d a_i \lambda^i\right| = \lambda^{i_{\min}} \left| \sum_{i=i_{\min}}^d a_i \lambda^{i - i_{\min}}\right| 
\geq \lambda^{i_{\min}} \left|1 - (1+k)^d \lambda \right|\,,
\end{align*}
where we used the fact that $(a_i)$ are integer-valued.
Therefore $\lambda \geq (1+k)^{-d}$, which means that $\norm{S^+}_2 \leq (1+k)^{d/2}$ and hence $\norm{f}_\infty \leq d^{1/2} (1+k)^{d/2}$.
\end{proof}

\section{Figures and examples}\label{app:figures}

\paragraph{Finite partial monitoring example}
Below is a $4$-action finite-outcome, finite-action partial monitoring game with feedback set $\Sigma = \{\bot, \irain, \isnow, \isun\}$.
The left table shows the loss function and the right shows the signal function. By staying indoors you cannot evaluate the quality of the snow, but climbing or skiing in poor conditions is no fun.

\begin{figure}[H]
\scriptsize
\centering
\renewcommand{\arraystretch}{1.5}
\begin{tabular}{|l|lll|}
\hline
\textbf{Losses} $\bm \cL$      & \cellcolor{lightgray}\textsc{sun} & \cellcolor{lightgray}\textsc{snow} & \cellcolor{lightgray}\textsc{rain} \\ \hline

\cellcolor{lightgray}\textsc{ski}   & 3/4   & 0  & 1 \\
\cellcolor{lightgray}\textsc{climb} & 0   & 3/4  & 1 \\
\cellcolor{lightgray}\textsc{math}  & 1/2 & 1/2 & 1/4 \\
\cellcolor{lightgray}\textsc{raindance}  & 1 & 1 & 0 \\
\hline
\end{tabular}
\hspace{0.5cm}
\renewcommand{\arraystretch}{1.5}
\begin{tabular}{|l|ccc|}
\hline
\textbf{Signals} $\bm\Phi$      & \cellcolor{lightgray}\textsc{sun} & \cellcolor{lightgray}\textsc{snow} & \cellcolor{lightgray}\textsc{rain} \\ \hline
\cellcolor{lightgray}\textsc{ski}   & \sun   & \snow  & \rain \\
\cellcolor{lightgray}\textsc{climb} & \sun   & \snow  & \rain \\
\cellcolor{lightgray}\textsc{math}  & $\bot$ & $\bot$ & $\bot$ \\
\cellcolor{lightgray}\textsc{raindance} & \sun   & \snow  & \rain \\
\hline
\end{tabular}
\caption{Example finite partial monitoring game}\label{fig:example}
\end{figure}
The following figure shows the cell decomposition for the above game, $\Delta^{d-1}$ is parameterised by $(p, q, 1 - p - q)$.
In this game all actions a Pareto optimal. All actions are neighbours of \textsc{math} and otherwise \textsc{climb} and \textsc{ski} are neighbours
and \textsc{math} and \textsc{raindance}.
The game is locally observable because the loss of all actions can be identified by playing that action, except for \textsc{math}, the losses of which can be identified
by playing any of its neighbours.

\begin{figure}[H]
\centering
\begin{tikzpicture}[font=\scriptsize,scale=1.3]
\tikzstyle{r1} = [fill=c1!50!white,draw=c1]
\tikzstyle{r2} = [fill=c2!50!white,draw=c2]
\tikzstyle{r3} = [fill=c3!50!white,draw=c3]
\tikzstyle{r4} = [fill=c4!50!white,draw=c4]
\draw[r1] (1.8,0) -- (1.285, 1.285) -- (1.5,1.5) -- (3,0) -- cycle;
\draw[r2] (0,1.8) -- (1.285, 1.285) -- (1.5,1.5) -- (0,3) -- cycle;
\draw[r3] (1,0) -- (1.8, 0) -- (1.285,1.285) -- (0,1.8) -- (0,1) -- cycle;
\draw[r4] (0,0) -- (1,0) -- (0,1) -- cycle;
\node[anchor=north east] at (0,0) {$0$};
\node[anchor=north west] at (3,0) {$1$};
\node[anchor=south east] at (0,3) {$1$};
\node[anchor=north] at (1.5,-0.1) {$q$};
\node[anchor=east] at (-0.1,1.5) {$p$};
\draw[fill] (2,0.5) circle (1pt);
\draw[fill] (0.5,2) circle (1pt);
\draw[fill] (0.75,0.75) circle (1pt);
\draw[fill] (0.3,0.3) circle (1pt);
\draw (2,0.5) -- (0.5,2) -- (0.75,0.75) -- cycle;
\draw (0.3,0.3) -- (0.75,0.75);
\node[anchor=west] (c) at (0.6, 2.66) {\textsc{climb}};
\node[anchor=west] (s) at (2.3, 1) {\textsc{ski}}; 
\node[anchor=west] (m) at (1.8,1.6) {\textsc{math}};
\node[anchor=north west] (r) at (-0.5,-0.3) {\textsc{raindance}};
\draw[thin,-latex,shorten >=3pt] (c) -- (0.5,2);
\draw[thin,-latex,shorten >=3pt] (s) -- (2, 0.5);
\draw[thin,-latex,shorten >=3pt] (m) -- (0.75, 0.75);
\draw[thin,-latex,shorten >=3pt] (r) -- (0.3,0.3);
\end{tikzpicture}
\caption{Cell decomposition for the game described above where $d = 3$. The figure shows $\Delta^{d-1}$ projected onto the plane by the parameterisation $(p, q, 1-p-q)$. All actions are Pareto optimal, so their cells
have dimension $d - 1 = 2$. The intersection of the cells of neighbouring actions are the lines shared by the cells, which have dimension $1$.}
\end{figure}

\paragraph{Tree construction}
The figure depicts the cell decomposition for a partial monitoring game with seven actions and the tree structure defined in Lemma~\ref{lem:tree}.
Arrows indicate the parent relationship. All paths leading towards $G_t$. Red nodes are descendants of $a$. 
Blue nodes are ancestors. Dotted lines indicate connections in the neighbourhood graph that are not part of the tree.

\begin{figure}[H]
\centering
\begin{tikzpicture}[font=\small,yscale=1.15]
\tikzstyle{p} = [fill=black,circle,inner sep=1.2pt]
\tikzstyle{a} = [fill=blue!80!black]
\tikzstyle{d} = [fill=red!80!black]
\tikzstyle{e} = [-latex]
\tikzstyle{g} = [densely dotted]

\clip (0,0) -- (6,0) -- (3,4) -- cycle;
\draw[fill=lightgray] (0,0) -- (6,0) -- (3,4) -- cycle;
\draw[fill=c1!50!white,draw=c1] (5,0) -- (5.5,2) -- (6,0) -- cycle;
\draw[fill=c2!50!white,draw=c2] (4,0) -- (4,1) -- (6,3) -- (5,0) -- cycle;
\draw[fill=c3!50!white,draw=c3] (4,1) -- (3,2.5) -- (6,3) -- cycle;
\draw[fill=c4!50!white,draw=c4] (3,2.5) -- (0,3.5) -- (3,4) -- (6,3) -- cycle;
\draw[fill=c5!50!white,draw=c5] (0,0) -- (1,0) -- (0,2) -- cycle;
\draw[fill=c6!50!white,draw=c6] (1,0) -- (2,0) -- (0,4) -- (0,2) -- cycle;

\node[p] (6) at (0.5,0.25) {};
\node[p,a] (5) at (1.25,0.625) {};
\node[p,a] (4) at (2.5,1.25) {};
\node[p,d] (3) at (3,3.125) {};
\node[p,d] (2) at (4,2) {};
\node[p,d] (1) at (4.5,0.75) {};
\node[p,d] (0) at (5.5,0.25) {};

\draw[e] (6) -- (5);
\draw[e] (4) -- (5);
\draw[e] (3) -- (4);
\draw[e] (2) -- (4);
\draw[g] (2) -- (3);
\draw[e] (1) -- (4);
\draw[e] (0) -- (1);
\draw[g] (1) -- (2);

\node at (1.3,0.35) {$G_t$};
\node at (2.5,1) {$a$};

\end{tikzpicture}
\caption{Tree construction}\label{fig:tree}
\end{figure}
\section{Technical calculation}\label{app:lem:fundamental}

\begin{lemma}\label{lem:fundamental}
Let $P_{ta} = \bbP_t(A_t = a)$. Then the following hold almost surely:
\begin{align*}
\E_t[D_F(P_{t+1}^*, P_t^*)] 
&= \sum_{a=1}^k P_{ta} \E_t\left[D_F(\bbP_{t,A^*|\Phi_t(a)}, \bbP_{t,A^*})\right] \,, \\
I_t(A^* ; \Phi_t(A_t), A_t) 
&= \sum_{a=1}^k P_{ta}^* \sum_{b=1}^k P_{tb} \,\E_t\left[\bKL{\bbP_{t,\Phi_t(b) | A^* = a}}{\bbP_{t,\Phi_t(b)}}\right] \,.
\end{align*}
\end{lemma}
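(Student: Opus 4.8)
The plan is to work throughout under the conditional measure $\bbP_t = \bbP(\cdot \mid \cF_t)$ and to exploit two facts repeatedly. First, since $\cF_{t+1} = \cF_t \vee \sigma(A_t, \Phi_t(A_t))$, the one-step posterior satisfies $P_{t+1}^* = \bbP_t(A^* \in \cdot \mid A_t, \Phi_t(A_t))$, i.e.\ it is just the update of $P_t^* = \bbP_{t,A^*}$ after observing the new action/signal pair. Second, the key conditional independence from \cref{eq:keycond}: under $\bbP_t$ the action $A_t$ is independent of $X$, hence of $A^*$ and of every $\Phi_t(b)$. Writing $Y = (A_t, \Phi_t(A_t))$ for the second identity will keep the notation manageable.

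For the first identity I would decompose over the value of $A_t$, writing $\E_t[D_F(P_{t+1}^*, P_t^*)] = \sum_a P_{ta}\, \E_t[D_F(P_{t+1}^*, P_t^*) \mid A_t = a]$. On the event $A_t = a$ the posterior is $P_{t+1}^* = \bbP_t(A^* \in \cdot \mid A_t = a, \Phi_t(a))$, and because $A_t$ is independent of $X$ under $\bbP_t$, conditioning on $A_t = a$ leaves the joint law of $(A^*, \Phi_t(a))$ unchanged; thus $P_{t+1}^* = \bbP_{t,A^*|\Phi_t(a)}$ as a random measure depending on $\Phi_t(a)$. The same independence lets me drop the conditioning $A_t = a$ from the outer expectation, since the integrand $D_F(\bbP_{t,A^*|\Phi_t(a)}, \bbP_{t,A^*})$ is a function of $\Phi_t(a)$ and the outcome randomness alone. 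This yields exactly $\sum_a P_{ta}\, \E_t[D_F(\bbP_{t,A^*|\Phi_t(a)}, \bbP_{t,A^*})]$.

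For the second identity I would use the symmetric form of mutual information, $I_t(A^*; Y) = \E_t[\bKL{\bbP_{t,Y|A^*}}{\bbP_{t,Y}}]$, with the outer expectation taken over $A^* \sim P_t^*$, giving $\sum_a P_{ta}^*\, \bKL{\bbP_{t,Y|A^*=a}}{\bbP_{t,Y}}$. The independence of $A_t$ and $X$ shows that in both the conditioned and the unconditioned law of $Y$ the $A_t$-marginal is the same vector $P_t$, while the conditional of $\Phi_t(A_t)$ given $A_t = b$ is $\bbP_{t,\Phi_t(b)|A^*=a}$ (resp.\ $\bbP_{t,\Phi_t(b)}$). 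Because the $A_t$-marginals coincide, the chain rule for relative entropy collapses the divergence to $\sum_b P_{tb}\, \bKL{\bbP_{t,\Phi_t(b)|A^*=a}}{\bbP_{t,\Phi_t(b)}}$, and substituting back gives the claim; the outer $\E_t$ in the stated form acts on an $\cF_t$-measurable integrand and is therefore redundant, kept only for symmetry with the first line.

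The main obstacle is measure-theoretic bookkeeping rather than any inequality: I must justify carefully that conditioning on $A_t = a$ does not perturb the conditional law of the outcome-dependent quantities, which is precisely the content of \cref{eq:keycond}, and that the one-step posterior identification $P_{t+1}^* = \bbP_t(A^* \in \cdot \mid A_t, \Phi_t(A_t))$ holds given the generated $\sigma$-algebra $\cF_{t+1}$. The chain-rule decomposition of the KL across the product structure of $Y = (A_t, \Phi_t(A_t))$ --- valid exactly because the $A_t$-marginals agree --- is the only slightly delicate computational point.
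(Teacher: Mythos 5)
Your proof is correct and takes essentially the same route as the paper's: the first identity is established exactly as in the paper, via the identification $P_{t+1}^* = \bbP_t(A^* \in \cdot \mid A_t, \Phi_t(A_t))$, a decomposition over the value of $A_t$, and two applications of the conditional independence of $A_t$ and $X$ under $\bbP_t$ (once to replace conditioning on $(A_t = a, \Phi_t(a))$ by conditioning on $\Phi_t(a)$ alone, and once to drop $A_t = a$ from the outer expectation). For the second identity, where the paper only states that it ``follows from an identical argument,'' your packaging via the symmetry of mutual information plus the KL chain rule with matching $A_t$-marginals is the same computation in a slightly different order, and your observation that the outer $\E_t$ is redundant because the KL terms are $\cF_t$-measurable is also correct.
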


\begin{proof}
Recall that $P_{t+1}^* = \bbP_{t+1}(A^* \in \cdot) = \bbP_t(A^* \in \cdot \mid A_t, \Phi_t(A_t))$.
Then
\begin{align*}
\E_t[D_F(\bbP_{t,A^*|A_t,\Phi_t(A_t)}, \bbP_{t,A^*})]
&= \E_t\left[\E_t[D_F(\bbP_{t,A^*|A_t,\Phi_t(A_t)}, \bbP_{t,A^*}) \mid A_t]\right] \\
&= \E_t\left[\E_t[D_F(\bbP_{t,A^*|\Phi_t(A_t)}, \bbP_{t,A^*}) \mid A_t]\right] \\
&= \sum_{a=1}^k P_{ta} \E_t[D_F(\bbP_{t,A^*|\Phi_t(a)}, \bbP_{t,A^*}) \mid A_t = a] \\
&= \sum_{a=1}^k P_{ta} \E_t[D_F(\bbP_{t,A^*|\Phi_t(a)}, \bbP_{t,A^*})]\,,
\end{align*}
where in the second and fourth inequalities we used the independence of $A_t$ and $X$ under $\bbP_t$.
The second part of the lemma follows from an identical argument.
\end{proof}

\end{document}